\documentclass{article}

\PassOptionsToPackage{numbers}{natbib}
\usepackage{amsmath}
\usepackage{amsthm}
\usepackage{amssymb}
\usepackage{amsfonts}
\newtheorem{lemma}{Lemma}
\newtheorem{corollary}{Corollary}
\newtheorem{proposition}{Proposition}

\usepackage{graphicx}
\usepackage{diagbox}
\usepackage{wrapfig}
\usepackage{hyperref}

\usepackage{subfigure}
\DeclareMathAlphabet\mathbfcal{OMS}{cmsy}{b}{n}
     \usepackage[preprint]{neurips_2021}



\usepackage[utf8]{inputenc} 
\usepackage[T1]{fontenc}    
\usepackage{hyperref}       
\usepackage{url}            
\usepackage{booktabs}       
\usepackage{amsfonts}       
\usepackage{nicefrac}       
\usepackage{microtype}      
\usepackage{xcolor}         

\title{Interferometric Graph Transform for Community Labeling}

%

\author{
Nathan Grinsztajn$^*$  \\
Inria, Univ. Lille, CNRS\\
Lille, France\\
\texttt{nathan.grinsztajn@inria.fr} \\
\And
Louis Leconte\thanks{Equal contributions.}\\
LIP6, Sorbonne University\\
CMAP, Ecole Polytechnique, France\\
\texttt{louis.leconte@ens-paris-saclay.fr}\\
\And
Philippe Preux \\
Inria, Univ. Lille, CNRS\\
Lille, France\\
\And
Edouard Oyallon \\
CNRS, LIP6, Sorbonne University\\
Paris, France}

\begin{document}

\maketitle
\begin{abstract}
We present a new approach for learning unsupervised node representations in community graphs. We significantly extend the Interferometric Graph Transform (IGT) to community labeling: this non-linear operator iteratively extracts features that take advantage of the graph topology through demodulation operations. An unsupervised feature extraction step cascades modulus non-linearity with linear operators that aim at building relevant invariants for community labeling. Via a simplified model, we show that the IGT concentrates around the E-IGT: those two representations are related through some ergodicity properties.  Experiments on community labeling tasks show that this unsupervised representation achieves performances at the level of the state of the art on the standard and challenging datasets Cora, Citeseer, Pubmed and WikiCS.
\end{abstract}

\section{Introduction}\label{intro}
Graph Convolutional  Networks (GCNs) \cite{kipf2016semi} are now the state of the art for solving many supervised (using labeled nodes) and semi-supervised (using unlabeled nodes during training) graph tasks, such  as nodes or community labeling. They consist in a cascade of layers that progressively average node representations, while maintaining discriminative properties through supervision. In this work, we are mainly interested in the principles that allow such models to outperform other baselines: we propose a specific class of GCNs, which is unsupervised, interpretable, with several theoretical guarantees while obtaining good accuracies on standard datasets.

One of the reasons why GCNs lack interpretability is because no training objective is assigned to a specific layer except the final one: end-to-end training makes their analysis difficult \cite{oyallon2017building}. They also tend to oversmooth graph representations \cite{yang2016revisiting}, because applying successively an averaging operator leads to smoother representations. Also, the reason of their success is in general unclear \cite{li2018deeper}. In this work, we propose to introduce a novel architecture which, by design, will address those issues. Our model can be interpreted through the lens of Stochastic Block Models (SBMs) \cite{holland1983stochastic} which are standard, yet are not originally designed to analyze graph attributes through representation learning.

For example, several works \cite{keriven2020sparse, abbe2017community} prove that a Laplacian matrix concentrates around a low-rank expected Laplacian matrix, via simplified models like a SBM \cite{cohen2020power}. In the context of community detection, it is natural to assume that the intra-class, inter-class connectivity and feature distributions of a random graph are ruled by an SBM. To our knowledge, this work is the first to make a clear connection with those unsupervised models and the  self-supervised deep GCNs which solve datasets like Cora, Citeseer,  Pubmed, or WikiCS.

Our model is driven by ideas from the Graph Signal Processing \cite{hammond2011wavelets} community and based on the Interferometric Graph Transform \cite{oyallon2020interferometric}, a class of models mainly inspired by the (Euclidean) Scattering Transform \cite{mallat2012group}. The IGT aims at learning unsupervised (not using node labels at the representation learning stage), self-supervised representations that correspond to a cascade of isometric layer and modulus non-linearity, whose goal is to obtain a form of demodulation \cite{oyallon2018compressing} that will lead to smoother but discriminative representation, in the particular case of community labeling. Smooth means here, by analogy with Signal Processing \cite{mallat1999wavelet}, that the signal is in the low-frequency domain, which corresponds to a quite lower dimensional space if the spectral decay is fast enough: this is for instance the case with a standard Laplacian~\cite{grinsztajn2021lowrank} or a low-rank SBM adjacency matrix~\cite{loukas2018spectrally}. Here,  the degree of invariance of a given representation is thus characterized by the smoothness of the signal.

Our main contribution is to introduce a simplified framework that allows to analyze node labeling tasks based on a non-linear model, via concentration bounds and which is  numerically validated. Our other contributions are as follows. First, we introduce a novel  graph representation for community labeling, which doesn't involve community labels. It consists in a cascade of linear isometry, band-pass filtering, pointwise absolute value non-linearity. We refer to it as an Interferometric Graph Transform (IGT) (for community labeling), and we show that under standard assumptions on the graph of our interest, a single realization of our representation concentrates around the Expected Interferometric Graph Transform (E-IGT), which can be defined at the node level without incorporating any graph knowledge. We also introduce a novel notion of localized low-pass filter, whose invariance can be adjusted to a specific task. Second, we study the behavior of this representation under an SBM model: with our model and thanks to the structure of the IGT, we are able to demonstrate theoretically that IGT features accumulate around the corresponding E-IGT. We further show that the architecture design of IGTs allows to outperform GCNs in a synthetic setting, which is consistent with our theoretical findings. Finally, we show that this semi-supervised and unsupervised representation  is numerically competitive with supervised representations on standard community labeling datasets like Cora, Citeseer, Pubmed and WikiCS.

Our paper is organized as follows. First, we define the IGT in Sec.\@ \ref{sec:IGT} and study its basic properties.  Sec.\@ \ref{sec:expected-IGT} defines the E-IGT and bounds its distance from the IGT. Then, we discuss our model in the context of a SBM in Sec.\@ \ref{sec:model} and we explain our optimization procedure in Sec.\@ \ref{sec:optim}. Finally, Sec.\@ \ref{sec:xp} corresponds to our numerical results. Our source can be found at \href{https://github.com/nathangrinsztajn/igt-community-detection}{https://github.com/nathangrinsztajn/igt-community-detection} and all proofs of our results can be found in the Appendix.

\vspace{-5pt}
\section{Related Work}
\vspace{-5pt}
We now discuss a very related line of work, namely the IGT \cite{oyallon2020interferometric}, which takes source in several conceptual ideas from the Scattering Transform \cite{mallat2012group}. Both consist in a cascade of unitary transform, absolute value non-linearity and linear averaging, except that the Euclidean structure is neatly exploited via Wavelets Transforms for complex classification tasks in the case of the standard Scattering Transform \cite{bruna2013invariant,oyallon2015deep,7324385,oyallon2018compressing}, whereas this structure is implicitly used in the case of IGT. In particular, similarly to a Scattering Transform, an IGT aims at projecting the feature representation in a lower dimensional space (low-frequency space) while being discriminative: the main principle is to employ linear operators, which combined with a modulus non-linearity, leads to a demodulation effect. In our case however, this linear operator is learned. The IGT for community labeling is rather different from standard IGT: first, \cite{oyallon2020interferometric} is not amenable to node labeling because it doesn't preserve node localization, contrary to ours. Second, we do not rely on the Laplacian spectrum explicitely contrary to \cite{gama2020stability,oyallon2020interferometric}. Third, the community experiments of \cite{gama2020stability,oyallon2020interferometric} are rather the classification of a diffusion process than a node labeling task. This is also similar to the Expected Scattering Transform \cite{mallat2013deep}, yet it is applied in a rather different context for reducing data variance, in order to shed lights on standard Deep Neural Networks. Our E-IGT and the Expected-Scattering have a very close architecture, however the linear operators are obtained with rather different criteria (e.g., ours are obtained from a concave procedure rather than convex) and goals (e.g., preserving energy, whereas we try to reduce it). Note however there is no equivalent of the E-IGT for other context that community detection or labeling, which is another major difference with \cite{oyallon2020interferometric}. In addition, our Prop. \ref{prop:boundigt} is new compared to similar results of \cite{mallat2013deep}. Thus while having similar architectures, those works have quite different outcomes and objectives.

Another line of works corresponds to the Graph Scattering Transform \cite{gama2020stability,gao2019geometric,ioannidis2020efficient}, which proposes to employ a cascade of Wavelet Transforms that respects the graph structure \cite{hammond2011wavelets}. Yet, the principles that allow good generalization of those representations are unclear and they have only been tested until now on small datasets. Furthermore, this paper extends all those works by proposing an architecture and theoretical principles which are specific to the task of community labeling. A last related line of work corresponds to the hybrid Scattering-GCNs \cite{min2020scattering}, which combines a GCN with the inner representation of a Scattering Transform on Graphs, yet they employ massive supervision to refine the weights of their architecture, which we do not do.

The architecture of an IGT model for community labeling takes also inspiration from  Graph Convolutional Networks (GCNs) \cite{kipf2016semi, bronstein2017geometric}. They are a cascade of linear operators and ReLU non-linearities whose each layer is locally averaged along local nodes. Due to this averaging, GCNs exhibit two undesirable properties: first, the oversmoothing phenomenon \cite{li2018deeper}, which makes learning of high-frequencies features difficult; second, the training of deeper GCNs is harder \cite{huang2020tackling} because much information has been discarded by those averaging steps. Other types of Graph Neural Networks succeeded in approximating message-passing methods \cite{chen2018supervised}, or have worked on the spatial domain such as 
Spectral GCNs \cite{bruna2013spectral}, and Chebynet \cite{defferrard2016convolutional}. In our work, we solely use a well chosen averaging for separating high-frequencies and low-frequencies without using any other extra-structure, which makes our method more generic than those approaches, without using supervision at all.

We further note that theoretical works often address the problem of estimating the expected Laplacian under SBM assumptions \cite{keriven2020sparse, abbe2017community, le2018concentration}.  However up to our knowledge, none of those works is applied in a semi-supervised context and they aim at discovering communities rather than estimating communities from a small subset of labels. Moreover, the model remains mostly linear (e.g. based on the spectrum of the adjacency matrix). Here, our representation is non-linear and amenable for learning with a supervised classifier. We also note that several theoretical results have allowed to obtain approximation or stability guarantees for GCNs \cite{ruiz2020graph, bruna2013invariant, keriven2019universal}: our work follows those lines and analyzes a specific type of GCN through the lens of Graph Signal Processing theory \cite{hammond2011wavelets}.
\vspace{-5pt}
\section{Framework} 
\vspace{-5pt}
\paragraph*{Notations.}
For a matrix $X$, we write $\Vert X\Vert^2=\text{Tr}(X^TX)=\sum_{i,j}X_{i,j}^2$ its Frobenius-norm and for an operator $L$ (acting on $X$), we might consider the related operator norm $\Vert L\Vert\triangleq \sup_{\Vert X\Vert \leq 1}\Vert LX\Vert $. The norm of the concatenation $\{B,C\}$ of two operators $B,C$ is $\Vert \{B,C\}\Vert^2=\Vert B\Vert^2+\Vert C\Vert^2$ and this definition can be extended naturally to more than two operators. Note also that we use a different calligraphy between quantities related to the graph (e.g., adjacency matrix $\mathbfcal{A}$) and operators (e.g., averaging matrix $A$). We write $A \preccurlyeq B$ if $B-A$ is a symmetric positive matrix.  Here, $a_n\sim b_n$ means that $\exists \alpha>0,\beta>0:\alpha |a_n|\leq |b_n|\leq \beta |b_n|$ and $a_n=\mathcal{O}(b_n)$ means $\exists \alpha>0:|a_n|\leq \alpha|b_n|$.

\subsection{Definition of IGT}\label{defIGT}
\label{sec:IGT}

Our initial graph data are node features $X\in\mathbb{R}^{n\times P}$ obtained from a graph with $n$ nodes and unormalized adjacency matrix $\mathbfcal{A}$. We then write $ \mathbfcal{A}_{\text{norm}}$ the normalized adjacency matrix  with self-connexion, as introduced by \cite{kipf2016semi}. We note that $\mathbfcal{A}_{\text{norm}}$ satisfies $0 \preccurlyeq \mathbfcal{A}_{\text{norm}} \preccurlyeq I$ and has positive entries. In Graph Signal Processing \cite{hammond2011wavelets}, those properties allow to interpret $\mathbfcal{A}_{\text{norm}}$ as an averaging operator. It means that applying $\mathbfcal{A}_{\text{norm}}$ to $X$ leads to a linear representation $\mathbfcal{A}_{\text{norm}}X$ which is smoother than $X$ because $\mathbfcal{A}_{\text{norm}}$ projects
the data in a subspace ruled by the topology (or connectivity) of a given community~\cite{gama2020stability}. The degree of smoothness can be adjusted to a given task simply by considering:
\begin{equation}
\label{smoothness}
    A_J \triangleq \mathbfcal{A}_{\text{norm}}^J\,.
\end{equation}
This step is analogeous to the rescaling of a low-pass filter in Signal Processing \cite{mallat1999wavelet}, and $A_J$ satisfies:
\begin{lemma}\label{lemma:positive}
If $0 \preccurlyeq \mathbfcal{A}_{\text{norm}} \preccurlyeq I$ and $\mathbfcal{A}_{\text{norm}}$ has positive entries, then for any $J\in \mathbb{N}$, $A_J$ has positive entry and satisfies also $0 \preccurlyeq A_J \preccurlyeq I$.
\end{lemma}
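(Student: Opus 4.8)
The plan is to split the statement into its two independent assertions, since each rests on a different elementary fact. For the entrywise claim I would argue by induction on $J$, using only the formula for matrix multiplication: writing $(\mathbfcal{A}_{\text{norm}}^{J+1})_{ij}=\sum_{k}(\mathbfcal{A}_{\text{norm}}^{J})_{ik}(\mathbfcal{A}_{\text{norm}})_{kj}$, a sum of products of nonnegative reals, we get a nonnegative number, and the base case is immediate (with $A_0=I$ or $A_1=\mathbfcal{A}_{\text{norm}}$). No cancellation is possible, so this step is purely routine; here ``positive'' is to be read in the weak sense of ``nonnegative'', consistently with the convention already fixed for $\preccurlyeq$ in the Notations paragraph (and with the fact that $\mathbfcal{A}_{\text{norm}}$ itself has off-support zeros).

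For the Loewner bounds, I would first observe that the hypothesis $0\preccurlyeq\mathbfcal{A}_{\text{norm}}$ already forces $\mathbfcal{A}_{\text{norm}}$ to be symmetric (again by the stated convention), hence so is every power $A_J=\mathbfcal{A}_{\text{norm}}^J$, so that the inequality $0\preccurlyeq A_J\preccurlyeq I$ is well-posed. Then I would diagonalize $\mathbfcal{A}_{\text{norm}}=U\Lambda U^\top$ with $U$ orthogonal and $\Lambda=\mathrm{diag}(\lambda_1,\dots,\lambda_n)$; the assumption $0\preccurlyeq\mathbfcal{A}_{\text{norm}}\preccurlyeq I$ is exactly $\lambda_i\in[0,1]$ for all $i$. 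Since $A_J=U\Lambda^J U^\top$ has eigenvalues $\lambda_i^J$ and the map $t\mapsto t^J$ sends $[0,1]$ into $[0,1]$, all eigenvalues of $A_J$ lie in $[0,1]$, which is precisely $0\preccurlyeq A_J\preccurlyeq I$.

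If one prefers to avoid the spectral decomposition, the telescoping identity $I-\mathbfcal{A}_{\text{norm}}^J=(I-\mathbfcal{A}_{\text{norm}})\sum_{k=0}^{J-1}\mathbfcal{A}_{\text{norm}}^k$ does the job: the two factors are positive semidefinite and commute (both are polynomials in $\mathbfcal{A}_{\text{norm}}$), and a product of two commuting positive semidefinite matrices $B,C$ is positive semidefinite since $BC=B^{1/2}CB^{1/2}$ after inserting the commuting square root; this gives $A_J\preccurlyeq I$, while $0\preccurlyeq A_J$ follows from $x^\top\mathbfcal{A}_{\text{norm}}^J x=\Vert\mathbfcal{A}_{\text{norm}}^{J/2}x\Vert^2\ge 0$ for $J$ even and from $x^\top\mathbfcal{A}_{\text{norm}}^J x=(\mathbfcal{A}_{\text{norm}}^{(J-1)/2}x)^\top\mathbfcal{A}_{\text{norm}}(\mathbfcal{A}_{\text{norm}}^{(J-1)/2}x)\ge 0$ for $J$ odd. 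There is essentially no hard part here: the only points deserving attention are (i) noticing that the positive-semidefinite hypothesis already supplies the symmetry that makes the order relation meaningful, and (ii) pinning down the meaning of ``positive entries''. I would present the spectral argument as the main line, both because it is the shortest and because it makes the bound $0\preccurlyeq A_J\preccurlyeq I$ and the resulting control on the spectral decay of $A_J$ transparent for later use in the paper.
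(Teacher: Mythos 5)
Your argument is correct and complete. For this particular lemma the paper does not actually supply a written proof (it is stated in Sec.\ 3.1 and omitted from the appendix as immediate), so there is nothing to contrast with: the two facts you use — nonnegativity of entries is preserved under matrix products, and the spectral mapping $\lambda\mapsto\lambda^J$ sends $[0,1]$ into $[0,1]$ for the symmetric matrix $\mathbfcal{A}_{\text{norm}}$ — are exactly the standard route the authors leave implicit. Your two side remarks (that ``positive entries'' must be read as nonnegative, since $\mathbfcal{A}_{\text{norm}}$ has zeros off the support, and that symmetry is already built into the paper's convention for $\preccurlyeq$) are the right clarifications, and the alternative telescoping argument, while valid, is not needed.
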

Applying solely $A_J$ leads to a loss of information that we propose to recover via $I-A_J$. This allows to separate low and high-frequencies of the graph in two channels, as expressed by the next lemma:
\begin{lemma}\label{proj}
If $0 \preccurlyeq A \preccurlyeq I$, then $\Vert AX\Vert^2+\Vert (I-A)X\Vert^2\leq \Vert X\Vert^2$ with equality iff $A^2=A$.
\end{lemma}


Yet, contrary to $A_JX$, $(I-A_J)X$ is not smooth and thus, it might not be amenable for learning because community structures might not be preserved. Furthermore, a linear classifier will not be sensitive to the linear representation $\{A_JX,(I-A_J)X\}$. Similarly to \cite{oyallon2020interferometric}, we propose to apply an absolute value $|.|$ point-wise non-linearity to our representations. Section \ref{sec:optim} will explain how to estimate isometries $\{W_n\}$, which combined with a modulus, will smooth the signal envelope while preserving signal energy. We now formally describe our architecture and we consider $\{W_n\}$ a collection of isometries, that we progressively apply to an input signal representation $U_0\triangleq X$ via: 
\begin{equation}
\label{eq:u}
    U_{n+1} \triangleq |(I-A_J)U_nW_n|\,,
\end{equation}
and we introduce the IGT representation of order $N\in\mathbb{N}$ with averaging scale $J\in\mathbb{N}$ defined by:
\begin{equation}
    S^N_JX\triangleq\{A_JU_0,...,A_JU_N\}\,.
\end{equation}
Fig. \ref{icml-historical} depicts our architecture. The following explains that $S_J^N$ is non-expansive, thus  stable to noise:

\begin{proposition}
\label{prop:lip}
For $N\in\mathbb{N}$, $S_J^NX$ is 1-Lipschitz leading to:
\begin{equation}
\Vert S_J^NX-S_J^NY\Vert\leq\Vert X-Y\Vert\, \text{and, } \Vert S_J^NX\Vert\leq\Vert X\Vert\,.
\end{equation}
\end{proposition}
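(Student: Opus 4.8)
The plan is to prove the contraction estimate $\|S_J^N X - S_J^N Y\|\le \|X-Y\|$ and then obtain $\|S_J^N X\|\le \|X\|$ for free, since $S_J^N 0 = 0$. The whole argument rests on three elementary non-expansiveness facts, chained together by a telescoping sum over the $N+1$ output channels.

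First I would record what happens in one layer. Writing $\rho_n(U)\triangleq |(I-A_J)UW_n|$ so that $U_{n+1}=\rho_n(U_n)$, fix two inputs $X,Y$ with associated sequences $(U_n)$ and $(V_n)$, and set $D_n\triangleq U_n-V_n$, so that $D_0=X-Y$. The three facts are: (i) the pointwise modulus is $1$-Lipschitz for the Frobenius norm, i.e. $\big\|\,|Z_1|-|Z_2|\,\big\|\le \|Z_1-Z_2\|$, because $\big(|a|-|b|\big)^2\le (a-b)^2$ entrywise; (ii) each $W_n$ being an isometry acting on the feature axis (hence not interacting with $A_J$, which acts on the node axis), $\|ZW_n\|=\|Z\|$ for every $Z$; and (iii) $0\preccurlyeq A_J\preccurlyeq I$, which is exactly Lemma~\ref{lemma:positive}, so that Lemma~\ref{proj} applies with $A=A_J$ to any matrix. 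Combining (i) and (ii),
\[
\|D_{n+1}\| = \|\rho_n(U_n)-\rho_n(V_n)\| \le \|(I-A_J)D_n W_n\| = \|(I-A_J)D_n\|,
\]
while Lemma~\ref{proj} gives $\|A_JD_n\|^2+\|(I-A_J)D_n\|^2\le \|D_n\|^2$, whence
\[
\|A_JD_n\|^2 \le \|D_n\|^2-\|(I-A_J)D_n\|^2 \le \|D_n\|^2-\|D_{n+1}\|^2 .
\]

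I would then sum this inequality over $n=0,\dots,N$, introducing the dummy term $D_{N+1}\triangleq \rho_N(U_N)-\rho_N(V_N)$ only for bookkeeping (layer $N+1$ is not part of the output). Since $\|S_J^NX-S_J^NY\|^2=\sum_{n=0}^N\|A_JD_n\|^2$ by definition of $S_J^N$, the right-hand sides telescope:
\[
\|S_J^NX-S_J^NY\|^2 \le \sum_{n=0}^N\big(\|D_n\|^2-\|D_{n+1}\|^2\big) = \|D_0\|^2-\|D_{N+1}\|^2 \le \|X-Y\|^2 .
\]
For the norm inequality, note $\rho_n(0)=|(I-A_J)\,0\,W_n|=0$, so taking $Y=0$ forces $V_n=0$ for all $n$ and $S_J^N0=0$; the Lipschitz bound then reads $\|S_J^NX\|=\|S_J^NX-S_J^N0\|\le\|X\|$.

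I do not anticipate a genuine obstacle: the delicate points are purely formal — verifying that the pointwise modulus is non-expansive in the Frobenius (not merely scalar) norm, checking that the isometry $W_n$ on the feature axis leaves $\|(I-A_J)ZW_n\|=\|(I-A_J)Z\|$ untouched, and handling the endpoint $n=N$ in the telescoping sum via the dummy $D_{N+1}$ (equivalently, one may simply bound $\|A_JD_N\|^2\le\|D_N\|^2$). In particular the equality case of Lemma~\ref{proj} is never invoked, and the argument uses no property of the $W_n$ beyond norm preservation.
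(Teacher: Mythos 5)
Your proof is correct and follows essentially the same route as the paper's: one-layer non-expansiveness from the pointwise modulus and the norm-preserving $W_n$, combined with Lemma~\ref{proj} applied to $A_J$, then a telescoping sum over the $N+1$ channels, with the case $Y=0$ giving $\Vert S_J^NX\Vert\leq\Vert X\Vert$. The only difference is cosmetic bookkeeping (the explicit dummy term $D_{N+1}$ and the explicit appeal to Lemma~\ref{lemma:positive}), which the paper leaves implicit.
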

\begin{wrapfigure}{r}{0.35\textwidth}
\vspace{-18pt}
\begin{center}\centerline{\includegraphics[trim={27.5cm 15cm 22cm 4.5cm}, clip, width=0.4\columnwidth]{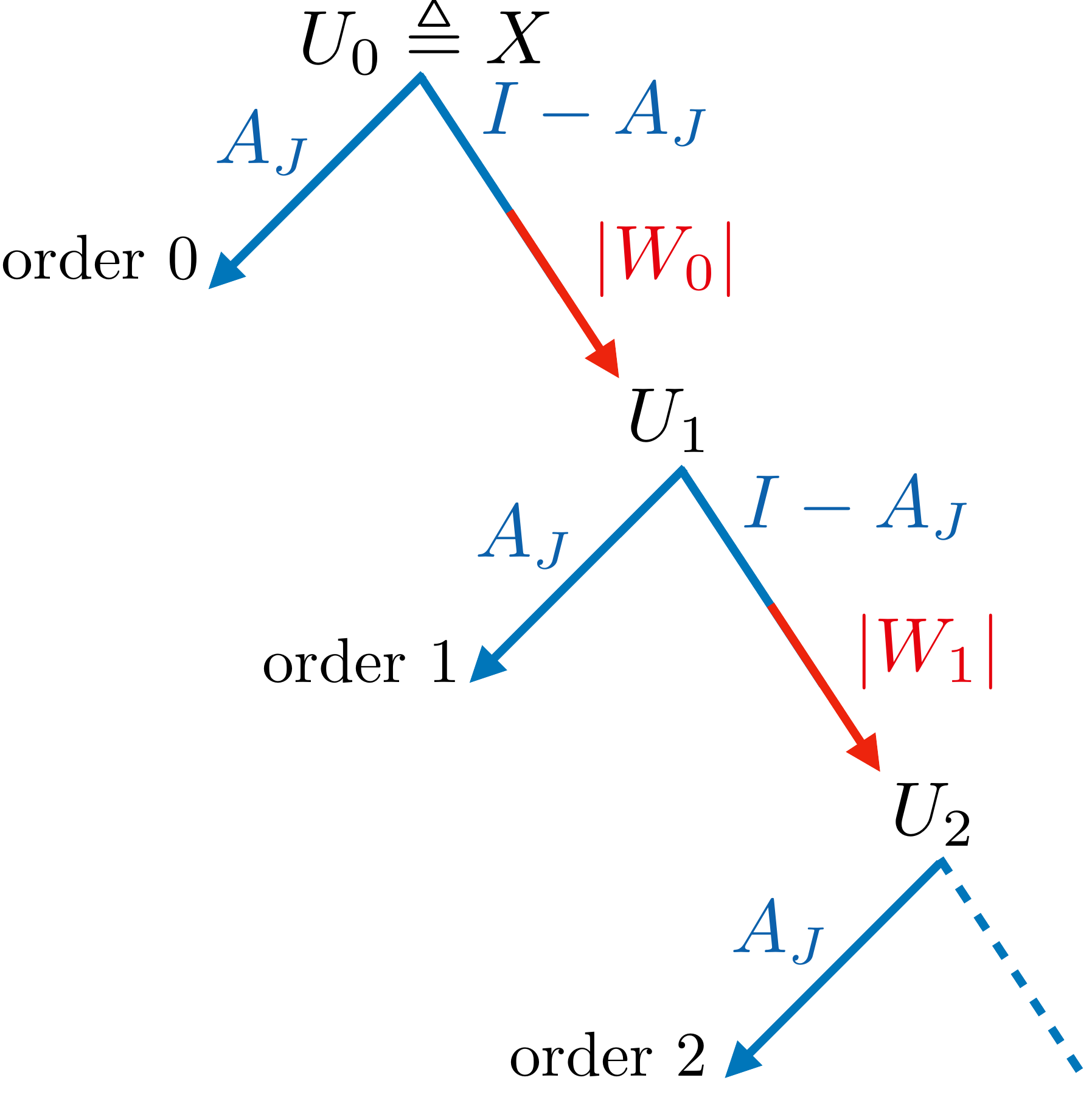}}\caption{We illustrate our model for $N=2$. Low and high frequencies are separated (blue) and then the high frequencies are demodulated (red) via an isometry and a non-linear point wise absolute value, and then propagated to the next layer.}\vspace{-22pt}\label{icml-historical}\end{center}\end{wrapfigure}

The next section will describe the E-IGT, which was introduced as the Expected Scattering \cite{mallat2013deep}, but in a rather different context: we will show under simplifying assumptions that an IGT for community labeling concentrates around the E-IGT.

\subsection{Definition of the Expected-IGT (E-IGT)}
\label{sec:expected-IGT}
Similarly to the previous section, for an input signal $\bar U_0\triangleq X$, we consider the following recursion, introduced in \cite{mallat2013deep}:
\begin{equation}
    \bar U_{n+1} \triangleq |(\bar U_n-\mathbb{E}\bar U_n)W_n|\,,
\end{equation}
which leads to the E-IGT  \footnote{We rename it here because we use rather different principles to obtain the $\{W_0...,W_{N-1}\}$ compared to the original Scattering.} of order $N$ defined by:
\begin{equation}
    \bar S_N\triangleq\{\mathbb{E}\bar U_0,...,\mathbb{E}\bar U_N\}\,.
\end{equation}Similarly to Prop. \ref{prop:lip}, we prove the following stability result:
\begin{proposition}
\label{prop:lip2}
For $N\in\mathbb{N}$, $\bar S^NX$ is 1-Lipschitz, meaning that:
\begin{equation}
\Vert \bar S^NX- \bar S^NY\Vert^2\leq\mathbb{E}[\Vert X-Y\Vert^2]\,,
\end{equation}
and furthermore:
\begin{equation}
\Vert \bar S^NX\Vert^2 \leq\mathbb{E}[\Vert X\Vert^2]\,.
\end{equation}
\end{proposition}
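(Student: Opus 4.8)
The plan is to mirror the proof of Proposition~\ref{prop:lip}, replacing the low/high-frequency split $\{A_J,\,I-A_J\}$ by the split $\{\mathbb{E}[\cdot],\,I-\mathbb{E}[\cdot]\}$ and Lemma~\ref{proj} by the exact Pythagorean (bias--variance) identity. Concretely, for any square-integrable random matrix $Z$ one has $\mathbb{E}[\Vert Z\Vert^2]=\Vert\mathbb{E}Z\Vert^2+\mathbb{E}[\Vert Z-\mathbb{E}Z\Vert^2]$, which plays the role of the inequality in Lemma~\ref{proj}, here with equality since centering is an orthogonal projection in $L^2$. Two further elementary facts are needed: the modulus is norm-preserving entrywise, $\Vert\,|Z|\,\Vert=\Vert Z\Vert$, hence $\mathbb{E}[\Vert\,|Z|\,\Vert^2]=\mathbb{E}[\Vert Z\Vert^2]$; and each $W_n$, being an isometry acting on the right (on the feature axis), preserves the Frobenius norm, $\Vert ZW_n\Vert=\Vert Z\Vert$.

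For the energy bound I would set $e_n\triangleq\mathbb{E}[\Vert\bar U_n\Vert^2]$ and compute, applying the three facts in order, $e_{n+1}=\mathbb{E}[\Vert\,|(\bar U_n-\mathbb{E}\bar U_n)W_n|\,\Vert^2]=\mathbb{E}[\Vert\bar U_n-\mathbb{E}\bar U_n\Vert^2]=e_n-\Vert\mathbb{E}\bar U_n\Vert^2$. Telescoping from $n=0$ to $N$ and discarding the nonnegative remainder $e_{N+1}$ gives $\sum_{n=0}^N\Vert\mathbb{E}\bar U_n\Vert^2\leq e_0=\mathbb{E}[\Vert X\Vert^2]$, which is exactly $\Vert\bar S^NX\Vert^2\leq\mathbb{E}[\Vert X\Vert^2]$.

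For the Lipschitz bound, run the recursion on two inputs $X,Y$, write $\bar U_n,\bar U_n'$ for the two trajectories and $D_n\triangleq\bar U_n-\bar U_n'$. Since $|\cdot|$ is $1$-Lipschitz entrywise, $\Vert\,|A|-|B|\,\Vert\leq\Vert A-B\Vert$; applying this with $A=(\bar U_n-\mathbb{E}\bar U_n)W_n$ and $B=(\bar U_n'-\mathbb{E}\bar U_n')W_n$, then using that $W_n$ is an isometry and the linearity of centering, $\mathbb{E}[\Vert D_{n+1}\Vert^2]\leq\mathbb{E}[\Vert (D_n-\mathbb{E}D_n)W_n\Vert^2]=\mathbb{E}[\Vert D_n-\mathbb{E}D_n\Vert^2]=\mathbb{E}[\Vert D_n\Vert^2]-\Vert\mathbb{E}D_n\Vert^2$, again by the Pythagorean identity. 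Since $\mathbb{E}D_n=\mathbb{E}\bar U_n-\mathbb{E}\bar U_n'$ is precisely the $n$-th block of $\bar S^NX-\bar S^NY$, telescoping and dropping the final nonnegative term yields $\Vert\bar S^NX-\bar S^NY\Vert^2=\sum_{n=0}^N\Vert\mathbb{E}D_n\Vert^2\leq\mathbb{E}[\Vert X-Y\Vert^2]$.

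I expect the only genuine subtlety to be bookkeeping rather than anything deep: fixing the convention that $W_n$ is an isometry on the $P$-dimensional feature axis so that $\Vert ZW_n\Vert=\Vert Z\Vert$ for every $Z$, and making explicit that $X$ (hence every $\bar U_n$) is the random object over which $\mathbb{E}$ and the outer expectation in the statement are taken, so that the bias--variance identity is legitimately invoked at each layer. Everything else is the same telescoping-with-a-nonnegative-remainder argument as in Proposition~\ref{prop:lip}, the retained slack $\Vert\mathbb{E}\bar U_n\Vert^2$ (resp.\ $\Vert\mathbb{E}D_n\Vert^2$) being exactly the content stored in the representation.
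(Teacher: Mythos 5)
Your proof is correct. The paper itself gives no argument for this proposition beyond the citation to \cite{mallat2013deep}, so strictly there is nothing to compare line by line; what you have written is precisely the expected-scattering non-expansiveness argument that the citation points to, transposed to the notation here, and it parallels the paper's own proof of Prop.~\ref{prop:lip} with the split $\{A_J, I-A_J\}$ and Lemma~\ref{proj} replaced by the split $\{\mathbb{E}[\cdot],\,\cdot-\mathbb{E}[\cdot]\}$ and the exact bias--variance identity $\mathbb{E}[\Vert Z\Vert^2]=\Vert\mathbb{E}Z\Vert^2+\mathbb{E}[\Vert Z-\mathbb{E}Z\Vert^2]$, followed by the same telescoping with a nonnegative remainder. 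One small remark: when the paper allows $W_n$ to have rank $k<P$ (Sec.~\ref{sec:optim}), $\Vert ZW_n\Vert=\Vert Z\Vert$ becomes $\Vert ZW_n\Vert\leq\Vert Z\Vert$, but both of your recursions only need this inequality, so the conclusion is unaffected; your statement that $\mathbb{E}D_n=\mathbb{E}\bar U_n-\mathbb{E}\bar U_n'$ requires $X$ and $Y$ to live on a common probability space, which is the intended reading of the right-hand side $\mathbb{E}[\Vert X-Y\Vert^2]$.
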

\begin{proof}
Indeed, \cite{mallat2013deep} have proven this for the columns  of $X$.
\end{proof}
Note that this represention is also more amenable to standard supervised classifiers such as SVMs because no operation mixing nodes is involved. Prop. \ref{prop:lip2} highlights the fact that the E-IGT is non-expansive, and \cite{waldspurger2017wavelet} shows that this allows to discriminate the attributes of the distribution of $X$. However, it is difficult in general to estimate the E-IGT because one does not know the distribution of a given node and it is difficult to estimate it from a single realization as there is a clear curse of dimensionality. However, we will show that $S_J^N$ will be very similar to $\bar S^N$ under standard assumptions on communities. We now state the following proposition, which allows to quantify the distance between an IGT and its E-IGT:


\begin{proposition}\label{prop:boundigt}
For any $X, N, J$, we get:
\begin{equation}
  \Vert S_J^NX-\bar S^N X\Vert\leq   \sqrt{2} \sum_{m=0}^{N}\Vert  (A_J-\mathbb{E})\bar U_m\Vert \,.
\end{equation}
\end{proposition}

The proof of this proposition can be found in the Appendix: it fully uses the tree structure of Fig. \ref{icml-historical}, in order to obtain tighter bounds than \cite{mallat2013deep}, as it allows $N$ to be of arbitrary size without diverging. We  now bound the distance between the IGT and the E-IGT:
\begin{corollary}\label{prop:concentration}
For $N\in \mathbb{N}$, we have:
\begin{equation}
\sup_{\mathbb{E}\Vert X\Vert\leq 1}\mathbb{E}[\Vert S_J^{N}X-  \bar S^{N}X\Vert] \leq 2^{N+2} \sup_{\mathbb{E}\Vert X\Vert\leq 1}\mathbb{E}[\Vert A_JX-\mathbb{E}X\Vert]\,.\label{ergo}
\end{equation}
\end{corollary}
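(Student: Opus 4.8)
The plan is to bound each summand $\Vert(A_J-\mathbb{E})\bar U_m\Vert$ appearing in Proposition~\ref{prop:boundigt} by the single ``ergodicity'' quantity $\sup_{\mathbb{E}\Vert X\Vert\leq 1}\mathbb{E}[\Vert A_JX-\mathbb{E}X\Vert]$, up to a factor growing like $2^m$, and then sum the geometric series. First, I would fix $m$ and unwind the recursion $\bar U_{m}=|(\bar U_{m-1}-\mathbb{E}\bar U_{m-1})W_{m-1}|$. The key observation is that, since the $W_n$ are isometries and $|\cdot|$ is $1$-Lipschitz, the map $X\mapsto \bar U_m$ is itself Lipschitz with a controllable constant in the appropriate (expected) norm — indeed Proposition~\ref{prop:lip2} and its proof already give that the whole E-IGT layer stack is non-expansive, so each individual $\bar U_m$ satisfies $\mathbb{E}\Vert \bar U_m\Vert^2\leq \mathbb{E}\Vert X\Vert^2$, hence $\mathbb{E}\Vert \bar U_m\Vert\leq (\mathbb{E}\Vert X\Vert^2)^{1/2}$.

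Next I would estimate $\mathbb{E}\Vert (A_J-\mathbb{E})\bar U_m\Vert$. Write $(A_J-\mathbb{E})\bar U_m = A_J\bar U_m - \mathbb{E}\bar U_m$. The trick is to compare $\bar U_m$ to a ``frozen'' version obtained by replacing, one layer at a time, the random centering $\bar U_k-\mathbb{E}\bar U_k$ by its expectation-smoothed analogue, so that the error telescopes. Concretely, using $\bigl||a|-|b|\bigr|\leq|a-b|$ pointwise, the fact that $\Vert A_JZ\Vert\le\Vert Z\Vert$ and $\Vert\mathbb{E}Z\Vert\le(\mathbb{E}\Vert Z\Vert^2)^{1/2}$, and that each $W_n$ is an isometry, one gets a recursion of the form
\begin{equation}
\mathbb{E}\Vert (A_J-\mathbb{E})\bar U_{m}\Vert \;\leq\; 2\,\mathbb{E}\Vert (A_J-\mathbb{E})\bar U_{m-1}\Vert \;+\; c\,\sup_{\mathbb{E}\Vert X\Vert\le 1}\mathbb{E}[\Vert A_JX-\mathbb{E}X\Vert]\,,
\end{equation}
where the factor $2$ comes from the two occurrences of $\bar U_{m-1}$ (once inside $A_J(\cdot)$, once inside $\mathbb{E}(\cdot)$) after centering, and $c$ is an absolute constant coming from applying the ergodicity bound to the normalized signal $\bar U_{m-1}/(\mathbb{E}\Vert \bar U_{m-1}\Vert^2)^{1/2}$, which has expected norm $\le 1$. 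Unrolling this linear recursion from $m=0$ (where the base term is exactly $\Vert A_JX-\mathbb{E}X\Vert$) yields $\mathbb{E}\Vert(A_J-\mathbb{E})\bar U_m\Vert \lesssim 2^{m}\sup_{\mathbb{E}\Vert X\Vert\le1}\mathbb{E}[\Vert A_JX-\mathbb{E}X\Vert]$.

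Finally, I would take expectations in the bound of Proposition~\ref{prop:boundigt}, substitute the per-layer estimate, and sum: $\sqrt{2}\sum_{m=0}^{N}2^{m}\,\varepsilon = \sqrt{2}\,(2^{N+1}-1)\,\varepsilon \le 2^{N+2}\varepsilon$, absorbing the $\sqrt2$ and the constants into the stated power of two, and then take the supremum over $\mathbb{E}\Vert X\Vert\le1$. The main obstacle I anticipate is bookkeeping the centering step correctly: I must make sure that when I pass from $\bar U_m$ to the quantity the ergodicity constant is stated for, the relevant signal genuinely has expected norm bounded by $1$ (this is where Proposition~\ref{prop:lip2} is essential), and that the ``$2$'' in the recursion is not secretly larger — in particular checking that the cross terms produced by $|(\bar U_{m-1}-\mathbb{E}\bar U_{m-1})W_{m-1}|$ versus $|(\text{smoothed})W_{m-1}|$ really do split into exactly one ``propagated error'' term and one ``fresh ergodicity'' term, with no extra multiplicative blow-up. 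Getting a clean constant here, rather than a sloppy one, is what makes the final factor come out as $2^{N+2}$ rather than something worse.
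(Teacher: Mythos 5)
There is a genuine gap in your per-term estimate, even though your final assembly (take expectations in Prop.~\ref{prop:boundigt}, bound the $m$-th term by $2^m$ times the ergodicity constant, and sum $\sqrt{2}(2^{N+1}-1)\le 2^{N+2}$) is exactly the intended route. The paper obtains the per-term bound in one step: Lemma~\ref{bounded} gives $\mathbb{E}\Vert\bar U_m\Vert\le 2^m$ whenever $\mathbb{E}\Vert X\Vert\le 1$ (because the centering step at worst doubles the first moment, $\mathbb{E}\Vert\bar U_{m}\Vert\le\mathbb{E}\Vert\bar U_{m-1}\Vert+\Vert\mathbb{E}\bar U_{m-1}\Vert\le 2\,\mathbb{E}\Vert\bar U_{m-1}\Vert$), so $\bar U_m/2^m$ is an admissible signal in the supremum and, by homogeneity, $\mathbb{E}\Vert(A_J-\mathbb{E})\bar U_m\Vert\le 2^m\sup_{\mathbb{E}\Vert Z\Vert\le1}\mathbb{E}\Vert A_JZ-\mathbb{E}Z\Vert$, with no extra constant. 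Your route replaces this by (i) the claim $\mathbb{E}\Vert\bar U_m\Vert\le(\mathbb{E}\Vert X\Vert^2)^{1/2}$ and (ii) a recursion $e_m\le 2e_{m-1}+c\,\varepsilon$ in the error $e_m=\mathbb{E}\Vert(A_J-\mathbb{E})\bar U_m\Vert$. Point (i) is not usable here: the corollary's constraint is $\mathbb{E}\Vert X\Vert\le 1$, which does not bound $\mathbb{E}\Vert X\Vert^2$ (and Prop.~\ref{prop:lip2} controls $\Vert\mathbb{E}\bar U_m\Vert$, not $\mathbb{E}\Vert\bar U_m\Vert$); hence normalizing $\bar U_{m-1}$ by $(\mathbb{E}\Vert\bar U_{m-1}\Vert^2)^{1/2}$ and calling the un-normalizing factor an absolute constant $c$ is exactly where the argument breaks.

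Point (ii) is also not derived, and I do not see how it can be with an absolute $c$: $(A_J-\mathbb{E})\bar U_m$ is not a Lipschitz image of $(A_J-\mathbb{E})\bar U_{m-1}$, since $A_J$ and $\mathbb{E}$ act only at the output and do not intertwine with the pointwise modulus ($A_J|Z|\neq|A_JZ|$, $\mathbb{E}|Z|\neq|\mathbb{E}Z|$). The natural way to make your ``frozen-layer'' comparison precise is to compare $\bar U_m$ with $|(\bar U_{m-1}-A_J\bar U_{m-1})W_{m-1}|$, which indeed yields $e_m\le 2e_{m-1}+\mathbb{E}\Vert(A_J-\mathbb{E})\tilde U_m\Vert$; but the fresh term can then only be bounded, via homogeneity, by $\mathbb{E}\Vert\bar U_{m-1}\Vert\cdot\varepsilon$, and the only available bound on $\mathbb{E}\Vert\bar U_{m-1}\Vert$ under the stated hypothesis is again $2^{m-1}$. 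The recursion $e_m\le 2e_{m-1}+2^{m-1}\varepsilon$ unrolls to order $m\,2^{m}\varepsilon$, and after summation this exceeds the stated $2^{N+2}\varepsilon$; even with a truly absolute $c$ you would need $c\le\sqrt{2}-1$ for the constant to come out, so ``absorbing the constants'' is not innocuous. The fix is to drop the recursion in the error altogether: the factor $2^m$ should come from the first-moment growth of $\bar U_m$ (Lemma~\ref{bounded}), after which homogeneity of the supremum finishes the proof exactly as you sum it.
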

\begin{proof}
The next Lemma combined with the norm homogeneity allows to conclude with Prop.\@ \ref{prop:boundigt}.
\end{proof}
\begin{lemma}\label{bounded}
If $\Vert X\Vert\leq 1$, then $\Vert\bar U_n\Vert\leq 2^n$, with $\bar U_0=X$. Also, if $\mathbb{E}[\Vert X\Vert]\leq 1$, then $\mathbb{E}[\Vert \bar U_n\Vert]\leq 2^n$
\end{lemma}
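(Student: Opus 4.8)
The plan is to prove both statements by a single induction on $n$, using three elementary facts: the pointwise modulus $|\cdot|$ leaves the Frobenius norm unchanged (it acts entrywise), right-multiplication by the isometry $W_n$ preserves it as well (so $\Vert YW_n\Vert=\Vert Y\Vert$, or at worst $\le$, which is all we need), and the Frobenius norm is convex, so Jensen's inequality gives $\Vert\mathbb{E}\bar U_n\Vert\le\mathbb{E}\Vert\bar U_n\Vert$.

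First I would dispatch the base case $n=0$. Since $\bar U_0=X$, the hypothesis $\Vert X\Vert\le 1$ gives $\Vert\bar U_0\Vert\le 1=2^0$ in the almost-sure setting, and $\mathbb{E}\Vert X\Vert\le 1$ gives $\mathbb{E}\Vert\bar U_0\Vert\le 1=2^0$ in the expectation setting. For the inductive step, the key computation is
\[
\Vert\bar U_{n+1}\Vert=\bigl\Vert\,|(\bar U_n-\mathbb{E}\bar U_n)W_n|\,\bigr\Vert=\Vert(\bar U_n-\mathbb{E}\bar U_n)W_n\Vert=\Vert\bar U_n-\mathbb{E}\bar U_n\Vert,
\]
the first equality because $|\cdot|$ is entrywise, the last because $W_n$ is an isometry. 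Then the triangle inequality followed by Jensen yields $\Vert\bar U_{n+1}\Vert\le\Vert\bar U_n\Vert+\Vert\mathbb{E}\bar U_n\Vert\le\Vert\bar U_n\Vert+\mathbb{E}\Vert\bar U_n\Vert$. In the almost-sure regime the induction hypothesis $\Vert\bar U_n\Vert\le 2^n$ (which in particular forces $\mathbb{E}\Vert\bar U_n\Vert\le 2^n$) gives $\Vert\bar U_{n+1}\Vert\le 2^n+2^n=2^{n+1}$; in the expectation regime, taking expectations of the same bound gives $\mathbb{E}\Vert\bar U_{n+1}\Vert\le 2\,\mathbb{E}\Vert\bar U_n\Vert\le 2\cdot 2^n=2^{n+1}$. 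This closes both inductions at once.

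I do not expect any genuine obstacle here: the statement is essentially bookkeeping on the recursion. The one point that deserves a line of care is the coupling between the two claims — controlling $\Vert\mathbb{E}\bar U_n\Vert$ in the first (almost-sure) bound requires passing through $\mathbb{E}\Vert\bar U_n\Vert$ via Jensen and invoking the almost-sure induction hypothesis — and a one-line check that ``isometry'' is being used on the feature axis so that $\Vert\bar U_nW_n\Vert=\Vert\bar U_n\Vert$ (or $\le$, which still suffices). The factor $2^n$ is exactly the cost of the centering step $\bar U_n\mapsto\bar U_n-\mathbb{E}\bar U_n$ performed once per layer, and the proof makes this transparent; this Lemma then feeds directly into Corollary~\ref{prop:concentration} through the norm-homogeneity argument together with Prop.~\ref{prop:boundigt}.
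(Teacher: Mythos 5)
Your proof is correct and follows essentially the same route as the paper's: base case $n=0$, then the induction $\Vert\bar U_{n+1}\Vert\le\Vert\bar U_n\Vert+\Vert\mathbb{E}\bar U_n\Vert\le\Vert\bar U_n\Vert+\mathbb{E}\Vert\bar U_n\Vert\le 2^{n+1}$ using norm preservation of the modulus and the isometry, with the expectation version obtained by taking expectations of the same bound. You merely spell out the steps (Jensen, entrywise modulus, the a.s.\ hypothesis controlling $\mathbb{E}\Vert\bar U_n\Vert$) that the paper leaves implicit.
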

\begin{proof}
This is true for $n=0$, and then by induction, since isometry preserves the $\ell^2$-norm: $\Vert\bar U_{n+1}\Vert\leq \Vert\bar U_n\Vert+\Vert \mathbb{E}\bar U_n\Vert\leq \Vert\bar U_n\Vert+ \mathbb{E}\Vert\bar U_n\Vert\leq 2^{n+1}$. The proof is similar for the second part.
\end{proof}

The right term of Eq. \ref{ergo} measures the ergodicity properties of a given $A_J$. For instance, in the case of images, a stationary assumption on $X$ implies that $A_Jf(X)\approx\mathbb{E}f(X)$ for all measurable $f$, which is the case for instance for textures \cite{mallat1999wavelet}. The following proposition shows that in case of exact ergodicity, the two representations have bounded moments of order 2:
\begin{proposition}
If $\mathbb{E}[A_JX]=\mathbb{E}X$, and if $X$ has variance $\sigma^2=\mathbb{E}\Vert X \Vert^2- \Vert \mathbb{E}X \Vert^2$, then:
\begin{align}
    \mathbb{E}[\Vert S_J^NX-\bar S^NX\Vert^2 ] & \leq  2 \sigma^2\,.
\end{align}

\end{proposition}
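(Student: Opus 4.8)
The plan is to start from Proposition~\ref{prop:boundigt} rather than to square it bluntly. Recall it gives
\[
\Vert S_J^N X-\bar S^N X\Vert\;\le\;\sqrt{2}\sum_{m=0}^{N}\Vert (A_J-\mathbb{E})\bar U_m\Vert .
\]
The key remark is that \emph{exact} ergodicity is much stronger than the single identity $\mathbb{E}[A_JX]=\mathbb{E}X$ suggests: each propagated feature $\bar U_m$ is a fixed deterministic functional of $X$, so in the exactly ergodic regime (the equality version of the relation $A_Jf(X)\approx\mathbb{E}f(X)$ discussed just before the statement) one has $A_J\bar U_m=\mathbb{E}\bar U_m$, i.e. $(A_J-\mathbb{E})\bar U_m=0$, for every $m\ge 1$. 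Hence the sum above collapses to its $m=0$ term $\Vert (A_J-\mathbb{E})\bar U_0\Vert=\Vert A_JX-\mathbb{E}X\Vert$, and squaring then taking expectations gives $\mathbb{E}\Vert S_J^N X-\bar S^N X\Vert^2\le 2\,\mathbb{E}\Vert A_JX-\mathbb{E}X\Vert^2$.

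It then remains to bound $\mathbb{E}\Vert A_JX-\mathbb{E}X\Vert^2$ by $\sigma^2$. Using $\mathbb{E}[A_JX]=\mathbb{E}X$ to write $A_JX-\mathbb{E}X=A_JX-\mathbb{E}[A_JX]$, one gets $\mathbb{E}\Vert A_JX-\mathbb{E}X\Vert^2=\mathbb{E}\Vert A_JX\Vert^2-\Vert\mathbb{E}X\Vert^2$. Since $0\preccurlyeq A_J\preccurlyeq I$ by Lemma~\ref{lemma:positive} we have $\Vert A_JX\Vert\le\Vert X\Vert$, hence $\mathbb{E}\Vert A_JX\Vert^2-\Vert\mathbb{E}X\Vert^2\le\mathbb{E}\Vert X\Vert^2-\Vert\mathbb{E}X\Vert^2=\sigma^2$. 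Combining the two estimates yields $\mathbb{E}\Vert S_J^N X-\bar S^N X\Vert^2\le 2\sigma^2$.

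The step needing care — and the main obstacle — is the vanishing of $(A_J-\mathbb{E})\bar U_m$ for $m\ge 1$: one must make precise in which sense the hypothesis propagates through the modulus layers, since for a single realization only the approximate ergodicity of Corollary~\ref{prop:concentration} is available. To keep track of this I would re-examine the telescoping inside the proof of Proposition~\ref{prop:boundigt}. Writing $\Delta_m\triangleq U_m-\bar U_m$ (so $\Delta_0=0$) and $E_m\triangleq(A_J-\mathbb{E})\bar U_m$, the identities $A_JU_m-\mathbb{E}\bar U_m=A_J\Delta_m+E_m$ and $\Vert\Delta_{m+1}\Vert\le\Vert(I-A_J)\Delta_m-E_m\Vert$ (from the $1$-Lipschitz property of the modulus and the isometry $W_m$), combined with Lemma~\ref{proj} and $\Vert 2A_J-I\Vert\le 1$, give
\[
\Vert A_JU_m-\mathbb{E}\bar U_m\Vert^2+\Vert\Delta_{m+1}\Vert^2\;\le\;\Vert\Delta_m\Vert^2+2\Vert E_m\Vert^2+2\langle(2A_J-I)\Delta_m,\,E_m\rangle .
\]
Summing over $m=0,\dots,N$, the $\Vert\Delta_m\Vert^2$ terms telescope; when $E_m=0$ for $m\ge1$ the right-hand side collapses to $2\Vert E_0\Vert^2$, recovering the estimate above. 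The value $2\sigma^2$ (rather than $0$) is then precisely the price of retaining only the weak unbiasedness $\mathbb{E}[A_JX]=\mathbb{E}X$ at the input layer while invoking full ergodicity at depths $\ge1$.
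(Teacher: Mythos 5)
There is a genuine gap: your argument rests on the claim that $(A_J-\mathbb{E})\bar U_m=0$ for every $m\geq 1$, but this is not among the hypotheses and does not follow from the single assumed identity $\mathbb{E}[A_JX]=\mathbb{E}X$. That identity is a statement about expectations of the \emph{input} layer only; it does not propagate through the modulus non-linearity, and even the stronger "ergodicity in expectation" $\mathbb{E}[A_Jf(X)]=\mathbb{E}[f(X)]$ for all $f$ would only give $\mathbb{E}[(A_J-\mathbb{E})\bar U_m]=0$, whereas your collapse of the sum in Proposition \ref{prop:boundigt} to its $m=0$ term requires the almost-sure matrix identity $A_J\bar U_m=\mathbb{E}\bar U_m$. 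You flag this yourself as "the step needing care", but the telescoping computation you then give (which is algebraically fine) still takes $E_m=0$ for $m\geq1$ as an input rather than proving it, so what you establish is the bound under a strictly stronger hypothesis than the one stated. The remaining pieces of your argument (the identity $\mathbb{E}\Vert A_JX-\mathbb{E}X\Vert^2=\mathbb{E}\Vert A_JX\Vert^2-\Vert\mathbb{E}X\Vert^2\leq\sigma^2$ via $0\preccurlyeq A_J\preccurlyeq I$) are correct, but they only control the $m=0$ contribution.

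The paper's proof avoids the higher-order terms altogether, and this is the idea you are missing: expand $\mathbb{E}\Vert S_J^NX-\bar S^NX\Vert^2$ directly, bound $\mathbb{E}\Vert S_J^NX\Vert^2+\mathbb{E}\Vert\bar S^NX\Vert^2\leq 2\mathbb{E}\Vert X\Vert^2$ by the non-expansiveness of both transforms (Propositions \ref{prop:lip} and \ref{prop:lip2}), and then observe that the cross terms $\mathbb{E}\,\mathrm{Tr}\bigl((A_JU_m)^T\mathbb{E}[\bar U_m]\bigr)$ for $m\geq1$ are \emph{nonnegative} — because $U_m$ and $\bar U_m$ are outputs of a pointwise modulus and $A_J$ has positive entries (Lemma \ref{lemma:positive}) — so they may simply be discarded from the bound; there is no need for them to vanish. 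Only the $m=0$ cross term is evaluated, and it equals $\Vert\mathbb{E}X\Vert^2$ precisely by the hypothesis $\mathbb{E}[A_JX]=\mathbb{E}X$, giving $2(\mathbb{E}\Vert X\Vert^2-\Vert\mathbb{E}X\Vert^2)=2\sigma^2$. To repair your route you would either have to add the exact-ergodicity assumption at all depths to the statement (weakening the result), or replace the vanishing of $E_m$ by a sign argument of this kind.
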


\subsection{Graph model and concentration bounds}
\label{sec:model}

In this subsection, we propose to demonstrate novel bounds which improve  the upper bound obtained at Prop.\@ \ref{prop:concentration} by introducing a Stochastic Block Model \cite{holland1983stochastic}. We will show that the IGT features of a given community concentrates around the E-IGT feature of this community: IGT features are thus more amenable to be linearly separable.  Recall from Sec.\@ \ref{defIGT} that $A_1=\mathbfcal{A}_{\text{norm}}$, thus we note that for some $m>0$, via the triangular inequality we get:
\[
    \Vert A_1\bar U_m-\mathbb{E}\bar U_m\Vert=\Vert \mathbfcal{A}_{\text{norm}}\bar U_m-\mathbb{E}\bar U_m\Vert 
    \leq \Vert (\mathbfcal{A}_{\text{norm}}-\mathbb{E}[\mathbfcal{A}]_{\text{norm}})\bar U_m\Vert + \Vert \mathbb{E}[\mathbfcal{A}]_{\text{norm}}\bar U_m-\mathbb{E}[\bar U_m]\Vert\,.
\]
Now, the left term can be upper bounded as:
\begin{equation}
    \Vert (\mathbfcal{A}_{\text{norm}}-\mathbb{E}[\mathbfcal{A}_{\text{norm}}])\bar U_m\Vert\leq  \Vert \mathbfcal{A}_{\text{norm}}-\mathbb{E}[\mathbfcal{A}_{\text{norm}}]\Vert \Vert \bar U_m\Vert\,.
\end{equation}
 For the sake of simplicity, we will  consider a model with two communities, yet the extension to more communities is straightforward and would simply involve a linear term in the number of communities. We now describe our model. Once  the $n$ nodes have been split in two groups of size $n\sim n_1, n \sim n_2$, we assume that each edge between two different nodes is sampled independently with probability $p_n$ (or simply $p$ if not ambiguous) if they belong to the same community and $q$ otherwise. We assume that $q= \tau p$ for some constant $\tau\sim \frac{1}{\sqrt n}\ll 1$ and the features belonging to the same community are i.i.d.\@ and $\sigma$-sub-Gaussian, and $\Vert X\Vert\leq 1$. Those assumptions are not restrictive as they hold in many practical applications (and the second, always holds up to a constant). For a given community $i\in\{1,2\}$, we write $(\mu^i_m)_{m\leq N}$ its E-IGT. We impose that $p_n\sim \frac{\log(n)}{n}$ in this particular Bernoulli model. Sparse random graphs do not generally concentrate. Yet, according to \cite{keriven2020sparse}, in the relatively sparse case where $p_n \sim \frac{\log n}{n}$, we get the following spectral concentration bound of the normalized adjacency matrix:

\begin{lemma}
\label{lemma:keriven}
Let $\mathbfcal{A}$ be a symmetric matrix with independent entries $\mathbfcal{A}_{ij}$ obtained as above. If $n_1\sim n, n_2\sim n$, and p is relatively sparse as above, then for all $\nu > 0$, there is a constant $C_\nu$ 
such that, 
with high probability $\geq 1-n^{-\nu}$: 
\begin{equation}
    \Vert \mathbfcal{A}_{\text{norm}} - \mathbb{E}[\mathbfcal{A}]_{\text{norm}} \Vert \leq \frac{C_{\nu}}{\sqrt{\log n}}\,.
\end{equation}
\end{lemma}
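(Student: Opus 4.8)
The plan is to deduce this bound from the spectral concentration of relatively-sparse stochastic block models established in \cite{keriven2020sparse} — itself a refinement of the Laplacian concentration theorem of Le--Levina--Vershynin \cite{le2018concentration} — after verifying that our two-community model falls within its hypotheses; so the argument is really a reduction together with an account of why the rate is $1/\sqrt{\log n}$ and where the difficulty lies. The order of magnitude is dimensionally forced: at $p_n\sim\frac{\log n}{n}$, and since $q=\tau p$ with $\tau\sim n^{-1/2}$ perturbs every expected degree only by a factor $1+o(1)$, all expected degrees are $d\triangleq\Theta(\log n)$ uniformly over the two communities; writing $\mathbb{E}[\mathbfcal{A}]_{\text{norm}}=\bar D^{-1/2}(\mathbb{E}\mathbfcal{A}+I)\bar D^{-1/2}$ with $\bar D=\mathrm{diag}((\mathbb{E}\mathbfcal{A}+I)\mathbf{1})\asymp(\log n) I$, the two normalising factors jointly rescale by $\Theta(1/d)=\Theta(1/\log n)$; and since a random graph of expected degree $d$ satisfies $\Vert\mathbfcal{A}-\mathbb{E}\mathbfcal{A}\Vert=\Theta(\sqrt d)=\Theta(\sqrt{\log n})$, the normalised deviation comes out at $\sqrt{\log n}/\log n=1/\sqrt{\log n}$, which is the claimed rate.

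The natural starting point is to split $\mathbfcal{A}_{\text{norm}}=\tilde D^{-1/2}(\mathbfcal{A}+I)\tilde D^{-1/2}$, with $\tilde D=\mathrm{diag}((\mathbfcal{A}+I)\mathbf{1})$, and write $\mathbfcal{A}_{\text{norm}}-\mathbb{E}[\mathbfcal{A}]_{\text{norm}}$ as a centred-adjacency term $\bar D^{-1/2}(\mathbfcal{A}-\mathbb{E}\mathbfcal{A})\tilde D^{-1/2}$ plus two diagonal degree-mismatch terms absorbed against $\mathbfcal{A}_{\text{norm}}$ and $\mathbb{E}[\mathbfcal{A}]_{\text{norm}}$, whose operator norms are $\leq 1$ by the property recalled in Sec.\@ \ref{defIGT}. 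On the bulk of vertices, those whose degree is $d(1+o(1))$, both $\tilde D_{ii}$ and $\bar D_{ii}$ are $\asymp\log n$, so the centred-adjacency term restricted there is $\mathcal{O}(\Vert\mathbfcal{A}-\mathbb{E}\mathbfcal{A}\Vert/\log n)=\mathcal{O}(1/\sqrt{\log n})$ once one inserts the random-graph spectral bound $\Vert\mathbfcal{A}-\mathbb{E}\mathbfcal{A}\Vert=\mathcal{O}(\sqrt{\log n})$ of \cite{keriven2020sparse,le2018concentration}, while the diagonal mismatch there is $\mathcal{O}(1/\log n)$. A union bound over the few favourable events then yields probability $\geq 1-n^{-\nu}$ after renaming $C_\nu$.

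The step I expect to be the real obstacle — and the reason this bound is quoted from \cite{keriven2020sparse} rather than re-derived from an elementary matrix-Bernstein inequality — is the control of the atypical, extreme-degree vertices. The scale $p_n\sim\frac{\log n}{n}$ is exactly the connectivity threshold, so the degrees do \emph{not} concentrate in the relative sense: a vanishing but non-negligible set of vertices has anomalously low (possibly zero) or high degree, hence $\tilde D$ is not uniformly comparable to $\bar D$, and a direct sandwiching of $\mathbfcal{A}-\mathbb{E}\mathbfcal{A}$ by $\tilde D^{-1/2}$ and $\bar D^{-1/2}$ loses up to a full $\log n$ factor on those rows. Closing this gap is the technical heart of \cite{le2018concentration,keriven2020sparse}: one regularises — trimming or re-weighting the rows and columns of the extreme-degree vertices before normalising — shows via a discrepancy/Grothendieck-type combinatorial argument layered on the Bernstein bound that the trimmed centred matrix still has spectral norm $\mathcal{O}(\sqrt{\log n})$, and shows that the trimming changes the normalised matrix by only $\mathcal{O}(1/\sqrt{\log n})$ because each removed row has $\ell^2$ norm $\mathcal{O}(\sqrt{\log n})$. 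Combining this with the bulk estimate of the previous paragraph via the triangle inequality, on the intersection of the favourable events, gives the stated inequality.
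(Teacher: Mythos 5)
Your proof takes essentially the same route as the paper: the paper's entire proof of this lemma is the citation to \cite{keriven2020sparse} (building on \cite{le2018concentration}), and your argument is exactly that reduction, supplemented with a correct dimensional account of why the normalisation by degrees $\Theta(\log n)$ turns the adjacency deviation $\mathcal{O}(\sqrt{\log n})$ into the stated $\mathcal{O}(1/\sqrt{\log n})$ rate. The additional commentary on regularisation of extreme-degree vertices is a fair description of the cited works' technical content and does not change the fact that the argument, like the paper's, rests on quoting their theorem.
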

\begin{proof}
Can be found in \cite{keriven2020sparse}.
\end{proof}
Note that in general, $\mathbb{E}[\mathbfcal{A}]_{\text{norm}}\neq \mathbb{E}[\mathbfcal{A}_{\text{norm}}]$ and here, because of our model:
\begin{equation}
    \mathbb{E}[\mathbfcal{A}]_{\text{norm}}=\begin{bmatrix}\frac{p}{n_1p+n_2q}\mathbf{1}_{n_1\times n_1}  & \frac{q}{n_1p+n_2q}\mathbf{1}_{n_1\times n_2} \\ \frac{q}{n_1q+n_2p}\mathbf{1}_{n_2\times n_1}  & \frac{p}{n_1q+n_2p}\mathbf{1}_{n_2\times n_2}\end{bmatrix}\,,
\end{equation}
where $\mathbf{1}_{m\times n}$ is a matrix of ones of size $m\times n$. Now, note also that:
\begin{equation}
    \mathbb{E}[\bar U_m]=[\mu^1_m\mathbf{1}_{n_1}^T  ,\mu^2_m  \mathbf{1}_{n_2}^T]\,,
\end{equation}
Now, we prove that the IGT will concentrate around the E-IGT, under a Stochastic Block Model and sub-Gaussianity assumptions. We note that a bias term of the order of $\sqrt{n}\tau$ is present, which is consistent with our model assumptions. Note it is also possible to leverage the boundedness assumption yet it will lead to an additional constant term.
\begin{proposition}\label{prop:concentrate}Under the assumptions above, there exists $C>1$ s.t.\@ for all $N>0, \delta>0$, we have with high probability, larger than $1-\mathcal{O}(N\delta+n^{-\nu})$:
\begin{equation}
    \Vert S_1^NX-\bar S^N X\Vert =\mathcal{O}(\sigma \frac{1+C^N}{1-C}(\sqrt{\ln\frac{1}{\delta}}+\frac{1}{\sqrt{\log n}}))
    +\mathcal{O}(\tau \sqrt{n}\sum_{m\leq N}\Vert \mu^2_m-\mu^1_m\Vert)\,.
\end{equation}
\end{proposition}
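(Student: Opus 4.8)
The plan is to chain the deterministic bound of Prop.~\ref{prop:boundigt} (taken at $J=1$, so that $A_1=\mathbfcal{A}_{\text{norm}}$) with a per-term probabilistic analysis. Prop.~\ref{prop:boundigt} gives
\[
  \|S_1^N X-\bar S^N X\|\le \sqrt 2\sum_{m=0}^{N}\bigl\|\mathbfcal{A}_{\text{norm}}\bar U_m-\mathbb{E}\bar U_m\bigr\|\,,
\]
and for each $m$ I would split $\mathbfcal{A}_{\text{norm}}\bar U_m-\mathbb{E}\bar U_m$, exactly as displayed in the excerpt, into a \emph{spectral fluctuation} $(\mathbfcal{A}_{\text{norm}}-\mathbb{E}[\mathbfcal{A}]_{\text{norm}})\bar U_m$ and a \emph{bias/averaging} term $\mathbb{E}[\mathbfcal{A}]_{\text{norm}}\bar U_m-\mathbb{E}\bar U_m$. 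A structural remark used throughout: $\bar U_m$ is a function of the features $X$ and of the fixed isometries only, hence independent of the random graph; so the first term is handled with graph randomness and Lemma~\ref{lemma:keriven}, the second purely with feature randomness and the explicit block form of $\mathbb{E}[\mathbfcal{A}]_{\text{norm}}$ (which is deterministic given the community sizes).

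\textbf{Spectral term.} I would bound $\|(\mathbfcal{A}_{\text{norm}}-\mathbb{E}[\mathbfcal{A}]_{\text{norm}})\bar U_m\|\le\|\mathbfcal{A}_{\text{norm}}-\mathbb{E}[\mathbfcal{A}]_{\text{norm}}\|\,\|\bar U_m\|$ and apply Lemma~\ref{lemma:keriven}: the operator norm is $\le C_\nu/\sqrt{\log n}$ with probability $\ge1-n^{-\nu}$, and by the independence above this single event serves for all $m$ at once. The factor $\|\bar U_m\|$ is controlled by Lemma~\ref{bounded}, which already gives geometric growth $\mathcal O(C^m)$; tracking through the recursion the $\sigma$-sub-Gaussian scale of the rows — each layer applies $x\mapsto|(x-\mathbb{E}x)W_m|$, which is $1$-Lipschitz (isometry composed with modulus) and at worst doubles the variance proxy and the mean absolute deviation — refines this to $\mathcal O(\sigma C^m)$ with a fixed constant $C$ (of order $2$). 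Summing the geometric series $\sum_{m\le N}C^m=\tfrac{C^{N+1}-1}{C-1}$ produces the $\mathcal O\bigl(\sigma\,\tfrac{1+C^N}{|1-C|}\,\tfrac1{\sqrt{\log n}}\bigr)$ contribution.

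\textbf{Bias/averaging term.} Writing $C_i$ for the node set of community $i$ and using the block matrix $\mathbb{E}[\mathbfcal{A}]_{\text{norm}}$, a row of $\mathbb{E}[\mathbfcal{A}]_{\text{norm}}\bar U_m$ in community $i$ equals $(1-\lambda_i)\bar u^i_m+\lambda_i\bar u^{i'}_m$, where $i'$ is the other community, $\bar u^i_m\triangleq\frac1{n_i}\sum_{j\in C_i}(\bar U_m)_j$ is the empirical community mean, and $\lambda_i=\frac{n_{i'}q}{n_i p+n_{i'}q}=\mathcal O(\tau)$ under our scaling. Subtracting $\mathbb{E}\bar U_m$ — whose rows on $C_i$ are $\mu^i_m$ — and regrouping gives two fluctuation pieces $(1-\lambda_i)(\bar u^i_m-\mu^i_m)$, $\lambda_i(\bar u^{i'}_m-\mu^{i'}_m)$ and a deterministic cross-community bias $\lambda_i(\mu^{i'}_m-\mu^i_m)$. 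An easy induction shows the rows of $\bar U_m$ inside a community are i.i.d.\ (subtracting $\mathbb{E}\bar U_m$ is block-constant and the remaining operations act row-wise), so $\sqrt{n_i}\,\|\bar u^i_m-\mu^i_m\|$ is a normalized sum of i.i.d.\ centered sub-Gaussian vectors of scale $\mathcal O(\sigma C^m)$; a vector Bernstein/Hoeffding inequality gives $\sqrt{n_i}\,\|\bar u^i_m-\mu^i_m\|=\mathcal O(\sigma C^m\sqrt{\ln(1/\delta)})$ with probability $\ge1-\mathcal O(\delta)$, the $\lambda_i$-weighted cross term being lower order. The deterministic piece contributes $\sqrt{n_i}\,\lambda_i\,\|\mu^{i'}_m-\mu^i_m\|=\mathcal O(\tau\sqrt n\,\|\mu^2_m-\mu^1_m\|)$ to the Frobenius norm. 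Summing over $m=0,\dots,N$ and over the two communities, with a union bound over the $\mathcal O(N)$ concentration events plus the single Keriven event, yields the stated bound with failure probability $\mathcal O(N\delta+n^{-\nu})$.

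\textbf{Main obstacle.} The delicate point is the bookkeeping that prevents the spectral family from acquiring a spurious $\sqrt n$: the $\mathcal O(1/\sqrt{\log n})$ of Lemma~\ref{lemma:keriven} must be paired with the correct ``centered'' size $\mathcal O(\sigma C^m)$ of $\bar U_m$ rather than a crude Frobenius bound, and in the averaging term the empirical mean must be rescaled by $\sqrt{n_i}$ before concentration so that the $n_i$ summed variances collapse. The other subtle step is showing the growth constant $C$ can be taken independent of $N$; this rests entirely on the $1$-Lipschitzness of $x\mapsto|(x-\mathbb{E}x)W_m|$ and a mean-absolute-deviation estimate for sub-Gaussian variables, the same mechanism that underlies Lemma~\ref{bounded}.
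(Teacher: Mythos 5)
Your proposal follows essentially the same route as the paper: Prop.~\ref{prop:boundigt} at $J=1$, the triangle-inequality split into a spectral fluctuation term handled by Lemma~\ref{lemma:keriven} together with the geometric growth of $\Vert\bar U_m\Vert$ (Lemmas~\ref{bounded}, \ref{op-subg}), and a bias/averaging term whose block computation, community-mean concentration, and $\tau\sqrt{n}\Vert\mu^2_m-\mu^1_m\Vert$ cross-bias are exactly the content of Prop.~\ref{prop:concentrate_each}, finished by a union bound over the $\mathcal{O}(N)$ events plus the single spectral event. The argument is correct at the same level of rigor as the paper's, so no further comparison is needed.
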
The following proposition allows to estimate the concentration of each IGT order:
\begin{proposition}
\label{prop:concentrate_each}
Assume that each line of $X\in \mathbb{R}^{n\times P}$ is $\sigma$-sub-Gaussian. There exists $C>1,K>0,C'>1$ such that $\forall m,\delta>0$  with probability $1-8P\delta$, we have: 
\begin{equation}
   \Vert \mathbb{E}[\mathbfcal{A}]_{\text{norm}}\bar U_m-\mathbb{E}[\bar U_m]\Vert 
    \leq K\sigma C^m\sqrt{\ln \frac{1}{\delta}}+C'\sqrt{n}\tau\Vert \mu^2_m-\mu^1_m\Vert\,.
\end{equation}
\end{proposition}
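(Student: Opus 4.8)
The plan is to control $\Vert \mathbb{E}[\mathbfcal{A}]_{\text{norm}}\bar U_m-\mathbb{E}[\bar U_m]\Vert$ by exploiting the explicit block structure of both $\mathbb{E}[\mathbfcal{A}]_{\text{norm}}$ and $\mathbb{E}[\bar U_m]$ displayed above, reducing the matrix bound to a per-node estimate that can then be attacked with sub-Gaussian concentration. First I would split $\bar U_m$ into its two community blocks $\bar U_m = \begin{bmatrix}\bar U_m^1 \\ \bar U_m^2\end{bmatrix}$ where $\bar U_m^i\in\mathbb{R}^{n_i\times P}$ collects the rows of nodes in community $i$. Since $\mathbb{E}[\mathbfcal{A}]_{\text{norm}}$ acts block-wise with constant coefficients, applying it to $\bar U_m$ replaces each row of $\bar U_m^i$ by a fixed convex-type combination $\alpha_i \big(\sum_{k\in\mathcal C_1}(\bar U_m)_k\big) + \beta_i\big(\sum_{k\in\mathcal C_2}(\bar U_m)_k\big)$ of the column-sums over the two communities, where $\alpha_i,\beta_i$ are read off from the matrix (e.g.\ $\alpha_1 = \frac{p}{n_1p+n_2q}$, $\beta_1 = \frac{q}{n_1p+n_2q}$, etc.). The target $\mathbb{E}[\bar U_m]$ has all rows in block $i$ equal to $\mu^i_m$. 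So on block $i$ the difference, row by row, is $\alpha_i S_1 + \beta_i S_2 - \mu^i_m$ where $S_j \triangleq \sum_{k\in\mathcal C_j}(\bar U_m)_k$ is a sum of $n_j$ i.i.d.\ rows with mean $n_j\mu^j_m$.

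The key decomposition is then: write $\alpha_i n_1 = \frac{n_1 p}{n_1 p + n_2 q}$. With $q=\tau p$ and $\tau \sim 1/\sqrt n$, $n_1,n_2\sim n$, we have $\frac{n_1 p}{n_1 p + n_2 q} = \frac{1}{1+\tau n_2/n_1} = 1 - O(\sqrt n\,\tau\cdot\frac1{\sqrt n}) $; more precisely $\alpha_i n_1 = 1 - O(\tau)$ and $\beta_i n_2 = O(\tau)$ on the diagonal block, and symmetrically on the off-diagonal contribution. So on community $1$:
\begin{equation}
\alpha_1 S_1 + \beta_1 S_2 - \mu^1_m = \underbrace{\alpha_1(S_1 - n_1\mu^1_m)}_{\text{fluctuation}} + \underbrace{\beta_1(S_2 - n_2\mu^2_m)}_{\text{fluctuation}} + \underbrace{(\alpha_1 n_1 - 1)\mu^1_m + \beta_1 n_2 \mu^2_m}_{\text{bias}}\,.
\end{equation}
The bias term is $(\alpha_1 n_1 - 1)\mu^1_m + \beta_1 n_2\mu^2_m = -\beta_1 n_2\mu^1_m + \beta_1 n_2\mu^2_m = \beta_1 n_2(\mu^2_m - \mu^1_m)$ since $\alpha_1 n_1 + \beta_1 n_2 = 1$, and $\beta_1 n_2 = \frac{n_2 q}{n_1 p + n_2 q} = O(\sqrt n\,\tau)$ after substituting $q=\tau p$ and $n_i\sim n$ — this yields exactly the $C'\sqrt n\,\tau\Vert\mu^2_m - \mu^1_m\Vert$ term, up to summing the $\ell^2$ contributions over the $n_1$ identical rows (which contributes the $\sqrt{n_1}\sim\sqrt n$ and gets absorbed — here I'd double-check whether the statement's single $\sqrt n$ is meant to already include this row count or whether the per-row $\mu$'s are $O(1/\sqrt n)$ in norm; given Lemma~\ref{bounded} bounds $\Vert\bar U_m\Vert\le 2^m$ for the full matrix, each $\mu^i_m$ row has norm $O(2^m/\sqrt n)$, so the factors reconcile). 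For the fluctuation terms, $S_1 - n_1\mu^1_m$ is a sum of $n_1$ centered independent rows; since each row of $X$ is $\sigma$-sub-Gaussian, I would propagate sub-Gaussianity through the recursion $\bar U_{n+1} = |(\bar U_n - \mathbb{E}\bar U_n)W_n|$: the isometry $W_n$ preserves sub-Gaussian norms (coordinate-wise, up to dimension factors absorbed in $K$), centering preserves it, and $|\cdot|$ is $1$-Lipschitz so it cannot increase the sub-Gaussian constant — this gives that each row of $\bar U_m$ is $C^m\sigma$-sub-Gaussian for some $C>1$ tracking the constant inflation per layer. Then a standard Hoeffding/Hanson–Wright bound on the norm of a sum of $n_1$ independent centered sub-Gaussian vectors in $\mathbb{R}^P$ gives, with probability $1-\delta$ per coordinate (hence $1-8P\delta$ after a union bound over the $P$ columns and a constant number of terms), $\Vert \alpha_1(S_1 - n_1\mu^1_m)\Vert = O(\alpha_1\sqrt{n_1}\,C^m\sigma\sqrt{\ln(1/\delta)})$; since $\alpha_1 \sqrt{n_1} = O(1/\sqrt{n_1}) = O(1/\sqrt n)$ times the per-row count, this collapses to $K\sigma C^m\sqrt{\ln(1/\delta)}$.

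The main obstacle I anticipate is the propagation of the sub-Gaussian constant through the non-linear recursion cleanly enough to get a clean $C^m$ dependence rather than something worse: the modulus is benign (1-Lipschitz), and the isometry preserves the $\ell^2$ structure, but controlling sub-Gaussianity \emph{coordinate-wise} after applying $W_n$ (which mixes coordinates) requires care — one typically bounds the sub-Gaussian norm of a linear combination by $\sqrt P$ (or $\sqrt{\min(P, \text{rank})}$) times the max, and this $\sqrt P$ factor per layer would blow up; the resolution is to track the \emph{vector} sub-Gaussian norm (operator-norm sense) rather than per-coordinate, so that the isometry is exactly norm-preserving and only the centering-plus-modulus step costs a factor bounded by $2$, giving $C = 2$ (or a slightly larger absolute constant to be safe). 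Aggregating the two fluctuation terms and the bias via the triangle inequality, collecting constants into $K, C, C'$, and performing the union bound over $P$ columns and the $O(1)$ summands yields the claimed inequality. The remaining routine bookkeeping is verifying $\alpha_i n_i - 1 = -\beta_i n_{3-i}$ exactly and that $\beta_i n_{3-i} = O(\sqrt n\,\tau)$, which follows directly from $q = \tau p$, $n_i\sim n$ and a first-order expansion of $\frac{1}{1+O(\tau)}$.
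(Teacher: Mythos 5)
Your proposal follows essentially the same route as the paper's proof: the same row-wise use of the block structure of $\mathbb{E}[\mathbfcal{A}]_{\text{norm}}$, splitting $\mathbb{E}[\mathbfcal{A}]_{\text{norm}}\bar U_m-\mathbb{E}[\bar U_m]$ into centered community sums (handled by propagating sub-Gaussianity through the recursion to get the $\sigma C^m$ constant, then Hoeffding plus a union bound over the $P$ columns) and a deterministic bias proportional to $\mu^2_m-\mu^1_m$. The only bookkeeping slip is that $\beta_1 n_2=\tfrac{n_2q}{n_1p+n_2q}=\mathcal{O}(\tau)$ rather than $\mathcal{O}(\sqrt{n}\,\tau)$; the $\sqrt{n}$ in the statement arises precisely from summing the $n_1\sim n$ identical rows in Frobenius norm, which resolves the ambiguity you flagged and matches the paper's accounting.
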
This Lemma shows that a cascade of IGT linear isometries preserves sub-Gaussianity:
\begin{lemma}\label{op-subg}
If each line of $X$ is $\sigma$-sub-Gaussian, then each (independent) line of $\bar U_m$ is $C^m\sigma$-sub-Gaussian for some universal constant $C$.
\end{lemma}

In order to show the previous Lemma, we need to demonstrate that the modulus of a sub-Gaussian variable is itself sub-Gaussian, which is shown below:
\begin{lemma}\label{subgausscontractivity}
There is $C>0$, s.t. $X\in\mathbb{R}^P$ is $\sigma$-sub-Gaussian, then $|(X-\mathbb{E}X)W|$ is $C\sigma$-sub-Gaussian.
\end{lemma}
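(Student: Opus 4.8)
The plan is to prove Lemma~\ref{subgausscontractivity} by reducing the statement about the modulus of a linear image of a sub-Gaussian vector to two elementary facts: that an isometry preserves sub-Gaussianity of a random vector, and that a coordinate-wise Lipschitz map that is $1$-Lipschitz (here $x\mapsto |x|$) can only shrink sub-Gaussian tails. First I would recall the working definition of $\sigma$-sub-Gaussianity for a random vector $Y\in\mathbb{R}^P$: for every fixed unit direction $u$, $\langle Y-\mathbb{E}Y,u\rangle$ is a real $\sigma$-sub-Gaussian scalar, equivalently $\mathbb{E}\exp(\lambda\langle Y-\mathbb{E}Y,u\rangle)\le \exp(\lambda^2\sigma^2/2)$ for all $\lambda\in\mathbb{R}$ (up to the usual universal constant in the Orlicz-norm formulation). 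Set $Z\triangleq (X-\mathbb{E}X)W$. Since $W$ is an isometry, for any unit vector $u$ we have $\langle Z,u\rangle = \langle X-\mathbb{E}X, Wu\rangle$ and $\|Wu\|=\|u\|=1$, so $\langle Z,u\rangle$ is $\sigma$-sub-Gaussian; hence $Z$ is a centered $\sigma$-sub-Gaussian vector.

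Next I would handle the modulus. The map $\Phi(z)=|z|$ acting coordinate-wise is $1$-Lipschitz from $(\mathbb{R}^P,\|\cdot\|)$ to itself. The subtle point is that $|Z|$ is \emph{not} centered, so I must compare $|Z|-\mathbb{E}|Z|$ to $Z$. The clean route is the standard concentration-of-Lipschitz-functions argument: for a fixed unit $u$, the scalar function $z\mapsto \langle |z|, u\rangle$ is $1$-Lipschitz in $z$, and it is a general fact (e.g.\ via a symmetrization / Gaussian-comparison or a direct MGF bound using convexity of $\exp$ and Jensen over an independent copy) that if $Z$ is a $\sigma$-sub-Gaussian vector and $g:\mathbb{R}^P\to\mathbb{R}$ is $1$-Lipschitz, then $g(Z)-\mathbb{E}g(Z)$ is $C\sigma$-sub-Gaussian for a universal constant $C$. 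Applying this with $g(z)=\langle|z|,u\rangle$ for each unit $u$ gives that $|Z|$ is $C\sigma$-sub-Gaussian, which is exactly the claim with that same $C$. Alternatively, one can avoid invoking the Lipschitz-concentration black box and argue directly: bound $\mathbb{E}\exp(\lambda\langle |Z|-\mathbb{E}|Z|, u\rangle)$ by introducing an independent copy $Z'$ and using $|\,|z|-|z'|\,|\le |z-z'|$ coordinatewise, so $\langle |Z|-|Z'|,u\rangle \le \|Z-Z'\|$... more carefully, $\langle |Z|-|Z'|, u\rangle \le \langle |Z-Z'|, |u|\rangle$, and then control this with the sub-Gaussianity of $Z-Z'$ (which is $\sqrt{2}\sigma$-sub-Gaussian), paying a constant factor.

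I expect the main obstacle to be the handling of the recentering, i.e.\ passing from $|Z|$ to $|Z|-\mathbb{E}|Z|$ while keeping a single universal constant, since the naive symmetrization trick replaces $\sigma$ by $\sqrt{2}\sigma$ and one has to be careful that $\langle |Z|-|Z'|,u\rangle$ is not simply a linear functional of a single sub-Gaussian vector (it involves $|u|$, not $u$, after the Lipschitz bound). The cleanest fix is to note $\bigl|\langle |Z|,u\rangle - \langle |Z'|,u\rangle\bigr| \le \|Z-Z'\|$ directly from $\||z|-|z'|\|\le\|z-z'\|$ and Cauchy--Schwarz with $\|u\|=1$, so the relevant increment is controlled by $\|Z-Z'\|$; since $Z$ is $\sigma$-sub-Gaussian in every direction, $\|Z\|$ has a sub-Gaussian-type tail with parameter $O(\sigma\sqrt{P})$ in general, which is \emph{not} what we want dimension-free. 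Hence the direction-wise Lipschitz argument (controlling $z\mapsto\langle|z|,u\rangle$, which has Lipschitz constant $1$ regardless of $P$) is the one to push through, and invoking the standard result that $1$-Lipschitz functions of sub-Gaussian vectors are sub-Gaussian with a dimension-free constant is the right level of generality; I would state that as the key lemma and cite the relevant concentration reference, then the rest is the two-line isometry reduction above.
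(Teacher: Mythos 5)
Your opening reduction (recenter, then use that $W$ is unitary so every directional marginal $\langle (X-\mathbb{E}X)W,u\rangle=\langle X-\mathbb{E}X,Wu\rangle$ stays $\sigma$-sub-Gaussian) is exactly the paper's first step. The gap is in your key step for the modulus: the ``general fact'' you invoke --- that $g(Z)-\mathbb{E}g(Z)$ is $C\sigma$-sub-Gaussian with a universal $C$ for every $1$-Lipschitz $g:\mathbb{R}^P\to\mathbb{R}$ whenever $Z$ is a (directionally) $\sigma$-sub-Gaussian vector --- is not a theorem. Dimension-free Lipschitz concentration needs much more than marginal sub-Gaussianity (Gaussian measure, log-Sobolev/strongly log-concave laws, or independent coordinates together with convexity of $g$ \`a la Talagrand); for a dependent sub-Gaussian vector it can fail, and it fails for precisely the function you need. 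Take $Y$ uniform on $[0,\sigma]$ and $\epsilon_1,\dots,\epsilon_P$ i.i.d.\ Rademacher independent of $Y$, and set $Z=(\epsilon_1Y,\dots,\epsilon_PY)$. Conditionally on $Y$, Hoeffding gives $\mathbb{E}[e^{\lambda\langle Z,u\rangle}\mid Y]\le e^{\lambda^2Y^2/2}\le e^{\lambda^2\sigma^2/2}$ for every unit $u$, so $Z$ is a centered $\sigma$-sub-Gaussian vector; yet $|Z|=(Y,\dots,Y)$, and with $u=\mathbf{1}/\sqrt{P}$ the recentered variable $\langle|Z|,u\rangle-\mathbb{E}\langle|Z|,u\rangle=\sqrt{P}\,(Y-\mathbb{E}Y)$ has sub-Gaussian parameter of order $\sqrt{P}\,\sigma$, not $C\sigma$. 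So the Lipschitz black box cannot deliver a dimension-free constant here, and your fallback (symmetrization bounded by $\Vert Z-Z'\Vert$) hits the same $\sqrt{P}$ obstruction, as you yourself noticed but did not resolve.

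The paper's argument is more elementary and is the ingredient you are missing: for a fixed unit $u$ it splits the tail event over the $2^P$ sign patterns $\epsilon\in\{-1,1\}^P$ of the coordinates, writing $\mathbb{P}(\sum_i u_i|X_i|\ge t)\le\sum_{\epsilon}\mathbb{P}(\{\epsilon_iX_i\ge0\}\cap\{\sum_i\epsilon_iu_iX_i\ge t\})$; on each sign event the modulus becomes the linear functional in the unit direction $(\epsilon_iu_i)_i$, whose tail is controlled by directional sub-Gaussianity, and the union bound gives $2^Pe^{-t^2/(2C'^2\sigma^2)}$, the factor $2^P$ being absorbed into the sub-Gaussian constant via the tail characterization. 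Note this absorption makes the constant grow with $P$ (roughly like $\sqrt{P}$), which the counterexample above shows is unavoidable under marginal sub-Gaussianity alone; so the ``universal'' constant is really dimension-dependent even in the paper. Still, the paper has a complete proof, whereas your central step as stated rests on a false general principle and would need to be replaced by something like this sign-splitting argument (or by strengthening the hypothesis on $X$, e.g.\ independent coordinates or a log-Sobolev-type assumption).
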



\if False
\subsection{Louis: Graph model and concentration bounds}

From the process to generate $\bar U_m$, note that each row $\bar U^i_m$ for $i\leq n$ is independent. We will use the following Lemma:

\begin{lemma}
If $X\in\mathbb{R}^d$ is $\sigma$-norm-subgaussian (i.e., $\Vert X\Vert$ is subgaussian with parameter $\sigma$), then $|(X-\mathbb{E}X)W|$ is $C\sigma$-norm-subgaussian for some universal constant C.
\end{lemma}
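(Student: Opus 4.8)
The plan is to reduce the multivariate claim to a one–dimensional statement about the scalar random variable $\|X-\mathbb{E}X\|$, by exploiting the fact that the two operations appearing in $|(X-\mathbb{E}X)W|$ — right multiplication by the isometry $W$ and the coordinatewise modulus — both leave the Euclidean (Frobenius) norm unchanged. Concretely, set $\tilde X\triangleq|(X-\mathbb{E}X)W|$. The first step is to note that, since $W$ is an isometry, $\|(X-\mathbb{E}X)W\|=\|X-\mathbb{E}X\|$, and since the coordinatewise modulus preserves the absolute value of every entry it preserves the Frobenius norm; hence one gets the exact identity $\|\tilde X\|=\|X-\mathbb{E}X\|$. (If $W$ is only assumed non-expansive rather than isometric, the same reasoning gives $\|\tilde X\|\le\|X-\mathbb{E}X\|$, which is all that is needed below.)

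Next I would recall that "$X$ is $\sigma$-sub-Gaussian'' means precisely that $Z\triangleq\|X-\mathbb{E}X\|$ has a sub-Gaussian tail with parameter $\sigma$, i.e.\@ $\mathbb{P}(Z>t)\le 2e^{-t^2/(2\sigma^2)}$ for all $t>0$. By the norm identity of the first step, $\|\tilde X\|=Z$ already satisfies this bound, so the only remaining task is to control the \emph{re-centered} vector $\tilde X-\mathbb{E}\tilde X$ (re-centering is unavoidable, since applying a modulus destroys the zero mean). For this I would first bound the mean of $\tilde X$: $\mathbb{E}\|\tilde X\|=\mathbb{E}Z=\int_0^\infty\mathbb{P}(Z>t)\,dt\le\int_0^\infty\min\!\big(1,2e^{-t^2/(2\sigma^2)}\big)\,dt\le c_0\sigma$ for a universal constant $c_0$. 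Then by the triangle and Jensen inequalities, $\|\tilde X-\mathbb{E}\tilde X\|\le\|\tilde X\|+\|\mathbb{E}\tilde X\|\le Z+\mathbb{E}Z\le Z+c_0\sigma$, and I would plug this into the tail bound and split into two regimes: for $t\ge 2c_0\sigma$ one gets $\mathbb{P}(\|\tilde X-\mathbb{E}\tilde X\|>t)\le\mathbb{P}(Z>t-c_0\sigma)\le 2e^{-(t-c_0\sigma)^2/(2\sigma^2)}\le 2e^{-t^2/(8\sigma^2)}$, while for $t<2c_0\sigma$ the probability is trivially at most $1$, which is $\le 2e^{-t^2/(2C^2\sigma^2)}$ as soon as $C\ge 2c_0/\sqrt{2\ln 2}$. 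Taking $C=\max\{2,\,2c_0/\sqrt{2\ln 2}\}$ gives $\mathbb{P}(\|\tilde X-\mathbb{E}\tilde X\|>t)\le 2e^{-t^2/(2C^2\sigma^2)}$ for all $t>0$, i.e.\@ $\tilde X$ is $C\sigma$-sub-Gaussian with $C$ universal, as claimed.

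All of this is elementary and there is no genuine obstacle; the one point worth stating explicitly — handled in the second paragraph — is that passing from the automatically sub-Gaussian variable $\|\tilde X\|$ to the re-centered $\|\tilde X-\mathbb{E}\tilde X\|$ inflates the sub-Gaussian constant by no more than a universal factor, independent of the dimension $P$. This dimension-freeness is exactly what allows the induction of Lemma~\ref{op-subg}, which alternates modulus and re-centering $m$ times, to close with the geometric constant $C^m$ rather than something blowing up in $P$ or $m$. (If one reads "$\sigma$-sub-Gaussian'' in the stronger directional sense instead, i.e.\@ every marginal $\langle u,X-\mathbb{E}X\rangle$ being $\sigma\|u\|$-sub-Gaussian, the same argument still applies, since the norm-based hypothesis implies the directional one via $|\langle u,X-\mathbb{E}X\rangle|\le\|u\|\,\|X-\mathbb{E}X\|$, and the norm-based conclusion proved above for $\tilde X$ likewise implies the directional conclusion for $\tilde X$ with the same constant.)
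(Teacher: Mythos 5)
Your argument is correct, and it takes a genuinely different route from the paper's treatment of its counterpart statement (the appendix lemma asserting that $|(X-\mathbb{E}X)W|$ is $C\sigma$-sub-Gaussian). You reduce everything to the scalar $\Vert X-\mathbb{E}X\Vert$: right multiplication by the isometry and the entrywise modulus leave the Euclidean norm unchanged (or do not increase it when $W$ has orthonormal columns but reduced rank, as you note), so the only cost is the re-centering after the modulus, which you pay with $\mathbb{E}\Vert\tilde X\Vert\leq c_0\sigma$ and a shift-and-split of the tail; the same one-line recentering also absorbs the small mismatch between the hypothesis ``$\Vert X\Vert$ sub-Gaussian'' and your working variable $\Vert X-\mathbb{E}X\Vert$, so there is no gap there. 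The paper instead argues in the directional sense: it recenters, uses unitarity of $W$, and removes the modulus by conditioning on the $2^p$ sign patterns of the coordinates, obtaining $\mathbb{P}(\sum_i u_i|X_i|\geq t)\leq 2^p e^{-t^2/(2C'^2\sigma^2)}$ for a unit vector $u$. The two approaches buy different things. Yours is dimension-free: the constant is genuinely universal, which is precisely what the induction that alternates modulus and recentering $m$ times needs in order to close with parameter $C^m\sigma$ independent of the feature dimension; for the norm-based formulation you were asked to prove it is also quantitatively sharper, since the paper's $e^{p\ln 2}$ factor, once converted back into a sub-Gaussian parameter, effectively costs a factor of order $\sqrt{p}$ rather than a universal constant. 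The paper's sign-splitting argument, on the other hand, directly targets the stronger marginal (directional) notion of sub-Gaussianity that is used downstream in the per-column Hoeffding step; your norm-based conclusion recovers that marginal statement as well, via the comparison $|\langle u,\tilde X-\mathbb{E}\tilde X\rangle|\leq\Vert\tilde X-\mathbb{E}\tilde X\Vert$ that you point out at the end, so nothing is lost by your route.
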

\begin{proof}

\end{proof}
\begin{remark}
$C>1$, as ...
\end{remark}
Now, we will also use the fact that:

\begin{lemma}
If $X_1,...,X_k$ are $\sigma$-norm-subgaussian, then with high probability $1-e^{something}$, we get:
\begin{equation}
\Vert \frac 1n \sum_i X_i-\mathbb{E}X\Vert\leq ?
\end{equation}
\end{lemma}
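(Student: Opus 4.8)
The plan is to establish a sample-mean concentration bound for independent norm-sub-Gaussian vectors: if $X_1,\dots,X_n$ are independent with each $\Vert X_i-\mathbb{E}X\Vert$ being $\sigma$-sub-Gaussian, then the empirical mean deviates from $\mathbb{E}X$ by $\mathcal{O}\bigl(\sigma\sqrt{(P+\log\tfrac1\delta)/n}\bigr)$ with probability at least $1-\delta$ (reading $\sum_i$ as a sum of $n$ i.i.d.\@ terms, i.e.\@ $k=n$). First I would center the summands, setting $Y_i\triangleq X_i-\mathbb{E}X$, so that the $Y_i$ are independent, mean-zero, and each $\Vert Y_i\Vert$ is $\sigma$-sub-Gaussian; it then suffices to bound $\Vert\bar Y\Vert$ where $\bar Y\triangleq\frac1n\sum_i Y_i$.

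The key reduction is from the vector norm to one-dimensional projections. For any fixed unit vector $u\in S^{P-1}$, the scalar $\langle u,Y_i\rangle$ satisfies $|\langle u,Y_i\rangle|\le\Vert Y_i\Vert$ pointwise, so its tail is dominated by that of $\Vert Y_i\Vert$; hence $\langle u,Y_i\rangle$ is mean-zero and (up to a universal constant) $\sigma$-sub-Gaussian, with a bounded moment generating function. By independence, $\langle u,\bar Y\rangle=\frac1n\sum_i\langle u,Y_i\rangle$ is $\frac{\sigma}{\sqrt n}$-sub-Gaussian, yielding the scalar tail bound $\mathbb{P}(|\langle u,\bar Y\rangle|\ge t)\le 2e^{-cnt^2/\sigma^2}$ for a universal $c>0$.

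To pass from a single direction to the whole sphere I would take a $\tfrac12$-net $\mathcal N$ of $S^{P-1}$ with $|\mathcal N|\le 5^P$, use the standard fact $\Vert\bar Y\Vert\le 2\max_{u\in\mathcal N}|\langle u,\bar Y\rangle|$, and union bound over $\mathcal N$. This gives $\mathbb{P}(\Vert\bar Y\Vert\ge t)\le 2\cdot 5^P e^{-cnt^2/(4\sigma^2)}=\exp\!\bigl(\log 2+P\log 5-cnt^2/(4\sigma^2)\bigr)$, which is exactly the $1-e^{\text{something}}$ form of the statement; equating the right-hand side to $\delta$ and solving for $t$ fills in the placeholder and gives, with probability at least $1-\delta$,
\[
\Bigl\Vert \tfrac1n\sum_{i} X_i-\mathbb{E}X \Bigr\Vert \le C\sigma\sqrt{\frac{P+\log\frac1\delta}{n}}.
\]

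The main obstacle is the dimension dependence: the crude net argument costs a factor $\sqrt P$, which dominates when the feature dimension $P$ is large. The sharper route, needed to obtain the $\sqrt{\log P}$ scaling that is consistent with the $8P\delta$ failure probability appearing in Prop.~\ref{prop:concentrate_each}, is to bound the moment generating function of $\Vert\bar Y\Vert$ \emph{directly} (a norm-sub-Gaussian concentration argument rather than a reduction to projections), which replaces the $5^P$ net cardinality by a single $\log P$ factor. The delicate point in either approach is controlling the vector norm without independence across coordinates; the projection/net version sidesteps this cleanly at the price of the dimension factor, and is what I would carry out first before attempting the dimension-free sharpening.
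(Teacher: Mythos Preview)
The lemma you were handed is an unfinished placeholder: in the paper's source it sits inside an \texttt{\textbackslash if False \ldots\ \textbackslash fi} block and carries literal ``?'' and ``something'' markers, so there is no paper proof of it as a standalone statement.

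Where the paper actually needs such a bound is inside the proof of Proposition~\ref{prop:concentrate_each}, and the route taken there is \emph{coordinate-wise} rather than via a net. Each of the $P$ coordinates of $\bar U_m^i-\mu_m$ is a scalar $C^m\sigma$-sub-Gaussian variable (Lemma~\ref{op-subg}); applying scalar Hoeffding to each coordinate of the sum $\sum_{i\le n_1}(\bar U_m^i-\mu_m^1)$ and union-bounding over the $P$ coordinates and the two communities is exactly what produces the $8P\delta$ failure probability you flagged, together with the deviation $\sqrt{n_1}\sqrt2\,\sigma C^m\sqrt{\ln(1/\delta)}$. Solving $8P\delta=\delta'$ recovers the $\sqrt{\log P}$ dependence you anticipated.

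Your covering-net argument is correct and is the textbook alternative. The difference is where the dimension lands: the paper's coordinate-wise union bound puts $P$ multiplicatively in the failure probability (hence $\sqrt{\log P}$ in the deviation after reparametrizing), whereas your $\tfrac12$-net puts $5^P$ there (hence $\sqrt P$ in the deviation). Your suggested sharpening via a direct MGF bound on $\Vert\bar Y\Vert$ would also work, but the paper itself takes the simpler coordinate-wise route rather than a genuine norm-sub-Gaussian argument; so your ``sharper route'' is in fact a third option, distinct from both your net argument and what the paper does.
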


\begin{proposition}
Combining Lemma \ref{},\ref{},\ref{}, we get:
\end{proposition}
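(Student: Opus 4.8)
The plan is to prove Proposition~\ref{prop:concentrate} by stitching together the three concentration ingredients already isolated in the text — the telescoping bound of Prop.~\ref{prop:boundigt}, the spectral concentration of Lemma~\ref{lemma:keriven}, and the per-order feature bound of Prop.~\ref{prop:concentrate_each} — and then controlling the resulting sum over orders by a geometric series together with a union bound over the $N+1$ events. First I would specialize Prop.~\ref{prop:boundigt} to $J=1$, so that $A_1=\mathbfcal{A}_{\text{norm}}$ and
\[
\Vert S_1^N X-\bar S^N X\Vert\leq \sqrt{2}\sum_{m=0}^{N}\Vert(\mathbfcal{A}_{\text{norm}}-\mathbb{E})\bar U_m\Vert.
\]
This reduces the statement to controlling each summand $\Vert(\mathbfcal{A}_{\text{norm}}-\mathbb{E})\bar U_m\Vert$, for which I would use exactly the triangular decomposition displayed just before the proposition,
\[
\Vert(\mathbfcal{A}_{\text{norm}}-\mathbb{E})\bar U_m\Vert\leq \Vert(\mathbfcal{A}_{\text{norm}}-\mathbb{E}[\mathbfcal{A}]_{\text{norm}})\bar U_m\Vert+\Vert\mathbb{E}[\mathbfcal{A}]_{\text{norm}}\bar U_m-\mathbb{E}[\bar U_m]\Vert,
\]
splitting it into a graph (spectral) term and a feature (bias) term.

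For the spectral term I would pass to the operator norm, $\Vert(\mathbfcal{A}_{\text{norm}}-\mathbb{E}[\mathbfcal{A}]_{\text{norm}})\bar U_m\Vert\leq\Vert\mathbfcal{A}_{\text{norm}}-\mathbb{E}[\mathbfcal{A}]_{\text{norm}}\Vert\,\Vert\bar U_m\Vert$, bound the first factor by Lemma~\ref{lemma:keriven} (valid with probability $\geq 1-n^{-\nu}$ in the relatively-sparse regime $p_n\sim\frac{\log n}{n}$, $\tau\sim\frac1{\sqrt n}$), and control $\Vert\bar U_m\Vert$ through Lemma~\ref{bounded} together with the sub-Gaussian scale of the recentered iterates (Lemma~\ref{op-subg}), which refines the crude deterministic bound $\Vert\bar U_m\Vert\leq 2^m$ into a typical-size estimate of order $\sigma C^m$. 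This yields a spectral contribution of order $\sigma\frac{C^m}{\sqrt{\log n}}$. For the feature term I would invoke Prop.~\ref{prop:concentrate_each} directly, which (with probability $\geq 1-8P\delta$) gives $K\sigma C^m\sqrt{\ln\frac1\delta}+C'\sqrt n\,\tau\Vert\mu^2_m-\mu^1_m\Vert$, i.e. a fluctuation of scale $\sigma C^m$ plus the intrinsic $\sqrt n\,\tau$ bias produced by the off-diagonal blocks of $\mathbb{E}[\mathbfcal{A}]_{\text{norm}}$.

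It remains to assemble these estimates. I would take a union bound over the $m=0,\dots,N$ feature events (cost $\mathcal{O}(N\delta)$) and the single spectral event (cost $n^{-\nu}$), so that all $N+1$ summands are controlled simultaneously with probability $\geq 1-\mathcal{O}(N\delta+n^{-\nu})$, matching the claimed confidence. On this event, summing the two $\sigma C^m$-type contributions over $m$ produces the geometric factor $\sum_{m=0}^N C^m=\mathcal{O}\!\big(\frac{1+C^N}{1-C}\big)$, and collecting the $\delta$- and $\log n$-dependent pieces yields the first error term $\mathcal{O}\!\big(\sigma\frac{1+C^N}{1-C}(\sqrt{\ln\frac1\delta}+\frac1{\sqrt{\log n}})\big)$, while the bias pieces accumulate into $\mathcal{O}\!\big(\tau\sqrt n\sum_{m\leq N}\Vert\mu^2_m-\mu^1_m\Vert\big)$, which is precisely the stated bound.

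The main obstacle is the bookkeeping of the exponential constant $C^m$ and its interplay with the probability budget and the norm control of $\bar U_m$: one must fix a single $C>1$ large enough to dominate simultaneously the cascade growth of the sub-Gaussian parameter (Lemma~\ref{op-subg}), the $2^m$ growth of $\Vert\bar U_m\Vert$ (Lemma~\ref{bounded}), and the multiplicative constants in Prop.~\ref{prop:concentrate_each}, while keeping the leading behavior of the geometric sum at $C^N$. The subtle point is factoring $\sigma$ out of the spectral half of the bound, which requires replacing the deterministic $\Vert\bar U_m\Vert\leq 2^m$ by the sub-Gaussian typical scale $\sigma C^m$ rather than the worst case; this is the only place where the sub-Gaussian hypothesis, and not merely $\Vert X\Vert\leq 1$, is genuinely used in the graph term of the argument.
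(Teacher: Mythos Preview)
Your proposal is correct and follows essentially the same route the paper lays out in the text surrounding the proposition: Prop.~\ref{prop:boundigt} specialized to $J=1$, then the triangular split of each summand into a spectral piece (Lemma~\ref{lemma:keriven} combined with a norm bound on $\bar U_m$) and a feature piece (Prop.~\ref{prop:concentrate_each}), followed by a union bound over the $N+1$ orders and a geometric sum in $C^m$. Your flagging of the $\sigma$-extraction in the spectral half as the delicate step is exactly the point the paper alludes to when it remarks that one could instead leverage the boundedness assumption at the cost of an additional constant term.
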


\fbox{Model}: Nodes  intra-  and inter-communities are linked independently with some prescribed probabilities.\\
Sample first the community $C_1,...,C_n$ , and draw an edge between nodes $i,j$ with proba $\mathcal{B}er_{C_i,C_j}$. Hence the (stochastic) adjacency matrix is defined as 
\begin{equation}
    A_{ij} \sim Ber(B_{C_i C_j})
\end{equation}; 
where $B$ is the community probability matrix of size $K*K$ (with $K=$ number of different communities) defined as
\begin{equation*}
B = 
\begin{pmatrix}
p & q & \cdots & q \\
q & p & \cdots & q \\
\vdots  & \vdots  & \ddots & \vdots  \\
q & q & \cdots & p 
\end{pmatrix}
\end{equation*};
where $p=\alpha_n$ and $q= \tau \alpha_n << p$.\\
Each node has some features $X_i,X_j$, independant. We then design the averaging $A$ by renormalizing each row ( we want them to exactly sum to $1$). Then we get $\mathbb{E}[AX|C_1,...,C_n]=\mathbb{E}[A|C_1,...,C_n]\mathbb{E}[X|C_1,...,C_n]$.\\
\begin{lemma}
\label{subgaussian}
Let $X \sim subGaussian(0, \sigma_X^2)$ and $Y \sim subGaussian(0, \sigma_Y^2)$, hence $X+Y \sim subGaussian(0, (\sigma_X+\sigma_Y)^2)$, even if $X$ and $Y$ are not independent.
\begin{proof}
Applying Holder's inequality to the moment generating functions, we have :
\begin{equation}
    \mathbb{E}[|e^{t(X+Y)}|] \leq \mathbb{E}[(e^{tX})^p]^\frac{1}{p} \mathbb{E}[(e^{tY})^q]^\frac{1}{q}
\end{equation}
Taking $p=1+\frac{\sigma_Y}{\sigma_X}$ and $q=1+\frac{\sigma_X}{\sigma_Y}$, we obtain $\mathbb{E}[e^{t(X+Y)}] \leq e^{\frac{(\sigma_X + \sigma_Y)^2t^2}{2}}$.\\
By definition, $X+Y$ is sub-gaussian with parameter $(0, (\sigma_X+\sigma_Y)^2)$.
\end{proof}
\end{lemma}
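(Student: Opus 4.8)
The plan is to work directly from the moment-generating-function (MGF) characterization of centered sub-Gaussianity: a random variable $Z$ is $\sigma$-sub-Gaussian precisely when $\mathbb{E}[e^{tZ}]\leq e^{\sigma^2 t^2/2}$ for every $t\in\mathbb{R}$. I would therefore start from the two hypotheses $\mathbb{E}[e^{tX}]\leq e^{\sigma_X^2 t^2/2}$ and $\mathbb{E}[e^{tY}]\leq e^{\sigma_Y^2 t^2/2}$, and aim to control $\mathbb{E}[e^{t(X+Y)}]$. The crucial observation, which is exactly what renders the dependence between $X$ and $Y$ irrelevant, is that $e^{t(X+Y)}=e^{tX}e^{tY}$ is a product of two factors, so I can decouple them through Hölder's inequality, a deterministic integral inequality that never invokes independence.

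Concretely, I would fix $t\in\mathbb{R}$ and pick conjugate exponents $p,q>1$ with $\frac1p+\frac1q=1$, and write $\mathbb{E}[e^{t(X+Y)}]\leq (\mathbb{E}[e^{ptX}])^{1/p}(\mathbb{E}[e^{qtY}])^{1/q}$. Applying the two sub-Gaussian MGF bounds with the rescaled arguments $pt$ and $qt$ yields $\mathbb{E}[e^{t(X+Y)}]\leq \exp\bigl(\tfrac{t^2}{2}(p\sigma_X^2+q\sigma_Y^2)\bigr)$. It then remains only to choose $p,q$ so that the effective variance proxy $p\sigma_X^2+q\sigma_Y^2$ is as small as possible, and in particular equals $(\sigma_X+\sigma_Y)^2$.

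The right choice is $p=1+\sigma_Y/\sigma_X$ and $q=1+\sigma_X/\sigma_Y$: a one-line check gives $\frac1p+\frac1q=\frac{\sigma_X}{\sigma_X+\sigma_Y}+\frac{\sigma_Y}{\sigma_X+\sigma_Y}=1$, so these are admissible conjugate exponents, and substituting them produces $p\sigma_X^2+q\sigma_Y^2=\sigma_X^2+2\sigma_X\sigma_Y+\sigma_Y^2=(\sigma_X+\sigma_Y)^2$. Hence $\mathbb{E}[e^{t(X+Y)}]\leq e^{(\sigma_X+\sigma_Y)^2 t^2/2}$ for every $t$, which is exactly the claim. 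I would also remark that this choice is optimal for the method: minimizing $p\sigma_X^2+q\sigma_Y^2$ under the constraint $\frac1p+\frac1q=1$ via a single derivative recovers precisely $p-1=\sigma_Y/\sigma_X$, so no sharper constant is available from this argument.

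There is no genuine obstacle here; the only points requiring care are bookkeeping. I would dispatch the degenerate cases $\sigma_X=0$ or $\sigma_Y=0$ separately, where the corresponding variable is almost surely the constant $0$ and the sum trivially inherits the other's parameter, which also ensures the ratios $\sigma_Y/\sigma_X$ in the main argument are well defined. The essential conceptual content is simply that Hölder's inequality substitutes for the independence assumption one would normally use, where the MGF of a sum factorizes, at the cost of the weaker additive parameter $(\sigma_X+\sigma_Y)^2$ rather than $\sigma_X^2+\sigma_Y^2$.
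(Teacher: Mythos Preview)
Your proof is correct and follows exactly the same approach as the paper: apply H\"older's inequality to the moment generating function and choose the conjugate exponents $p=1+\sigma_Y/\sigma_X$, $q=1+\sigma_X/\sigma_Y$ to obtain the bound $(\sigma_X+\sigma_Y)^2$. Your additional remarks on the degenerate cases and the optimality of this choice of $p,q$ are welcome refinements but do not change the underlying argument.
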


\begin{proposition}
\label{prop:concentrate}
Suppose each line in our model features follow a subGaussian law that only depends on the community it belongs to ($X_i \sim subGaussian(\mathbb{E}[X_i], (\sigma_{C_i1}, ..., \sigma_{C_ip}))$).
Note $n_l$ the number of node in the community $l$ (among $K$ possible).\\
Thus for $m \geq 0$ with probability $\geq 1-2\sum\limits_{l=1}^K \sum\limits_{j=1}^p e^\frac{-t^2}{2 n_l \sigma_{m l j}^2}$,\\
We have \begin{equation}
    \Vert \mathbb{E}[A] \bar U_m - \mathbb{E}[\bar U_m]\Vert_{MAX} \leq t+\Vert \epsilon \Vert
\end{equation};
where $\Vert \Vert_{MAX}$ is the matrix max norm.
\begin{proof}
First one must point out that line $i$ from the matrix $\mathbb{E}[A] \bar U_m - \mathbb{E}[\bar U_m]$ is the following:
\begin{equation}
    \Vert \frac{1}{n_i} \sum\limits_{k \in C_i} (\bar U_m - \mathbb{E}[\bar U_m])_k + \epsilon_i \Vert
\end{equation}
Where $\epsilon$ is the matrix made of the sum of elements that does not belong to the community of node $i$. The normalization factor of line $i$ is $d_i = p n_i +(n-n_i)q$, and we assume $pn_i >> (n-n_i)q$. As a consequence, the sum of community members (with factor $p$) is leading the global sum with a normalization of $\frac{p}{d_i} \sim \frac{1}{n_i}$, and $\epsilon_i$ is a sum over non-members with a normalization of $\frac{q}{d_i}$ which is neglictable.\\
Lemma \ref{subgaussian} proves that each line (as sum of subGaussian variables) in $\bar U_m$ is also subGaussian. Note we can precisely compute the vector parameter, but the formula become increasingly complex (line $i$ of $\bar U_1$ is subGaussian with parameter $((\sum\limits_{j=1}^p W^{(1)}_{1j}\sigma_{C_ij})^2, ..., (\sum\limits_{j=1}^p W^{(1)}_{pj}\sigma_{C_ij})^2)$; where $W^{(1)}$ is the isometry $W_1$).\\
Hence for a given line $i$, with probability $\leq \sum\limits_{j=1}^p 2 e^\frac{-t^2}{2n_{C_i} \sigma_{mC_ij}^2}$, we have $max \{|\frac{1}{n_i} \sum\limits_{k \in C_i} (\bar U_m - \mathbb{E}[\bar U_m])_k| \} \geq t$.\\
Thanks to our model, the matrix $\mathbb{E}[A] \bar U_m - \mathbb{E}[\bar U_m]$ is made of K blocks in which the $n_l$ lines of each block $l$ are exactlty the same (up to $\epsilon$ matrix) beacause we have same law inside each community. As a consequence, for block $l \in [|1, K|]$ with probability $\leq \sum\limits_{j=1}^p 2 e^\frac{-t^2}{2n_{l} \sigma_{mlj}^2}$, we have $max \{|Block(l)| \} \geq t$.\\ 
Summing it on each block, we have \begin{equation*}
    \Vert \mathbb{E}[A] \bar U_m - \mathbb{E}[\bar U_m]\Vert_{MAX} \geq t
\end{equation*} with probability $\leq \sum\limits_{l=1}^K \sum\limits_{j=1}^p 2 e^\frac{-t^2}{2n_{l} \sigma_{mlj}^2}$.\\
Taking the invert event concludes the proof.
\end{proof}
\end{proposition}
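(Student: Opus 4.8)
The plan is to split the matrix $\mathbb{E}[\mathbfcal{A}]_{\text{norm}}\bar U_m-\mathbb{E}[\bar U_m]$ into a deterministic bias term and a mean-zero fluctuation term, to control each using the block structure of the model, and to combine them by the triangle inequality. Throughout I rely on Lemma \ref{op-subg}, which guarantees that each row of $\bar U_m$ is an independent $C^m\sigma$-sub-Gaussian vector, and I abbreviate the two row-normalizations of $\mathbb{E}[\mathbfcal{A}]_{\text{norm}}$ by $d_1=n_1p+n_2q$ and $d_2=n_1q+n_2p$, writing $C_1,C_2$ for the two node sets.

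First I would exploit the fact that both $\mathbb{E}[\mathbfcal{A}]_{\text{norm}}$ and $\mathbb{E}[\bar U_m]$ are constant on the community blocks, so every row of the difference indexed by a node of community $i$ is the \emph{same} vector; it therefore suffices to analyze one representative row per community. For a node $i\in C_1$, writing $(\bar U_m)_k=\mathbb{E}[(\bar U_m)_k]+\eta_k$ with $\mathbb{E}\eta_k=0$, the representative row decomposes as the fluctuation $\frac{p}{d_1}\sum_{k\in C_1}\eta_k+\frac{q}{d_1}\sum_{k\in C_2}\eta_k$ plus the bias $\big(\tfrac{n_1p}{d_1}-1\big)\mu^1_m+\tfrac{n_2q}{d_1}\mu^2_m$. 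A one-line computation gives $\tfrac{n_1p}{d_1}-1=-\tfrac{n_2q}{d_1}$, so the bias row collapses to $\tfrac{n_2q}{d_1}(\mu^2_m-\mu^1_m)$, and symmetrically $\tfrac{n_1q}{d_2}(\mu^1_m-\mu^2_m)$ for community $2$. Using $q=\tau p$ together with $n_1,n_2\sim n$ and $\tau\ll 1$, the prefactors $\tfrac{n_2q}{d_1},\tfrac{n_1q}{d_2}$ are $\mathcal{O}(\tau)$; since each bias row is repeated over $\sim n$ nodes, Frobenius accumulation gives $\Vert\text{bias}\Vert^2=\big[n_1(\tfrac{n_2q}{d_1})^2+n_2(\tfrac{n_1q}{d_2})^2\big]\Vert\mu^2_m-\mu^1_m\Vert^2=\mathcal{O}(n\tau^2)\Vert\mu^2_m-\mu^1_m\Vert^2$, i.e. the second term $C'\sqrt n\,\tau\,\Vert\mu^2_m-\mu^1_m\Vert$.

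For the fluctuation I would use $n_2q\ll n_1p$ to write $\tfrac{p}{d_1}\approx\tfrac1{n_1}$ and $\tfrac{q}{d_1}\approx\tfrac{\tau}{n_1}$, so the representative fluctuation row is the centered empirical mean $\tfrac1{n_1}\sum_{k\in C_1}\eta_k$ of $n_1$ independent centered $C^m\sigma$-sub-Gaussian vectors, plus a cross-community contribution whose sub-Gaussian proxy carries an extra factor $\tau$ and is hence negligible. Applying the scalar sub-Gaussian tail bound coordinate by coordinate and a union bound over the $P$ coordinates and the remaining discrete choices (the two communities, the two tails, and the within/cross split of each row) — the source of the $8P\delta$ failure probability — each coordinate of the representative row is $\mathcal{O}\big(\tfrac{C^m\sigma}{\sqrt{n_1}}\sqrt{\ln\tfrac1\delta}\big)$ with probability $1-8P\delta$. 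Summing over coordinates (absorbing $\sqrt P$ into $K$) gives a per-row $\ell^2$ bound of the same order, and because this identical row is repeated over $n_1\sim n$ nodes, the cancellation $n_1\cdot\tfrac1{n_1}$ in the Frobenius norm yields $\Vert\text{fluct}\Vert^2=\mathcal{O}((C^m\sigma)^2\ln\tfrac1\delta)$, i.e. the first term $K\sigma C^m\sqrt{\ln\tfrac1\delta}$. The triangle inequality then assembles the two estimates.

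The main obstacle is the bookkeeping that makes the fluctuation term independent of $n$: one must verify that the $n$-fold repetition of a single $\mathcal{O}(1/\sqrt n)$ row cancels exactly in the Frobenius norm, which hinges on the block-constant operator $\mathbb{E}[\mathbfcal{A}]_{\text{norm}}$ collapsing all within-community rows to the same empirical mean. A secondary care point is justifying that the cross-community sum (weight $q/d_1=\mathcal{O}(\tau/n_1)$) is genuinely negligible against the within-community mean, and that replacing the exact normalizations $p/d_1,q/d_1$ by their leading-order values $1/n_1,\tau/n_1$ perturbs only the constants $K,C'$ and not the stated rates.
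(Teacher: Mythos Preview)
Your core mechanics are the same as the paper's proof: use the block structure of $\mathbb{E}[A]$ so that all rows within a community coincide, split each representative row into the within-community empirical mean plus a cross-community remainder $\epsilon_i$, invoke sub-Gaussianity of the rows of $\bar U_m$, apply a coordinate-wise tail bound, and union-bound over coordinates and blocks. On that level the proposal and the paper agree step for step.

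Where you part ways is the \emph{target}, not the method. The statement asks for a bound on the entrywise max norm, for $K$ communities, with the cross-community piece left as an unspecified $\|\epsilon\|$, and with the failure probability written through per-coordinate proxies $\sigma_{mlj}$. You instead prove the Frobenius-norm inequality of the main-text Proposition~\ref{prop:concentrate_each}: two communities, an explicitly computed bias $C'\sqrt n\,\tau\,\|\mu^2_m-\mu^1_m\|$, a single proxy $C^m\sigma$ coming from Lemma~\ref{op-subg}, and the clean $1-8P\delta$ probability. Three concrete mismatches: (i) for the max-norm conclusion your Frobenius accumulation step (repeating the representative row $n_l$ times and summing squared norms) is unnecessary overhead --- the max is controlled as soon as every entry of the two representative rows is; (ii) your explicit bias computation overshoots what is asked, since the statement only requires isolating the cross-community sum as $\|\epsilon\|$; (iii) to recover the stated probability $\sum_{l}\sum_{j}2\exp\bigl(-t^{2}/(2n_l\sigma_{mlj}^{2})\bigr)$ you would have to track coordinate-wise sub-Gaussian parameters through the recursion via the closure-under-sums Lemma~\ref{subgaussian}, rather than collapsing everything to a single vector proxy $C^m\sigma$. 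None of this is a gap in the argument --- it is the right proof attached to the neighbouring proposition.
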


\begin{proposition}Let's note $\alpha_n$ the mean probability of connection (in our model). Make the same assumptions of Proposition \ref{prop:concentrate}.\\
Then, with probability $\geq (1-n^{-t})( 1-2\sum\limits_{l=1}^K \sum\limits_{j=1}^p e^\frac{-t^2}{2 n_l \sigma_{X l j}^2})$, we have the following concentration :
\begin{equation}
    \Vert S_k - \bar S_k \Vert \leq \sqrt{2} \sum\limits_{m=0}^{k-1} \Vert \epsilon_m \Vert + k\sqrt{2}t + 
    \sqrt{\frac{2}{log(n)}} \sum\limits_{m=0}^{k-1} \Vert \bar U_m \Vert
\end{equation}
\begin{proof}
Let's note $\alpha_n$ the mean probability of connection (in the particular Bernouilli model). Sparse random graphs do not concentrate (when the expected degree is very low in comparison to $log (n)$). But according to \cite{spectral_sbm}, in the relatively sparse case where $\alpha_n \sim \frac{log (n)}{n}$, the spectral concentration bound of the normalized adjacency matrix is (with probability $\geq 1-n^{-t}$) : 
\begin{equation}
    \Vert A - \mathbb{E}[A] \Vert \leq \frac{1}{\sqrt{log(n)}}
\end{equation}
As $|W|$ are contractive operator, we have $\Vert \bar U_1 - \mathbb{E}[\bar U_1] \Vert \leq \Vert X - \mathbb{E}[X] \Vert$. Thus concentrating $X$ implies the concentration of other orders of the IGT features.
We apply the inequality (with $m=0$) from Proposition \ref{prop:concentrate} to conclude.
\end{proof}
\end{proposition}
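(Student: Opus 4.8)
The plan is to reduce everything to the deterministic decomposition of Prop.~\ref{prop:boundigt} and then control the two resulting random sources --- the graph topology and the node features --- by two independent high-probability events. Specialising Prop.~\ref{prop:boundigt} to $J=1$, so that $A_1=\mathbfcal{A}_{\text{norm}}$, and to order $k-1$, gives
\[
\Vert S_k-\bar S_k\Vert \leq \sqrt{2}\sum_{m=0}^{k-1}\Vert(\mathbfcal{A}_{\text{norm}}-\mathbb{E})\bar U_m\Vert .
\]
For every order $m$ I would insert the expected normalised adjacency $\mathbb{E}[\mathbfcal{A}]_{\text{norm}}$ and split, exactly as in the display preceding Lemma~\ref{lemma:keriven},
\[
\Vert(\mathbfcal{A}_{\text{norm}}-\mathbb{E})\bar U_m\Vert \leq \Vert(\mathbfcal{A}_{\text{norm}}-\mathbb{E}[\mathbfcal{A}]_{\text{norm}})\bar U_m\Vert + \Vert\mathbb{E}[\mathbfcal{A}]_{\text{norm}}\bar U_m-\mathbb{E}[\bar U_m]\Vert ,
\]
isolating a spectral term (depending only on the random graph) from an ergodicity-defect term (depending only on the random features).

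For the spectral term I would use the operator-norm inequality $\Vert(\mathbfcal{A}_{\text{norm}}-\mathbb{E}[\mathbfcal{A}]_{\text{norm}})\bar U_m\Vert\leq\Vert\mathbfcal{A}_{\text{norm}}-\mathbb{E}[\mathbfcal{A}]_{\text{norm}}\Vert\,\Vert\bar U_m\Vert$, followed by Lemma~\ref{lemma:keriven} (absorbing $C_\nu$ into the constant). The key point is that the event $\{\Vert\mathbfcal{A}_{\text{norm}}-\mathbb{E}[\mathbfcal{A}]_{\text{norm}}\Vert\leq 1/\sqrt{\log n}\}$ is a single event on the graph that holds with probability $\geq 1-n^{-t}$ and controls every order $m$ at once; summing over $m$ with the $\sqrt{2}$ prefactor then yields the term $\sqrt{2/\log n}\,\sum_{m=0}^{k-1}\Vert\bar U_m\Vert$ without any union bound.

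For the ergodicity-defect term I would exploit that $\mathbb{E}[\mathbfcal{A}]_{\text{norm}}$ is block-constant, so $\mathbb{E}[\mathbfcal{A}]_{\text{norm}}\bar U_m$ replaces each row by the within-community average of $\bar U_m$ up to the small cross-community contribution weighted by $q/d_i$, which I would collect into $\epsilon_m$ (negligible since $\tau\sim 1/\sqrt n$ makes $p\,n_i\gg(n-n_i)q$). Controlling $\Vert\mathbb{E}[\mathbfcal{A}]_{\text{norm}}\bar U_m-\mathbb{E}[\bar U_m]\Vert$ thus amounts to concentrating an average of $\sigma$-sub-Gaussian rows about its mean inside each of the $K$ blocks, giving a bound of the form $t+\Vert\epsilon_m\Vert$ per order. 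To keep the overall failure probability a single sub-Gaussian tail in the order-$0$ parameters $\sigma_{Xlj}$ rather than a union over all $m$, I would apply this concentration only at $m=0$ and transfer it to higher orders through the non-expansiveness of the IGT maps (each $|(\cdot)W_m|$ preceded by $I-A_J$ is $1$-Lipschitz, cf.\ Prop.~\ref{prop:lip}): concentration of $X$ about its community means then forces concentration of every $\bar U_m$. Summing the $k$ orders produces $k\sqrt{2}\,t+\sqrt{2}\sum_{m=0}^{k-1}\Vert\epsilon_m\Vert$.

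Finally I would intersect the two governing events. Since the random graph and the random features are independent under the model, the spectral event (probability $\geq 1-n^{-t}$) and the feature event (probability $\geq 1-2\sum_{l}\sum_{j}e^{-t^2/(2n_l\sigma_{Xlj}^2)}$) are independent, so the conclusion holds on their intersection with the stated product probability. I expect the ergodicity-defect term to be the main obstacle: one must justify rigorously that the contractive transfer from $m=0$ to general $m$ genuinely avoids a per-order union bound while still delivering the clean $k\sqrt{2}\,t$ dependence --- a point in tension with Lemma~\ref{op-subg}, which tracks a $C^m\sigma$ growth rather than a pure contraction --- and that the leakage $\epsilon_m$ stays lower-order instead of contaminating the dominant within-community average.
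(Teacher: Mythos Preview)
Your proposal is correct and follows essentially the same route as the paper's proof: start from the $\sqrt{2}\sum_m$ bound of Prop.~\ref{prop:boundigt}, split each summand into a spectral piece controlled by the single adjacency-concentration event $\Vert\mathbfcal{A}_{\text{norm}}-\mathbb{E}[\mathbfcal{A}]_{\text{norm}}\Vert\leq 1/\sqrt{\log n}$, and an ergodicity piece handled by the block structure of $\mathbb{E}[\mathbfcal{A}]_{\text{norm}}$ (within-community average plus the leakage $\epsilon_m$), then invoke the contractivity of the maps $X\mapsto|(X-\mathbb{E}X)W_m|$ to propagate the $m=0$ feature concentration to all orders and avoid a union over $m$. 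Your explicit appeal to independence of the graph and feature randomness to obtain the product probability, and your flagging of the tension between this contractive transfer and the $C^m\sigma$ growth in Lemma~\ref{op-subg}, are exactly the soft spots of the paper's own (terse, draft) argument.
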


\begin{proposition}
If $W_m$ is unitary, and $A$ is positive, assume that $\mathbb{E}[AX]=\mathbb{E}X$, such that $X$ has variance $\sigma^2$.
\begin{align}
    \mathbb{E}[\Vert S_n-\bar S_n\Vert^2 ] & \leq 2\mathbb{E}[\Vert X\Vert^2 ] -2 \mathbb{E}[\langle AX ; \mathbb{E}[X]\rangle ] \\
    & \leq 2 \sigma^2
\end{align}
\begin{proof}
\begin{align}
    \mathbb{E}[\Vert S_n-\bar S_n\Vert^2 ] & = \mathbb{E}[\sum_{i=0}^n \Vert AU_i - \mathbb{E}[\bar U_i] \Vert^2 ] \\
    & = \mathbb{E}[\sum_i^n \Vert AU_i \Vert^2 + \Vert \mathbb{E}[\bar U_i] \Vert^2 -2\sum_i^n \langle AU_i; \mathbb{E}[\bar U_i] \rangle] \\
    & \leq \mathbb{E}[\sum_i^n \Vert AU_i \Vert^2] + \sum_i^n \Vert \mathbb{E}[\bar U_i] \Vert^2 -2 \mathbb{E}[\langle AX; \mathbb{E}[\bar X] \rangle] \\
\end{align}
Because from $A$ positive, we have $\sum\limits_{i=1}^{n} \langle AU_i; \mathbb{E}[\bar U_i] \rangle \geq 0$. \\
From Equation \eqref{eq:finite_energy}, we have $\sum\limits_{m=0}^{n} \Vert AU_m\Vert^2\leq \Vert U_0\Vert^2$.\\
Similar to Proposition \ref{prop:lip}, $\bar S_n$ is contractive. Hence, $\sum\limits_{i=1}^{n} \Vert \mathbb{E}[\bar U_i] \Vert^2 \leq \mathbb{E}[\Vert \bar U_0\Vert^2] - \mathbb{E}[\Vert \bar U_{n+1} \Vert^2]$. \\
Notifying that $\bar U_0 = U_0 = X$ concludes the proof.
\end{proof}
\end{proposition}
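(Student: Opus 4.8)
The plan is to expand the squared Frobenius norm of the difference of the two tuples, bound each ``diagonal'' energy term by the non-expansiveness properties already established, and kill all but one cross term using the ergodicity hypothesis $\mathbb{E}[AX]=\mathbb{E}X$.

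\emph{Step 1 (expansion).} Since $S_n=\{AU_0,\dots,AU_n\}$ and $\bar S_n=\{\mathbb{E}\bar U_0,\dots,\mathbb{E}\bar U_n\}$ with $U_0=\bar U_0=X$, the norm of a concatenation gives $\Vert S_n-\bar S_n\Vert^2=\sum_{i=0}^n\Vert AU_i-\mathbb{E}\bar U_i\Vert^2=\sum_{i=0}^n\big(\Vert AU_i\Vert^2+\Vert\mathbb{E}\bar U_i\Vert^2-2\langle AU_i,\mathbb{E}\bar U_i\rangle\big)$. Taking expectations and singling out the $i=0$ cross term (where $\mathbb{E}\bar U_0=\mathbb{E}X$), it suffices to control three pieces: $\mathbb{E}\sum_i\Vert AU_i\Vert^2$, $\sum_i\Vert\mathbb{E}\bar U_i\Vert^2$, and $\sum_{i\geq1}\mathbb{E}\langle AU_i,\mathbb{E}\bar U_i\rangle$.

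\emph{Step 2 (finite energy of both trees).} Combining Lemma \ref{proj} (valid since $0\preccurlyeq A\preccurlyeq I$, e.g.\ by Lemma \ref{lemma:positive} when $A=A_J$) with the identity $\Vert U_{n+1}\Vert=\Vert\,|(I-A)U_nW_n|\,\Vert=\Vert(I-A)U_n\Vert$ — the pointwise modulus and the isometry $W_n$ both preserve the Frobenius norm — yields $\Vert AU_n\Vert^2+\Vert U_{n+1}\Vert^2\leq\Vert U_n\Vert^2$, and telescoping gives $\sum_{m=0}^n\Vert AU_m\Vert^2\leq\Vert U_0\Vert^2=\Vert X\Vert^2$. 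Performing the same computation on the E-IGT recursion $\bar U_{n+1}=|(\bar U_n-\mathbb{E}\bar U_n)W_n|$ and then taking expectations gives $\Vert\mathbb{E}\bar U_n\Vert^2=\mathbb{E}\Vert\bar U_n\Vert^2-\mathbb{E}\Vert\bar U_{n+1}\Vert^2$, hence $\sum_{i=0}^n\Vert\mathbb{E}\bar U_i\Vert^2\leq\mathbb{E}\Vert X\Vert^2$; this is exactly the contractivity recorded in Prop.\ \ref{prop:lip2}.

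\emph{Step 3 (cross terms and conclusion).} For $i\geq1$ both $U_i$ and $\bar U_i$ have nonnegative entries (they are pointwise moduli), and $A$ has nonnegative entries (Lemma \ref{lemma:positive}); therefore $AU_i$ has nonnegative entries and $\langle AU_i,\mathbb{E}\bar U_i\rangle\geq0$, so discarding these terms only enlarges the bound. Together with Step 2 this gives $\mathbb{E}\Vert S_n-\bar S_n\Vert^2\leq 2\mathbb{E}\Vert X\Vert^2-2\mathbb{E}\langle AX,\mathbb{E}X\rangle$, the first claimed inequality. Finally, by linearity of expectation and the hypothesis $\mathbb{E}[AX]=\mathbb{E}X$ we get $\mathbb{E}\langle AX,\mathbb{E}X\rangle=\langle\mathbb{E}[AX],\mathbb{E}X\rangle=\Vert\mathbb{E}X\Vert^2$, so the right-hand side equals $2(\mathbb{E}\Vert X\Vert^2-\Vert\mathbb{E}X\Vert^2)=2\sigma^2$, which is the second inequality.

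\emph{Main obstacle.} The one step that is not pure bookkeeping is the sign of the surviving cross terms: it relies on the \emph{entrywise} positivity of $A$ (not merely $A\succcurlyeq0$) combined with the fact that beyond layer $0$ the feature maps $U_i,\bar U_i$ are entrywise nonnegative. One must be careful that this nonnegativity does not hold at layer $0$, where $U_0=X$ is unconstrained, which is precisely why the $i=0$ cross term is kept explicit and only removed at the very end via the ergodicity assumption. Everything else reduces to the telescoping non-expansiveness already packaged in Lemmas \ref{lemma:positive}, \ref{proj} and Propositions \ref{prop:lip}, \ref{prop:lip2}.
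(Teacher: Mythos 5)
Your proof is correct and follows essentially the same route as the paper's: expand the concatenation norm, bound $\sum_m\Vert A U_m\Vert^2\leq\Vert X\Vert^2$ and $\sum_m\Vert\mathbb{E}\bar U_m\Vert^2\leq\mathbb{E}\Vert X\Vert^2$ by the telescoping non-expansiveness of Lemma \ref{proj} and Prop.\ \ref{prop:lip2}, drop the cross terms for $m\geq 1$ using entrywise nonnegativity of $A$ and of the moduli $U_m,\bar U_m$, and resolve the $m=0$ cross term with $\mathbb{E}[AX]=\mathbb{E}X$ to obtain $2\sigma^2$. Your explicit remark that entrywise nonnegativity fails at layer $0$ (so that term must be kept and handled by ergodicity) is in fact a slightly more careful rendering of the step the paper states loosely.
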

\fi

\subsection{Optimization procedure}
\label{sec:optim}

We now describe the optimization procedure of each of our operators $\{W_n\}$, that consists in a greedy layer-wise procedure \cite{greedy}. Our goal is to specify $|W_n|$ such that it leads to a demodulation effect, as well as to have a fast energy decay. Demodulation means that the envelope of a signal should be smoother, whereas fast decay will allow the use of shallower networks. In practice, it means that at depth $n$, the energy along the direction of averaging should be maximized, which leads to consider:
\begin{equation}
\max_{W^TW=I} \Vert A_J| (I-A_J)U_{n}W|\Vert\,.
\end{equation}
As observed in \cite{oyallon2020interferometric}, because the extremal points of the $\ell^2$ ball are the norm preserving matrix, this optimization problem is equivalent to:
\begin{equation}
\max_{\Vert W\Vert_2\leq 1} \Vert A_J| (I-A_J)U_{n}W|\Vert\,.
\end{equation}
Note that this can be approximatively solved via a projected gradient procedure which projects the operator $W$ on the unit ball for the $\ell^2$-norm at each iteration. Furthermore, contrary to \cite{oyallon2020interferometric}, we might constrain $W$ to have a rank lower than the ambient space, that we denote by $k$: increasing $k$ as well as the order $N$ allows to potentially increase the capacity of our model, yet we as discussed in the next section, this wasn't necessary to obtain accuracies at the level of the state of the art.

\section{Numerical Experiments}
\label{sec:xp}
We test our unsupervised IGT features on a synthetic example, and on challenging semi-supervised  tasks, in various settings that appeared in the graph community labeling litterature: the \textbf{full }  \cite{dropedge}, \textbf{predefined} \cite{kipf2016semi} and \textbf{random splits}  \cite{kipf2016semi} of Cora, Citeseer, Pubmed, as well as the WikiCS dataset.


\subsection{Synthetic example}
\label{sec:synthetic_experiment}
As GCNs progressivly apply a smoothing operator on subsequent layers, deeper features are less sensitive to intra-community variability. This progressive projection can have a big impact on datasets where discriminative features are close in average, yet have very different distributions over several communities. In order to underline this phenomenon, we propose to study the following synthetic example: following the model and notations of Sec. \ref{sec:model}, we consider two communities, with an equal number of samples in each and we assume that $P=1$, $J=1$, $p=0.001$ and $q=0$ and $n=10000$. Here, we assume the features are centered Gaussians with variance $\sigma_1=1$ for the first community and $\sigma_2=\sigma_1+\Delta \sigma$ for the second. In other words, $\Delta\sigma$ controls the relative spread of the community features. Our goal is to show numerically that an IGT representation is, by construction, more amenable to distinguish the two communities than a GCN.
\begin{wrapfigure}{r}{0.5\textwidth}
\begin{center}
\centerline{\includegraphics[width=0.5\columnwidth]{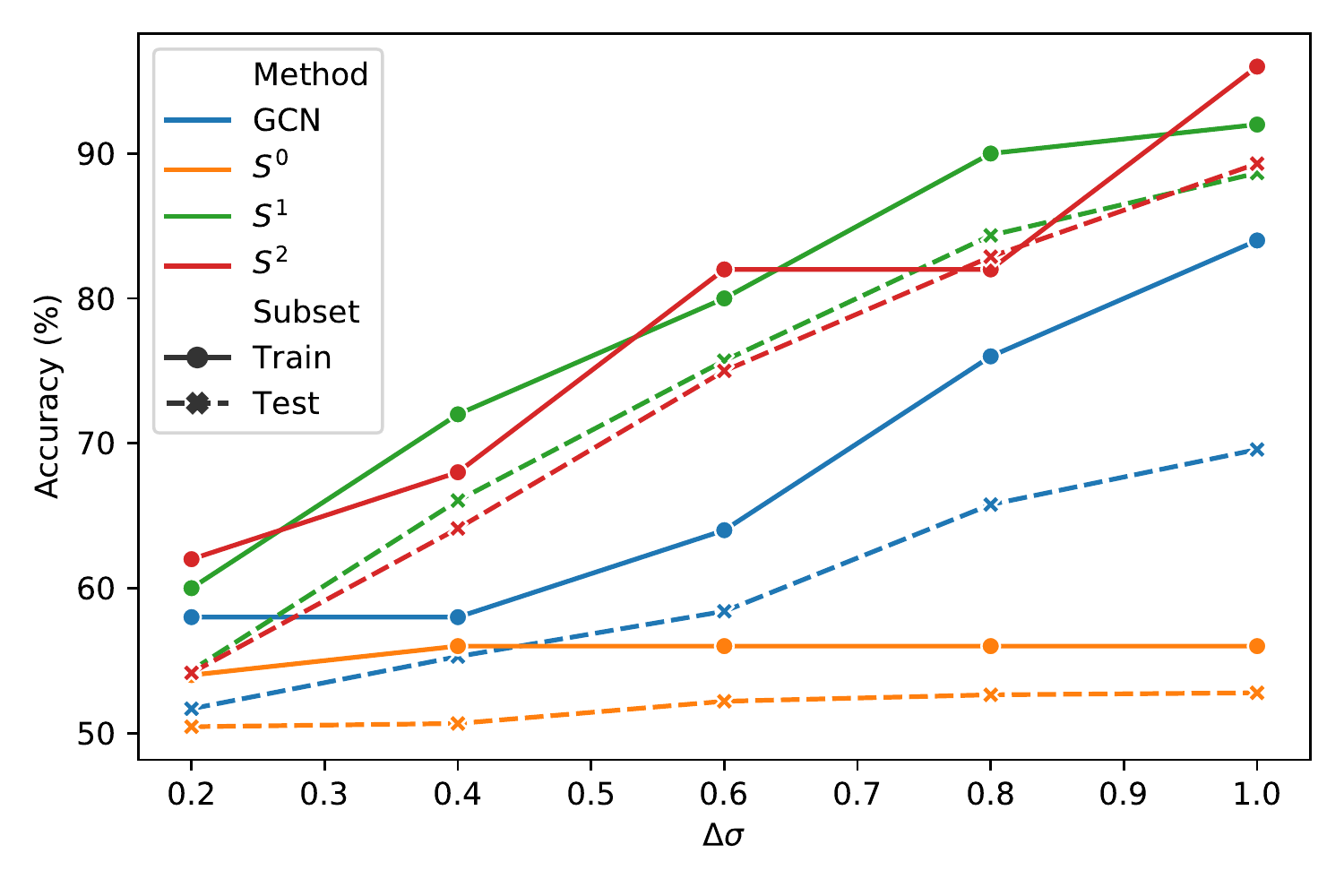}}
\caption{Accuracies of GCN against our  method on a synthetic example, for several values of $\Delta \sigma$.\vspace{-10pt}}\label{fig:toy_plot}\end{center}
\end{wrapfigure}

As a training set, we randomly sample 20 nodes and use the remaining ones as a validation and test set. For IGT parameters, we pick $J=2$, $k=1$ and $N\in\{0,1,2\}$. On top of our standardized IGT features, our classification layer consists in a 1-hidden layer MLP of width 128. We train the IGT operators for 50 epochs. We compare our representation with a standard GCN \cite{kipf2016semi} that has two hidden layers and a hidden dimension of 128 for the sake of comparison. Both supervised architectures are trained during 200 epochs using Adam \cite{kingma2014adam} with a learning rate of 0.01 ($\epsilon=10^{-8},\beta_1=0.9,\beta_2=0.999$). We discuss next the accuracies averaged over 5 different seeds on Fig. \ref{fig:toy_plot} for various representations and values of $\Delta\sigma$.

We observe that an order 0 IGT performs poorly for any values of $\Delta\sigma$, which is consistent because the linear smoothing will dilute important informations for node classification. However, non-linear model like IGT (of order larger than 0) or GCN outperforms this linear representation. The IGT outperforms the GCN for all values of $\Delta\sigma$ because as Sec. \ref{sec:IGT} shows, by construction, this representation extracts explicitly the high frequency of the graph, whereas a GCN can only smooth its features and thus will tend to lose in discriminability despite supervision. We note that orders 1 and 2 perform similarly, which is not surprising given the simplistic assumption of this  model: all the informative variability is contained in the order 1 IGT and the order 2 is likely to only bring noisy features in this specific case.

\subsection{Supervised community detection}
\label{sec:supervised_experiments}
First, we describe our experimental protocol on the datasets Cora, CiteSeer and PubMed. Each dataset consists in a set of bag-of-words vectors with citation links between documents. They are made of respectively 5.4k, 4.7k, 44k and 216k edges with features of size respectively 1.4k, 3.7k, 0.5k and 0.3k. For the three first datasets, we test our method in three semi-supervised settings, which consist in three different approaches to split the dataset into  train, validation and test sets: at this stage, we would like to highlight we are one of the few methods to try its architecture on those three  splits (which we discuss for clarity), which allows to estimate the robustness to various sample complexity. Each accuracy is averaged over 5 runs and we report the standard-deviation in the Appendix. The most standard split is the \textbf{predefined split} setting: each training set is provided by \cite{kipf2016semi} and consist in 20 training nodes per class, which represent a fraction 0.052, 0.036, and 0.003 of the data for Cora, CiteSeer and PubMed respectively. 500 and 1000 nodes are respectively used as a validation and test set. Then, we consider the \textbf{random split} setting introduced in \cite{kipf2016semi}, which is exactly as above except that we randomly extract 5 splits of the data, and we average the accuracies among those splits. Finally, we consider the \textbf{full split} setting which was used in \cite{dropedge} and employs 5 random splits of a larger training set: a fraction 0.45, 0.54 and 0.92 of the whole labeled datasets respectively. Note that each of those tasks is transductive yet our method would require minimal adaptation to fit an inductive pipeline. For WikiCS, we followed the only guideline of \cite{mernyei2020wiki}.

Our architectures are designed as follow: an IGT representation only requires 4 hyper-parameters: an adjacency matrix $\mathbfcal{A}$, an output-size $k$ for each linear isometry, a smoothness parameter $J$ and an IGT order $N$. Given that the graphs are undirected, $\mathbfcal{A}$ satisfies the assumption described in Sec. \ref{sec:IGT}, yet it would be possible to symmetrize the adjacency matrix of a directed graph. This corresponds to our unsupervised graph representation that will be then fed to a supervised classifier. Sec. \ref{sec:model} shows that our IGT representation should concentrate around the E-IGT of their respective community, which means that they should be well separated by a Linear classifier. However, there might be more intra-class variability than the one studied from the lens of our model, thus we decided to use potentially deeper models, e.g., Multi Layer Perceptrons (MLPs) as well as Linear classifiers. We use the same fixed MLP architecture for every dataset: a single hidden layer with 128 features. Our linear model is simply a fully connected layer, and each model is fed to a cross-entropy loss. We note that our MLP is shallow, with few units, and does not involve the graph structure by contrast to semi-supervised GCNs: we thus refer to the combination of IGT and a MLP or a Linear layer as an unsupervised graph representation for node labeling. Note also that a MLP is a scalable classifier in the context of graphs: once the IGT representation is estimated, one can learn the weight of the MLP by splitting the training set in batches, contrary to standard GCNs.

We now describe our training procedure as well as the regularizaton that we incorporated: it was identical for any splits of the data. We optimized our pipeline on Cora and applied it on Citeseer, Pubmed and WikiCS, unless otherwise stated. Each parameter was cross-validated on a validation set, and we report the test accuracy on a test set that was not used until the end.
First, we learn each $\{W_m\}_{m\leq N}$ via Adam for 50 epochs and a learning rate of 0.01. Once computed, the IGT features are normalized and are fed to our supervised classifier, that we train again using Adam and a learning rate of 0.01 for at most 200 epochs, with a early stopping procedure and a patience of 30. A dropout ratio which belongs to $\{0, 0.2, 0.4, 0.5, 0.6, 0.8\}$ is potentially incorporated to the one hidden layer of the MLP. On CiteSeer and PubMed, our procedure selected 0.2, on WikiCS 0.8, whereas no-dropout was added on Cora. Furthermore, we incorporated an $\ell^2$-regularization with our linear layer which we tried amongst $\{0, 0.001, 0.005, 0.01\}$: we picked 0.005 via cross-validation. We discuss here WikiCS: by cross-validation, we used $J=1, N=1, k=150$ for the linear experiment and $J=2, N=1, k=35$ for the MLP experiment. For the other datasets and every splits, we used $N=2$ and $k=10$: we note that less capacity is needed compared to WikiCS, because those datasets are simpler.  For the three other datasets,  for both the \textbf{predefined} and \textbf{random splits}, we fix $J=4$. For the \textbf{full split}, we used $J=1$ for each dataset: we noticed that increasing $J$ degrades the performance, likely because less invariance is required and can be learned from the data, because more samples are available. This makes sense, as the amount of smoothing depends on the variability exhibited by the data. Thanks to the amount of available data, the supervised classifier can estimate the degree of invariance needed for the classification task, which was not possible if using only 20 samples per community.

Tab. \ref{table:semi} reports the semi-supervised accuracy for each dataset, in various settings, and compares standard supervised~\cite{gao2019graph, kipf2016semi,dropedge,chen2018fastgcn,chen2020measuring,velivckovic2017graph} and unsupervised \cite{perozzi2014deepwalk,velickovic2019deep,garcia2017learning,qu2019gmnn,hamilton2017inductive} architectures. Note that each supervised model is trained in an end-to-end manner. The unsupervised models are built differently and we discuss them now briefly: for instance,  EP~\cite{garcia2017learning}, uses a node embedding with a rather different architecture from GCNs. Also, DeepWalk~\cite{perozzi2014deepwalk} is analogous to a random walk, GraphSage~\cite{hamilton2017inductive} learns an embedding with a local criterion, DGI~\cite{velickovic2019deep} relies on a mutual information criterion and finally \cite{qu2019gmnn} relies on a random field model. Note that each of those models are significantly different from ours and they do not have the same  theoretical foundations and properties as ours. As expected, accuracy in the \textbf{full} setting is higher than the others. We observe that in general, supervised models outperforms  unsupervised models by a large margin except on WikiCS and Citeseer for the \textbf{random} and \textbf{predefined} splits, for which an IGT obtains better accuracy: it indicates that it has a better inductive bias for this dataset. Note that an IGT obtains competitive accuracies amongst unsupervised representations and this is consistent with the fact that those datasets, discussed above, are likely to satisfy the hypothesis described in Sec. \ref{sec:model}. In general, a MLP outperforms a linear layer (because it has better approximation properties), except on Citeseer for which the accuracy is similar, which seems to validate that the data of Citeseer follow the model that we introduced in \ref{sec:model} on Citeseer, that leads to linear separability. 

\if False
\begin{table}[t]
\caption{Classification accuracies for the full supervised settings on Cora, Citeseer and Pubmed datasets.}
\label{table:full}
\begin{center}
\begin{small}
\begin{sc}
\begin{tabular}{lcccc}
\toprule
Method & Cora & Citeseer & Pubmed & Data augmentation\\
\midrule
GCN \cite{dropedge}  & 86.6 & 79.3 & 70.7 & $\times$ \\
Fastgcn \cite{fast_gcn} & 86.5 & - & 88.8 & $\times$\\
DropEdge \cite{dropedge}    & 88.2 & 80.5 & 91.7 & $\surd$ \\
\cite{adapt}   & 87.4 & 79.7 &  90.6 & $\surd$ \\
\midrule
MLP on $S_J^N$  (ours)   & 0 & 0 & 0 & $\times$\\
\bottomrule
\end{tabular}
\end{sc}
\end{small}
\end{center}
\vskip -0.1in
\end{table}
\fi

\begin{table}[t]
\caption{Classification accuracies (in \%) for each splits of Cora, Citeseer, Pubmed as well as WikiCS.}
\vspace{-4pt}
\label{table:semi}
\begin{center}
\begin{small}
\begin{tabular}{lcccccccccc}
\toprule
Method/Dataset & \multicolumn{3}{c}{Cora}&\multicolumn{3}{c}{Citeseer}&\multicolumn{3}{c}{Pubmed}&WikiCS\\

& Full& Rand& Pred& Full& Rand& Pred& Full& Rand& Pred\\
\midrule
Supervised&\\
\midrule
GAT \cite{velivckovic2017graph}& &&83.0     & & & 72.5 & & & 79.0&77.2 \\
GCN \cite{kipf2016semi}& &80.1&81.5 & & 67.9&  70.3 & &\textbf{78.9} & 79.0& \textbf{77.7}\\
Graph U-Net \cite{gao2019graph} & &&\textbf{84.4}&&&\textbf{73.2}&&&\textbf{79.6}&\\
DropEdge \cite{dropedge} &  \textbf{88.2}&& & \textbf{80.5} &&& \textbf{91.7}&&&\\
FastGCN \cite{chen2018fastgcn}& 85.0&& & 77.6 &&& 88.0 &&&\\
OS \cite{chen2020measuring}& &\textbf{82.3} &&& \textbf{69.7}&&& 77.4& \\
\midrule
Unsupervised\\
\midrule
Raw \cite{velickovic2019deep,mernyei2020wiki}& &&47.9&&&49.3&&&69.1&72.0\\
DeepWalk \cite{perozzi2014deepwalk}& &&67.2&& & 43.2&& & 65.3& 74.4\\
IGT + MLP (ours)  &\textbf{87.7} &\textbf{78.3}&80.3&\textbf{78.4} &67.6& \textbf{73.1 }&\textbf{88.2}&76.2&76.4& \textbf{77.2} \\
IGT + Lin.  (ours)  & 83.3 &77.6&77.4& \textbf{78.4} &\textbf{73.0}&\textbf{73.1} &88.1&74.5& 73.9&76.7\\
EP \cite{garcia2017learning}&&78.1&&&71.0&&&\textbf{79.6}&\\
GraphSage \cite{hamilton2017inductive}&82.2&&&71.4&&&87.1&&&\\
DGI \cite{velickovic2019deep}&&&82.3&&&71.8&&&76.8&75.4\\
GMNN \cite{qu2019gmnn}&&&\textbf{82.8}&&&71.5&&&\textbf{81.6}&\\
\bottomrule
\end{tabular}
\end{small}
\vspace{-20pt}
\end{center}
\end{table}

\subsection{Ablation experiments}
\label{sec:ablation}\begin{wraptable}{r}{0.5\textwidth}
\vspace{-15pt}
\caption{Linear classification accuracies (in \%) for the \textbf{predefined split} on Cora's \textit{validation set}, for various values of $N,J$.\label{table:linear_classif}}
\begin{center}
\begin{small}
\begin{sc}
\begin{tabular}{c|cccc}
\toprule
\backslashbox{J}{N} & 0 & 1 & 2 & 3\\
\midrule
1 & 62.4 & 60.8 & 62.8 & 61.4\\
2 & 68.6 & 70.6& 72.2& 68.6 \\
3 & 71.4& 72.2& \textbf{74.6}& 72.6 \\
4 & 72.4& 73.2& \textbf{74.6}& 73.0\\
\bottomrule
\end{tabular}
\end{sc}
\vspace{-10pt}
\end{small}
\end{center}
\end{wraptable}
In order to understand better the IGT representation, we propose to study the accuracy of an IGT representation on Cora's validation set, as a function of the scale $J$ and the IGT order $N$. For the sake of simplicity, we consider a linear classifier. Each linear operator is learned with 40 epochs. We picked $k=10$ and train our basic model for 200 epochs with SGD, the validation accuracies are reported in Tab. \ref{table:linear_classif}. As  $N,J$ increase, we feed the features to a linear classifier: in general, for $0\leq N\leq 2$, as $N$ grows the accuracy improves. However, the order 3 IGT decreases the accuracy: this is consistent because it conveys a noise which is amplified by the standardization.
As $J$ increases, we smooth our IGT features on more neighbor nodes, which results in better performances for a fixed order $N$, and is also consistent with the finding of Sec. \ref{sec:IGT}.

We performed a second ablation experiment in order to test the inductive bias of our architecture: we considered random  $\{W_n\}$ at evaluation time and we obtained respectively on the full split of Cora, Citeseer and Pubmed some accuracy drops of respectively 6.3\%, 5.2\% and 5.6\%. This is relatively smaller drops than DGI~\cite{velickovic2019deep} which reports for instance some drops of about $10\%$: our architecture is likely to have a better inductive bias for this task.

\vspace{-5pt}

\section{Conclusion}
\vspace{-5pt}
In this work, we introduced the IGT which is an unsupervised and semi-supervised representation for community labeling. It consists in a cascade of linear isometries and point-wise absolute values. This representation is similar to a semi-supervised GCN, yet it is trained layer-wise, without using labels, and has strong theoretical fundations. Indeed, under a SBM assumption and a large graph hypothesis, we show that an IGT representation can discriminate  communities of a graph from a single realization of this graph. It is numerically supported by a synthetic example based on Gaussian features, which shows that an IGT can estimate the community of a given node better than a GCN because it tends to alleviate the over-smoothing phenomenon. This is further supported by our numerical experiments on the standard, challenging datasets Cora, CiteSeer, PubMed and WikiCS: with shallow supervised classifiers, we obtain numerical accuracy which is competitive with semi-supervised approaches.

Future directions could be to either refine our theoretical analysis by weakening our assumptions, or to test our method on inductive tasks. Furthermore, following \cite{oyallon2020interferometric}, one can also wonder if this type of approach could be extended to more complex data, in order to obtain stronger theoretical guarantees (e.g., manifold).
Finally, future works could also be dedicated to scale our algorithms to very large graphs: this is a challenging task both in terms of memory and computations.
\paragraph{Broader impact.} Graph Neural Networks can be used in many domains, like protein prediction, or network analysis to cite only a few, and could become even more prevalent tomorrow. Our work is thus included in a large literature whose societal impact and ethical considerations are to become more and more important. We provide here a new model aiming at learning unsupervised node representation in community graphs graphs. While its most natural application lies in community detection in social science, we hope that the provided theoretical guarantees could be used in the future to provide safer and more readable models toward more various directions.

\section*{Acknowledgements}
EO, LL, NG would like to acknowledge the CIRM for its one week hospitality which was helpful to this project.  EO acknowledges NVIDIA for its GPU donation and this work was granted access to the HPC resources of IDRIS under the allocation 2020-[AD011011216R1] made by GENCI. EO, LL were partly supported by ANR-19-CHIA "SCAI" and  ANR-20-CHIA-0022-01 "VISA-DEEP". NG and PP would like to acknowledge the support of the French Ministry of Higher Education and Research, Inria, and the Hauts-de-France region; they also want to thank the Scool research group for providing a great research environment.
The authors would like to thank Mathieu Andreux, Alberto Bietti, Edouard Leurent, Nicolas Keriven, Ahmed Mazari, Aladin Virmaux for helpful comments and suggestions.

\bibliography{citations}

\begin{thebibliography}{10}

\bibitem{abbe2017community}
Emmanuel Abbe.
\newblock Community detection and stochastic block models: recent developments.
\newblock {\em The Journal of Machine Learning Research}, 18(1):6446--6531,
  2017.

\bibitem{7324385}
J.~{Andén}, V.~{Lostanlen}, and S.~{Mallat}.
\newblock Joint time-frequency scattering for audio classification.
\newblock In {\em 2015 IEEE 25th International Workshop on Machine Learning for
  Signal Processing (MLSP)}, pages 1--6, 2015.

\bibitem{greedy}
Eugene Belilovsky, Michael Eickenberg, and Edouard Oyallon.
\newblock Greedy layerwise learning can scale to imagenet.
\newblock In Kamalika Chaudhuri and Ruslan Salakhutdinov, editors, {\em
  Proceedings of the 36th International Conference on Machine Learning, {ICML}
  2019, 9-15 June 2019, Long Beach, California, {USA}}, volume~97 of {\em
  Proceedings of Machine Learning Research}, pages 583--593. {PMLR}, 2019.

\bibitem{bronstein2017geometric}
Michael~M Bronstein, Joan Bruna, Yann LeCun, Arthur Szlam, and Pierre
  Vandergheynst.
\newblock Geometric deep learning: going beyond euclidean data.
\newblock {\em IEEE Signal Processing Magazine}, 34(4):18--42, 2017.

\bibitem{bruna2013invariant}
Joan Bruna and St{\'e}phane Mallat.
\newblock Invariant scattering convolution networks.
\newblock {\em IEEE transactions on pattern analysis and machine intelligence},
  35(8):1872--1886, 2013.

\bibitem{bruna2013spectral}
Joan Bruna, Wojciech Zaremba, Arthur Szlam, and Yann LeCun.
\newblock Spectral networks and locally connected networks on graphs.
\newblock {\em arXiv preprint arXiv:1312.6203}, 2013.

\bibitem{chen2020measuring}
Deli Chen, Yankai Lin, Wei Li, Peng Li, Jie Zhou, and Xu~Sun.
\newblock Measuring and relieving the over-smoothing problem for graph neural
  networks from the topological view.
\newblock In {\em Proceedings of the AAAI Conference on Artificial
  Intelligence}, volume~34, pages 3438--3445, 2020.

\bibitem{chen2018fastgcn}
Jie Chen, Tengfei Ma, and Cao Xiao.
\newblock Fastgcn: Fast learning with graph convolutional networks via
  importance sampling.
\newblock In {\em International Conference on Learning Representations}, 2018.

\bibitem{chen2018supervised}
Zhengdao Chen, Lisha Li, and Joan Bruna.
\newblock Supervised community detection with line graph neural networks.
\newblock In {\em International Conference on Learning Representations}, 2018.

\bibitem{cohen2020power}
Vincent Cohen-Addad, Adrian Kosowski, Frederik Mallmann-Trenn, and David
  Saulpic.
\newblock On the power of louvain in the stochastic block model.
\newblock {\em Advances in Neural Information Processing Systems}, 33, 2020.

\bibitem{defferrard2016convolutional}
Micha{\"e}l Defferrard, Xavier Bresson, and Pierre Vandergheynst.
\newblock Convolutional neural networks on graphs with fast localized spectral
  filtering.
\newblock {\em arXiv preprint arXiv:1606.09375}, 2016.

\bibitem{gama2020stability}
Fernando Gama, Joan Bruna, and Alejandro Ribeiro.
\newblock Stability properties of graph neural networks.
\newblock {\em IEEE Transactions on Signal Processing}, 68:5680--5695, 2020.

\bibitem{gao2019geometric}
Feng Gao, Guy Wolf, and Matthew Hirn.
\newblock Geometric scattering for graph data analysis.
\newblock In {\em International Conference on Machine Learning}, pages
  2122--2131. PMLR, 2019.

\bibitem{gao2019graph}
Hongyang Gao and Shuiwang Ji.
\newblock Graph u-nets.
\newblock In {\em international conference on machine learning}, pages
  2083--2092. PMLR, 2019.

\bibitem{garcia2017learning}
Alberto Garc{\'\i}a-Dur{\'a}n and Mathias Niepert.
\newblock Learning graph representations with embedding propagation.
\newblock {\em arXiv preprint arXiv:1710.03059}, 2017.

\bibitem{grinsztajn2021lowrank}
Nathan Grinsztajn, Philippe Preux, and Edouard Oyallon.
\newblock Low-rank projections of {GCN}s laplacian.
\newblock In {\em ICLR 2021 Workshop on Geometrical and Topological
  Representation Learning}, 2021.

\bibitem{hamilton2017inductive}
William~L Hamilton, Rex Ying, and Jure Leskovec.
\newblock Inductive representation learning on large graphs.
\newblock In {\em Proceedings of the 31st International Conference on Neural
  Information Processing Systems}, pages 1025--1035, 2017.

\bibitem{hammond2011wavelets}
David~K Hammond, Pierre Vandergheynst, and R{\'e}mi Gribonval.
\newblock Wavelets on graphs via spectral graph theory.
\newblock {\em Applied and Computational Harmonic Analysis}, 30(2):129--150,
  2011.

\bibitem{holland1983stochastic}
Paul~W Holland, Kathryn~Blackmond Laskey, and Samuel Leinhardt.
\newblock Stochastic blockmodels: First steps.
\newblock {\em Social networks}, 5(2):109--137, 1983.

\bibitem{huang2020tackling}
Wenbing Huang, Yu~Rong, Tingyang Xu, Fuchun Sun, and Junzhou Huang.
\newblock Tackling over-smoothing for general graph convolutional networks.
\newblock {\em arXiv e-prints}, pages arXiv--2008, 2020.

\bibitem{ioannidis2020efficient}
Vassilis~N Ioannidis, Siheng Chen, and Georgios~B Giannakis.
\newblock Efficient and stable graph scattering transforms via pruning.
\newblock {\em IEEE Transactions on Pattern Analysis and Machine Intelligence},
  2020.

\bibitem{keriven2019universal}
Nicolas Keriven and Gabriel Peyr{\'e}.
\newblock Universal invariant and equivariant graph neural networks.
\newblock {\em arXiv preprint arXiv:1905.04943}, 2019.

\bibitem{keriven2020sparse}
Nicolas Keriven and Samuel Vaiter.
\newblock Sparse and smooth: improved guarantees for spectral clustering in the
  dynamic stochastic block model.
\newblock {\em arXiv preprint arXiv:2002.02892}, 2020.

\bibitem{kingma2014adam}
Diederik~P Kingma and Jimmy Ba.
\newblock Adam: A method for stochastic optimization.
\newblock {\em arXiv preprint arXiv:1412.6980}, 2014.

\bibitem{kipf2016semi}
Thomas~N. Kipf and Max Welling.
\newblock Semi-supervised classification with graph convolutional networks.
\newblock In {\em 5th International Conference on Learning Representations,
  {ICLR} 2017, Toulon, France, April 24-26, 2017, Conference Track
  Proceedings}. OpenReview.net, 2017.

\bibitem{le2018concentration}
Can~M Le, Elizaveta Levina, and Roman Vershynin.
\newblock Concentration of random graphs and application to community
  detection.
\newblock {\em arXiv preprint arXiv:1801.08724}, 2018.

\bibitem{li2018deeper}
Qimai Li, Zhichao Han, and Xiao-Ming Wu.
\newblock Deeper insights into graph convolutional networks for semi-supervised
  learning.
\newblock In {\em Proceedings of the AAAI Conference on Artificial
  Intelligence}, volume~32, 2018.

\bibitem{loukas2018spectrally}
Andreas Loukas and Pierre Vandergheynst.
\newblock Spectrally approximating large graphs with smaller graphs.
\newblock In {\em International Conference on Machine Learning}, pages
  3237--3246. PMLR, 2018.

\bibitem{mallat1999wavelet}
St{\'e}phane Mallat.
\newblock {\em A wavelet tour of signal processing}.
\newblock Elsevier, 1999.

\bibitem{mallat2012group}
St{\'e}phane Mallat.
\newblock Group invariant scattering.
\newblock {\em Communications on Pure and Applied Mathematics},
  65(10):1331--1398, 2012.

\bibitem{mallat2013deep}
St{\'e}phane Mallat and Irene Waldspurger.
\newblock Deep learning by scattering.
\newblock {\em arXiv preprint arXiv:1306.5532}, 2013.

\bibitem{mernyei2020wiki}
P{\'e}ter Mernyei and C{\u{a}}t{\u{a}}lina Cangea.
\newblock Wiki-cs: A wikipedia-based benchmark for graph neural networks.
\newblock {\em arXiv preprint arXiv:2007.02901}, 2020.

\bibitem{min2020scattering}
Yimeng Min, Frederik Wenkel, and Guy Wolf.
\newblock Scattering gcn: Overcoming oversmoothness in graph convolutional
  networks.
\newblock {\em arXiv preprint arXiv:2003.08414}, 2020.

\bibitem{oyallon2017building}
Edouard Oyallon.
\newblock Building a regular decision boundary with deep networks.
\newblock In {\em Proceedings of the IEEE Conference on Computer Vision and
  Pattern Recognition}, pages 5106--5114, 2017.

\bibitem{oyallon2020interferometric}
Edouard Oyallon.
\newblock Interferometric graph transform: a deep unsupervised graph
  representation.
\newblock In {\em International Conference on Machine Learning}, pages
  7434--7444. PMLR, 2020.

\bibitem{oyallon2018compressing}
Edouard Oyallon, Eugene Belilovsky, Sergey Zagoruyko, and Michal Valko.
\newblock Compressing the input for cnns with the first-order scattering
  transform.
\newblock In {\em Proceedings of the European Conference on Computer Vision
  (ECCV)}, pages 301--316, 2018.

\bibitem{oyallon2015deep}
Edouard Oyallon and St{\'e}phane Mallat.
\newblock Deep roto-translation scattering for object classification.
\newblock In {\em Proceedings of the IEEE Conference on Computer Vision and
  Pattern Recognition}, pages 2865--2873, 2015.

\bibitem{perozzi2014deepwalk}
Bryan Perozzi, Rami Al-Rfou, and Steven Skiena.
\newblock Deepwalk: Online learning of social representations.
\newblock In {\em Proceedings of the 20th ACM SIGKDD international conference
  on Knowledge discovery and data mining}, pages 701--710, 2014.

\bibitem{qu2019gmnn}
Meng Qu, Yoshua Bengio, and Jian Tang.
\newblock Gmnn: Graph markov neural networks.
\newblock In {\em International conference on machine learning}, pages
  5241--5250. PMLR, 2019.

\bibitem{dropedge}
Yu~Rong, Wenbing Huang, Tingyang Xu, and Junzhou Huang.
\newblock Dropedge: Towards deep graph convolutional networks on node
  classification.
\newblock In {\em 8th International Conference on Learning Representations,
  {ICLR} 2020, Addis Ababa, Ethiopia, April 26-30, 2020}. OpenReview.net, 2020.

\bibitem{ruiz2020graph}
Luana Ruiz, Zhiyang Wang, and Alejandro Ribeiro.
\newblock Graph and graphon neural network stability.
\newblock {\em arXiv preprint arXiv:2010.12529}, 2020.

\bibitem{velivckovic2017graph}
Petar Veli{\v{c}}kovi{\'c}, Guillem Cucurull, Arantxa Casanova, Adriana Romero,
  Pietro Lio, and Yoshua Bengio.
\newblock Graph attention networks.
\newblock {\em arXiv preprint arXiv:1710.10903}, 2017.

\bibitem{velickovic2019deep}
Petar Velickovic, William Fedus, William~L Hamilton, Pietro Li{\`o}, Yoshua
  Bengio, and R~Devon Hjelm.
\newblock Deep graph infomax.
\newblock In {\em ICLR (Poster)}, 2019.

\bibitem{high_dim_proba}
Roman Vershynin.
\newblock {\em Concentration of Sums of Independent Random Variables}, page
  11–37.
\newblock Cambridge Series in Statistical and Probabilistic Mathematics.
  Cambridge University Press, 2018.

\bibitem{high_dim_stat}
Martin~J. Wainwright.
\newblock {\em Basic tail and concentration bounds}, page 21–57.
\newblock Cambridge Series in Statistical and Probabilistic Mathematics.
  Cambridge University Press, 2019.

\bibitem{waldspurger2017wavelet}
Ir{\`e}ne Waldspurger.
\newblock Wavelet transform modulus: phase retrieval and scattering.
\newblock {\em Journ{\'e}es {\'e}quations aux d{\'e}riv{\'e}es partielles},
  pages 1--10, 2017.

\bibitem{yang2016revisiting}
Zhilin Yang, William Cohen, and Ruslan Salakhudinov.
\newblock Revisiting semi-supervised learning with graph embeddings.
\newblock In {\em International conference on machine learning}, pages 40--48.
  PMLR, 2016.

\end{thebibliography}
\bibliographystyle{plain}

\newpage
\section{Appendix}
\subsection{Proofs}

\setcounter{lemma}{1}

\begin{lemma}\label{proj}
If $0 \preccurlyeq A \preccurlyeq I$, then $\Vert AX\Vert^2+\Vert (I-A)X\Vert^2\leq \Vert X\Vert^2$ with equality iff $A^2=A$.
\end{lemma}
\begin{proof}We note that for any $x$, we get: 
\begin{equation}
    \Vert Ax\Vert^2+\Vert (I-A)x\Vert^2=\Vert Ax\Vert^2+\Vert x\Vert^2+\Vert Ax\Vert^2-2\langle x,Ax\rangle
\end{equation}

Yet, $\Vert Ax\Vert^2=\langle x,A^TAx\rangle \leq \langle x,Ax\rangle$ because $\text{Sp}(A)\subset[0,1]$. Thus, 
\begin{equation}
    2(\Vert Ax\Vert^2-\langle x,Ax\rangle)+\Vert x\Vert^2\leq \Vert x\Vert^2\,,
\end{equation}
with equality $\forall x$ iff $A=A^2$. It is now enough to observe that $\{A,I-A\}$ inherits from those properties.
\end{proof}

The following proposition explains that our representation is non-expansive, and thus stable to noise:

\begin{proposition}
\label{prop:lip}
For $N\in\mathbb{N}$, $S_J^NX$ is 1-Lipschitz leading to:
\begin{equation}
\Vert S_J^NX-S_J^NY\Vert\leq\Vert X-Y\Vert\,.
\end{equation}
and furthermore:
\begin{equation}
\Vert S_J^NX\Vert\leq\Vert X\Vert\,.
\end{equation}
\end{proposition}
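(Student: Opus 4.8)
The plan is to prove both inequalities by induction on the order $N$, with Lemma~\ref{proj} doing the work of recombining the low- and high-frequency channels at each layer, and with the elementary observation that the pointwise modulus composed with right multiplication by an isometry is non-expansive in Frobenius norm.

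First I would record two facts. (i) For matrices $B,C$ of the same shape, $\Vert\,|B|-|C|\,\Vert\le\Vert B-C\Vert$, since $\bigl||b|-|c|\bigr|\le|b-c|$ entrywise; and since each $W_n$ satisfies $W_n^TW_n=I$ we have $\Vert W_n\Vert\le 1$, hence $\Vert BW_n-CW_n\Vert\le\Vert B-C\Vert$. Composing, the map $U\mapsto|(I-A_J)UW_n|$ obeys $\Vert\,|(I-A_J)UW_n|-|(I-A_J)U'W_n|\,\Vert\le\Vert(I-A_J)(U-U')\Vert$. (ii) By Lemma~\ref{lemma:positive}, $A_J$ is self-adjoint with $0\preccurlyeq A_J\preccurlyeq I$, so Lemma~\ref{proj} gives $\Vert A_JZ\Vert^2+\Vert(I-A_J)Z\Vert^2\le\Vert Z\Vert^2$ for every $Z$.

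Then I would run the induction on $N$. For $N=0$, $S_J^0X=\{A_JX\}$ and $\Vert A_J(X-Y)\Vert\le\Vert X-Y\Vert$ is immediate from (ii). For the step, I note that $S_J^NX=\{A_JX\}\cup\tilde S_J^{N-1}(U_1)$, where $U_1=|(I-A_J)XW_0|$ and $\tilde S_J^{N-1}$ is the IGT of order $N-1$ built from the re-indexed isometries $W_1,\dots,W_{N-1}$ (one checks $\tilde U_k=U_{k+1}$, so the tail $\{A_JU_1,\dots,A_JU_N\}$ is exactly $\tilde S_J^{N-1}(U_1)$); crucially $\tilde S_J^{N-1}$ has the same form as $S_J^{N-1}$, so the induction hypothesis applies to it. Writing $U_1'=|(I-A_J)YW_0|$ and using the definition of the norm of a concatenation,
\begin{align}
\Vert S_J^NX-S_J^NY\Vert^2 &=\Vert A_J(X-Y)\Vert^2+\Vert\tilde S_J^{N-1}U_1-\tilde S_J^{N-1}U_1'\Vert^2\\
&\le\Vert A_J(X-Y)\Vert^2+\Vert U_1-U_1'\Vert^2\\
&\le\Vert A_J(X-Y)\Vert^2+\Vert(I-A_J)(X-Y)\Vert^2\\
&\le\Vert X-Y\Vert^2,
\end{align}
by the induction hypothesis, then fact (i), then fact (ii). For the bound $\Vert S_J^NX\Vert\le\Vert X\Vert$, I would just take $Y=0$: since $|0|=0$ and $A_J0=0$, every $U_n$ vanishes at input $0$, so $S_J^N0=0$ and the claim is the Lipschitz estimate specialized to $Y=0$.

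The only real subtlety I anticipate is the bookkeeping in the inductive step — recognizing the tail $\{A_JU_1,\dots,A_JU_N\}$ as a bona fide order-$(N-1)$ IGT with shifted isometries so that the hypothesis can be invoked — together with stating cleanly (fact (i)) that modulus-then-$W_n$ contracts the Frobenius norm. Everything else reduces to Lemma~\ref{proj} and sub-multiplicativity of the operator norm.
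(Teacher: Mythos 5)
Your proof is correct and is essentially the paper's argument in inductive form: both rest on the non-expansiveness of the modulus composed with the isometry $W_n$ together with Lemma~\ref{proj}, the paper simply unrolling your induction as a telescoping sum of the per-layer inequalities $\Vert U_{i+1}-\tilde U_{i+1}\Vert^2\leq\Vert U_i-\tilde U_i\Vert^2-\Vert A_J(U_i-\tilde U_i)\Vert^2$, and both obtain the second bound by evaluating at the zero input.
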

\begin{proof}
For two feature matrices $X,Y$, let us consider $U_{i}$ and $\tilde U_{i}$ defined from  Equation \eqref{eq:u}, with $U_{0}=X$ and $\tilde U_{0}=Y$.  Because $|W_i|$ is a contractive and from Lemma \ref{proj},
\begin{align}
    \Vert U_{i+1} - \tilde U_{i+1}\Vert^2 & \leq  \Vert U_{i} - \tilde U_{i}- A_J ( U_{i} -\tilde U_{i})\Vert^2 \\
    &\leq\Vert U_{i} - \tilde U_{i} \Vert^2 - \Vert A_J ( U_{i} - \tilde U_{i})\Vert^2
\end{align}
Hence,
\begin{align}
    \sum_i^N \Vert A_J ( U_{i} - \tilde U_{i})\Vert^2 & \leq  \Vert X-Y\Vert^2 - \Vert U_{n} - \tilde U_{n}\Vert^2 \\
    & \leq \Vert X-Y\Vert^2
\end{align}
Taking $X=0$ leads to the second part as then $SX=0$.
\end{proof}

\setcounter{lemma}{4}

This Lemma shows that a cascade of IGT linear isometries preserve sub-Gaussianity:
\begin{lemma}\label{op-subg}
If each line of $X$ is $\sigma$-sub-Gaussian, then each (independent) line of $\bar U_m$ is $C^m\sigma$-sub-Gaussian for some universal constant $C$.
\end{lemma}
\begin{proof}
Apply the Lemma \ref{subgausscontractivity} with $W=W_n$ for $n\leq m$ leads to the result.
\end{proof}
\setcounter{lemma}{5}
In order to show the previous Lemma, we need to demonstrate that the modulus of a sub-Gaussian variable is itself sub-Gaussian, which is shown below:
\begin{lemma}\label{subgausscontractivity}
If $X\in\mathbb{R}^P$ is $\sigma$-sub-Gaussian, then $|(X-\mathbb{E}X)W|$ is $C\sigma$-sub-Gaussian for some absolute value $C$.
\end{lemma}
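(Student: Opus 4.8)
The plan is to read ``$\sigma$-sub-Gaussian'' in the norm sense used throughout Section~\ref{sec:model}: a random row vector $Z$ is $\sigma$-sub-Gaussian when the scalar $\Vert Z\Vert$ is sub-Gaussian with $\Vert\,\Vert Z\Vert\,\Vert_{\psi_2}\leq c\sigma$, equivalently $\mathbb{P}(\Vert Z\Vert>t)\leq 2e^{-ct^2/\sigma^2}$. With this reading the claim splits into three dimension-free steps — centering, the linear isometry $W$, and the pointwise modulus — each tracking how $\Vert\cdot\Vert_{\psi_2}$ of the Euclidean norm changes; composing them gives the result with an absolute constant $C$ (which we may take $>1$). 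The only facts needed are that $\Vert\cdot\Vert_{\psi_2}$ is a norm, that a deterministic scalar $a$ satisfies $\Vert a\Vert_{\psi_2}\leq C_0|a|$, and that $\mathbb{E}[T]\leq C_0\Vert T\Vert_{\psi_2}$ for $T\geq 0$.

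The first two steps are routine. For centering I would write $\Vert X-\mathbb{E}X\Vert\leq\Vert X\Vert+\mathbb{E}\Vert X\Vert$ (triangle inequality and Jensen), the right-hand side being a $\sigma$-sub-Gaussian scalar plus a constant controlled by $\sigma$, so $X-\mathbb{E}X$ is $C_1\sigma$-sub-Gaussian; if the convention already centers $X$ this step is vacuous. For the isometry, $\Vert W\Vert\leq 1$ gives $\Vert(X-\mathbb{E}X)W\Vert\leq\Vert X-\mathbb{E}X\Vert$, so $Y\triangleq(X-\mathbb{E}X)W$ remains $C_1\sigma$-sub-Gaussian.

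The genuine step is the modulus, and the obstacle is that $|Y|$ has a \emph{nonzero} mean, so one cannot merely invoke a Lipschitz or contraction property of $v\mapsto|v|$. Instead I would compare $|Y|$ to $\Vert Y\Vert$ directly: the pointwise modulus preserves Euclidean length, $\Vert\,|Y|\,\Vert=\Vert Y\Vert$, and Jensen's inequality for the convex map $v\mapsto\Vert v\Vert$ gives $\Vert\mathbb{E}|Y|\,\Vert\leq\mathbb{E}\Vert\,|Y|\,\Vert=\mathbb{E}\Vert Y\Vert$, whence by the triangle inequality
\begin{equation}
\Vert\,|Y|-\mathbb{E}|Y|\,\Vert\ \leq\ \Vert\,|Y|\,\Vert+\Vert\mathbb{E}|Y|\,\Vert\ \leq\ \Vert Y\Vert+\mathbb{E}\Vert Y\Vert\,.
\end{equation}
Taking $\Vert\cdot\Vert_{\psi_2}$ of both sides and bounding the constant term via $\mathbb{E}\Vert Y\Vert\leq C_0\Vert\,\Vert Y\Vert\,\Vert_{\psi_2}\leq C_0C_1c\sigma$ yields $\Vert\,\Vert\,|Y|-\mathbb{E}|Y|\,\Vert\,\Vert_{\psi_2}\leq C\sigma$, i.e.\ $|(X-\mathbb{E}X)W|$ is $C\sigma$-sub-Gaussian. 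Since every inequality used is independent of the ambient dimension $P$ and of the law of $X$, the constant $C$ is absolute, which is exactly what Lemma~\ref{op-subg} needs in order to iterate it into the factor $C^m$.
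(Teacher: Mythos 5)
Your argument is correct under the reading you adopt, but it is a genuinely different route from the paper's. The paper works with the marginal (unit-vector) notion of sub-Gaussianity: it recenters ($X-\mathbb{E}X$ is $C'\sigma$-sub-Gaussian by the standard recentering fact), uses unitarity of $W$, and then handles the modulus by fixing a unit vector $u$ and splitting the tail event for $\sum_i u_i|X_i|$ over the $2^P$ sign patterns $\{\epsilon_i X_i\geq 0\}$, on each of which $\sum_i u_i|X_i|=\sum_i \epsilon_i u_i X_i$ is again a one-dimensional marginal of a sub-Gaussian vector, giving $\mathbb{P}(\sum_i u_i|X_i|\geq t)\leq 2^P e^{-t^2/(2C'^2\sigma^2)}$. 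You instead take "$\sigma$-sub-Gaussian" in the norm sense and exploit that the pointwise modulus preserves Euclidean length, so the whole lemma reduces to triangle/Jensen manipulations of $\Vert\cdot\Vert_{\psi_2}$; this is simpler, avoids any union bound, and is honestly dimension-free, and you even prove the centered statement $|Y|-\mathbb{E}|Y|$, which is what the recursion in Lemma \ref{op-subg} and the Hoeffding step in Prop.\ \ref{prop:concentrate_each} actually consume. The trade-off is the hypothesis: norm-sub-Gaussianity is strictly stronger than the marginal notion the paper's proof manipulates (a marginally $\sigma$-sub-Gaussian vector is in general only norm-sub-Gaussian with parameter of order $\sqrt{P}\,\sigma$), so your lemma and the paper's are not interchangeable without a $\sqrt{P}$ loss. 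Conversely, note that the paper's own sign-splitting bound $e^{P\ln 2 - t^2/(2C'^2\sigma^2)}$ does not yield an absolute constant either (the induced parameter is of order $\sigma\sqrt{P}$), so your norm-based version is arguably the cleaner way to obtain the dimension-free constant $C$ that the iteration to $C^m$ requires, provided the sub-Gaussianity assumption on the rows of $X$ is stated in the norm sense.
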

\begin{proof}
If $X$ is $\sigma$-sub-Gaussian, then $X-\mathbb{E}X$ is $C'\sigma$-subGaussian by recentering \cite{high_dim_proba}. We note that as $W$ is unitary, thus $(X-\mathbb{E}X)W$ is also $C'\sigma$-subgaussian. Then, let $u\in\mathbb{R}^p$ an unit vector. We note that:
\begin{align}
    &\mathbb{P}(\sum_{i=1}^pu_i|X_i|\geq t)\\
    &\leq \sum_{\epsilon_i\in\{-1,1\}} \mathbb{P}(\{\epsilon_iX_i\geq 0\}\cap\{\sum_iu_i|X_i|\geq t\})\\
    &=\sum_{\epsilon_i\in\{-1,1\}} \mathbb{P}(\{\epsilon_iX_i\geq 0\}\cap\{\sum_i\epsilon_iu_iX_i\geq t\})\\
    &\leq 2^p e^{-\frac{t^2}{2C'^2\sigma^2}}=e^{p\ln 2-\frac{t^2}{2C'^2\sigma^2}}\,.
\end{align}
This leads to the conclusion by sub-Gaussian characterization.
\end{proof}

\setcounter{proposition}{2}

\begin{proposition}\label{prop:boundigt}
For any $X, N, J$, we get:
\begin{equation}
  \Vert S_J^NX-\bar S^N X\Vert\leq   \sqrt{2} \sum_{m=0}^{N}\Vert  (A_J-\mathbb{E})\bar U_m\Vert \,.
\end{equation}
\end{proposition}
\begin{proof}
Here, write $V_J^m=|(X-A_JX)W_m|, V_J^0 X=X$, and define: \begin{align}
    Y_J^{n,m} X=&\{A_JV_J^n...V_J^{n-m+1}X,A_JV_J^{n-1}...V_J^{n-m+1}X\\
    &,...,A_JV_J^{n-m+1}X,A_JX\},
\end{align}

\paragraph{Lemma.} \textit{If $A_J$ is a unitary projector and each $W_i$ is unitary, then $Y_J^mX$ is 1-Lipschitz w.r.t. $X$.}
\begin{proof}
We can apply the proposition 2 with the operators $\{W_{n-m+1},...,W_n\}$, as this can be interprated as an IGT with different unitary operators.
\end{proof}

Here the idea is to take advantage of the tree structure of the IGT features. Thus when $Y_J^{n,m}$ is computing $S_J^n$ to orders limited in $[n, n-m+1]$, we chain the features with the order $n-m$ to recover $Y_J^{n,m-1}$. To do so, we introduce for $m\geq 1$ :
\begin{align}
\Delta_J^{n,m} X&=\{Y_J^{m}V_J^{n-m}X-Y_J^{m}\bar V^{n-m}X,A_JX-\mathbb{E}X\}\\
&=\{-Y_J^{m}\bar V^{n-m}X,-\mathbb{E}X\}+Y_J^{m-1}X\,,\label{eq:delta}
\end{align}
where $\bar V^n X=|(X-\mathbb{E}X)W_n|$, $\bar V^0X=X$ and $\{x,y\}$ stands for a concatenation. This implies that $\Delta_J^{n,m} X$ is a $(m+1)$-uplet (and the symbol $+$  in (\ref{eq:delta}) is thus a couple addition and the convention is that left corresponds to highest order of the couple), and $\Delta_J^{0,0}X=A_JX-\mathbb{E}X=-\mathbb{E}X+S_J^0X$.\\
The sum over $m$-uplet with different size is done such that the left elements are summed first. We then notice that:
\begin{equation}
    \sum_{m=0}^{N} \Delta_J^{N,m}\bar V^{N-m-1}...\bar V_1X=S_J^N X-\bar S^N X\,
\end{equation}
because each term of the couple is a telescopic sum (again here, we chain the features with orders in $[n-m-1, 1]$ to obtain the telescopy).\\
As $Y_J^{n,m}$ is 1-Lipschitz w.r.t. $X$ and since a modulus is non expansive, $\Vert |(X-A_JX)W_n|-|(X-\mathbb{E}X)W_n|\Vert\leq \Vert \mathbb{E}X-A_JX\Vert$, combining those ingredients we get:
\begin{align}
    \Vert \Delta_J^{n,m} X\Vert^2 = & \Vert A_JX-\mathbb{E}X\Vert^2+\\
    & \Vert Y_J^{m-1}|(X-A_JX)W_n|-Y_J^{m-1}|(X-\mathbb{E}X)W_n\Vert^2\\
    & \leq 2 \Vert A_J X-\mathbb{E}X\Vert^2\,.
\end{align}

Then, we further apply the triangular inequality to get the desired result.
\end{proof}
The following proposition shows that in case of exact ergodicity, the IGT and Expected-IGT representations have bounded moments of order 2:
\begin{proposition}
Assume that $\mathbb{E}[A_JX]=\mathbb{E}X$, and that $X$ has variance $\sigma^2=\mathbb{E}\Vert X \Vert^2- \Vert \mathbb{E}X \Vert^2$, then:
\begin{align}
    \mathbb{E}[\Vert S_J^NX-\bar S^NX\Vert^2 ] & \leq  2 \sigma^2\,.
\end{align}
\begin{proof}
\begin{align}
   &  \mathbb{E}  [\Vert S_J^NX-\bar S^NX \Vert^2 ]=\mathbb{E}[\Vert S_J^NX\Vert^2]+\mathbb{E}[\Vert \bar S^NX\Vert^2]\\
    &-2\sum_{m=0}^N \mathbb{E}[\text{Tr}( (A_J U_m)^T \mathbb{E}[\bar U_m] )]\\
    &\leq  2(\mathbb{E}\Vert X\Vert^2-\sum_{m=0}^N \mathbb{E}[\text{Tr}( (A_J U_m)^T \mathbb{E}[\bar U_m] )])
\end{align}
The inequality follows from Prop. \ref{proj} and Prop. \ref{prop:lip}. Now, from Lemma \ref{lemma:positive}, $A_J, U_m, \bar U_m$ have positive coefficients, thus we get: $2\sum_{m=1}^N \mathbb{E}[\text{Tr}( (A_J U_m)^T \mathbb{E}[\bar U_m] )] \geq 0$. The first term allows to conclude as $\bar U_0=U_0=X$.
\end{proof}
\end{proposition}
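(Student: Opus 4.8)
The plan is to expand the squared distance block-by-block along the concatenation, split it into a ``diagonal'' part governed by non-expansiveness and a ``cross'' part governed by positivity, and handle the two degrees-$m=0$ terms separately because that is where the hypothesis $\mathbb{E}[A_JX]=\mathbb{E}X$ has to be used.

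First I would invoke the definition of the norm of a concatenation to get
\[
\Vert S_J^NX-\bar S^NX\Vert^2=\sum_{m=0}^N\Vert A_JU_m-\mathbb{E}\bar U_m\Vert^2,
\]
and then apply the polarization identity $\Vert P-Q\Vert^2=\Vert P\Vert^2+\Vert Q\Vert^2-2\,\text{Tr}(P^TQ)$ to each block, so that taking expectations yields
\[
\mathbb{E}[\Vert S_J^NX-\bar S^NX\Vert^2]=\mathbb{E}[\Vert S_J^NX\Vert^2]+\mathbb{E}[\Vert\bar S^NX\Vert^2]-2\sum_{m=0}^N\mathbb{E}\big[\text{Tr}((A_JU_m)^T\mathbb{E}\bar U_m)\big].
\]
For the first two terms I would use the stability results already proven: Prop.~\ref{prop:lip} (together with Lemma~\ref{proj} applied recursively to the splitting $\{A_J,I-A_J\}$ and the unitarity of the $W_m$) gives $\mathbb{E}[\Vert S_J^NX\Vert^2]\le\mathbb{E}[\Vert X\Vert^2]$, and Prop.~\ref{prop:lip2} gives $\mathbb{E}[\Vert\bar S^NX\Vert^2]=\sum_m\Vert\mathbb{E}\bar U_m\Vert^2\le\mathbb{E}[\Vert X\Vert^2]$. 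Hence the diagonal part contributes at most $2\,\mathbb{E}[\Vert X\Vert^2]$.

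It then remains to show $\sum_{m=0}^N\mathbb{E}[\text{Tr}((A_JU_m)^T\mathbb{E}\bar U_m)]\ge\Vert\mathbb{E}X\Vert^2$, which would close the estimate to $2\mathbb{E}[\Vert X\Vert^2]-2\Vert\mathbb{E}X\Vert^2=2\sigma^2$. For $m\ge1$ each summand is nonnegative and can simply be dropped: $U_m$ and $\bar U_m$ are outputs of a pointwise modulus, hence entrywise nonnegative, while $A_J$ has nonnegative entries by Lemma~\ref{lemma:positive}, so $\text{Tr}((A_JU_m)^T\mathbb{E}\bar U_m)\ge0$. The $m=0$ term cannot be treated this way since $U_0=\bar U_0=X$ may be signed; instead I would compute it exactly, exploiting that it is linear in $A_JX$: $\mathbb{E}[\text{Tr}((A_JX)^T\mathbb{E}X)]=\text{Tr}(\mathbb{E}[A_JX]^T\mathbb{E}X)=\text{Tr}((\mathbb{E}X)^T\mathbb{E}X)=\Vert\mathbb{E}X\Vert^2$, where the middle equality is precisely the ergodicity assumption. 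Combining the three bounds finishes the argument.

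The only genuinely delicate point — and the reason the proof is not a one-liner — is that one may not discard the whole cross term: its $m=0$ block is the sole place where entrywise positivity is unavailable, so it must be evaluated precisely, and it is exactly there that $\mathbb{E}[A_JX]=\mathbb{E}X$ enters and produces the improvement from $2\mathbb{E}[\Vert X\Vert^2]$ to $2\sigma^2$.
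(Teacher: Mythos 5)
Your proposal is correct and follows essentially the same route as the paper's proof: expand the concatenated squared norm, bound the two diagonal terms by $2\,\mathbb{E}[\Vert X\Vert^2]$ via the non-expansiveness results, discard the cross terms for $m\geq 1$ by entrywise positivity of $A_J$, $U_m$, $\mathbb{E}\bar U_m$, and evaluate the $m=0$ cross term exactly with the ergodicity hypothesis $\mathbb{E}[A_JX]=\mathbb{E}X$ to gain the $-2\Vert\mathbb{E}X\Vert^2$. Your explicit computation $\mathbb{E}[\mathrm{Tr}((A_JX)^T\mathbb{E}X)]=\Vert\mathbb{E}X\Vert^2$ is precisely what the paper compresses into ``the first term allows to conclude as $\bar U_0=U_0=X$.''
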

\setcounter{proposition}{5}
\begin{proposition}
\label{prop:concentrate_each}
Assume that each line of $X$ is $\sigma$-sub-Gaussian.
There exists $C>1,K>0,C'>0$ such that $\forall m,\delta>0$  with probability $1-8P\delta$, we have: \begin{align}
   \Vert \mathbb{E}&[\mathbfcal{A}]_{\text{norm}}\bar U_m-\mathbb{E}[\bar U_m]\Vert \\
   & \leq K\sigma C^m\sqrt{\ln \frac{1}{\delta}}+\tau\sqrt{n}C'\Vert \mu^2_m-\mu^1_m\Vert\,.
\end{align}
\end{proposition}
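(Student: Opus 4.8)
The plan is to exploit the fact that both $\mathbb{E}[\mathbfcal{A}]_{\text{norm}}$ and $\mathbb{E}[\bar U_m]$ are constant on each of the two community blocks, together with the fact (Lemma~\ref{op-subg}) that the rows of $\bar U_m$ are independent and $C^m\sigma$-sub-Gaussian. Fix a community $c$, call the other one $c'$, and take a row index $i$ in $c$. Writing $\alpha_c\triangleq\tfrac{p}{n_cp+n_{c'}q}$ and $\beta_c\triangleq\tfrac{q}{n_cp+n_{c'}q}$ for the two distinct entries of the $i$-th row of $\mathbb{E}[\mathbfcal{A}]_{\text{norm}}$, and using $n_c\alpha_c+n_{c'}\beta_c=1$, I would first establish the algebraic identity
\begin{align}
\big(\mathbb{E}[\mathbfcal{A}]_{\text{norm}}\bar U_m\big)_i-\mu^c_m
&=\alpha_c\sum_{j\in C_c}\big((\bar U_m)_j-\mu^c_m\big)
+\beta_c\sum_{j\in C_{c'}}\big((\bar U_m)_j-\mu^{c'}_m\big)\\
&\quad+n_{c'}\beta_c\big(\mu^{c'}_m-\mu^c_m\big).
\end{align}
Since the right-hand side depends on $i$ only through $c$, all $n_c$ rows of block $c$ are identical, so $\Vert\mathbb{E}[\mathbfcal{A}]_{\text{norm}}\bar U_m-\mathbb{E}[\bar U_m]\Vert^2$ is $n_1$ times the squared $\ell^2$-norm of the community-$1$ vector plus $n_2$ times that of the community-$2$ vector; it therefore suffices to bound, for each $c$, the product of $\sqrt{n_c}$ with the $\ell^2$-norm of the displayed expression.

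I would then treat the three summands separately, using the regime $q=\tau p$ with $\tau\sim 1/\sqrt n$ and $n_1\sim n_2\sim n$. For the deterministic term, $n_{c'}\beta_c=\tfrac{n_{c'}\tau}{n_c+n_{c'}\tau}=\mathcal{O}(\tau)$, so after the $\sqrt{n_c}\sim\sqrt n$ block factor it contributes at most $\mathcal{O}(\sqrt n\,\tau)\Vert\mu^2_m-\mu^1_m\Vert$, the second term of the claim. For the within-community stochastic term, $\alpha_c=\tfrac1{n_c}(1-\mathcal{O}(\tau))$, so up to a harmless $(1+\mathcal{O}(\tau))$ factor this is the empirical mean of $n_c$ independent centered $C^m\sigma$-sub-Gaussian rows; by independence each of its $P$ coordinates is $\tfrac{C^m\sigma}{\sqrt{n_c}}$-sub-Gaussian, so a scalar sub-Gaussian tail bound with a union bound over the $P$ coordinates gives $\ell^2$-norm $\mathcal{O}\!\big(\tfrac{C^m\sigma}{\sqrt{n_c}}\sqrt{\ln\tfrac1\delta}\big)$ with probability at least $1-\mathcal{O}(P\delta)$, and multiplying by $\sqrt{n_c}$ cancels the denominator to leave the first term (the $P$-dependence of the constant being absorbed into $K$). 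For the cross-community stochastic term, the same computation applies but each coordinate is only $\beta_c\sqrt{n_{c'}}C^m\sigma=\mathcal{O}\!\big(\tfrac{\tau}{\sqrt n}C^m\sigma\big)$-sub-Gaussian, so even after the $\sqrt{n_c}$ factor it is $\mathcal{O}(\tau C^m\sigma\sqrt{\ln\tfrac1\delta})$ and is absorbed into the first term. Summing over the two blocks via $\sqrt{a^2+b^2}\le a+b$, collecting the union-bound events (the $P$ coordinates of the sub-Gaussian sums appearing, counted two per community, with a factor $2$ for the two-sided tail, giving $8P\delta$) and renaming constants yields the statement for suitable $C>1$, $K>0$, $C'>0$.

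The routine parts are the identity above and the bookkeeping of $\alpha_c,\beta_c$. The one point needing genuine care is verifying that the within-community weight $\alpha_c$ equals $1/n_c$ up to a multiplicative $(1+\mathcal{O}(\tau))$, so that the concentration of a true \emph{average} of sub-Gaussian vectors (scaling like $1/\sqrt{n_c}$) can be invoked and the $\sqrt{n_c}$ block multiplicity compensates it exactly, while simultaneously the cross-community stochastic contribution and the deterministic residual $n_{c'}\beta_c(\mu^{c'}_m-\mu^c_m)$ are each smaller by a factor $\tau$ — this is precisely where $q=\tau p$ and $\tau\sim 1/\sqrt n$ enter, and it is what produces the benign $\sqrt n\,\tau$ bias in the final bound.
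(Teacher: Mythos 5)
Your proposal is correct and follows essentially the same route as the paper's proof: the same row-wise block identity splitting $\big(\mathbb{E}[\mathbfcal{A}]_{\text{norm}}\bar U_m\big)_i-\mu^c_m$ into a within-community centered sum, a cross-community centered sum, and the bias $n_{c'}\beta_c(\mu^{c'}_m-\mu^c_m)$, followed by Lemma~\ref{op-subg} plus a Hoeffding-type sub-Gaussian bound with a union over the $P$ coordinates (giving the $1-8P\delta$ event), the $\sqrt{n_c}$ block multiplicity, and the regime $q=\tau p$, $\tau\sim 1/\sqrt n$ to land on the two stated terms. The only differences are presentational (you bound the two stochastic pieces separately coordinate-wise, while the paper bounds the two vector sums and absorbs them jointly via the normalization factor), so no further comment is needed.
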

\begin{proof}
Here, for the sake of simplicity, $X_p$ corresponds to the $p$-th row of $X$. We write $\mu_m^j$ the expected-IGT of the node distribution of community $j$. Here, we have for $t\leq n_1$ (note that the right does not depend on $t$):
\begin{align}
   &[\mathbb{E}[\mathbfcal{A}]_{\text{norm}}\bar U_m]_t-\mathbb{E}[\bar U_m]_t\\
   &=\frac{1}{n_1p+n_2q}\big(p\sum_{i=1}^{n_1}(\bar U^i_m-\mu_m^1)+q\sum_{i=n_1+1}^{n_1+n_2}(\bar U^i_m-\mu_m^2)\big)\\
   &+\frac{n_2q}{n_1p+n_2q}(\mu^2_m-\mu^1_m)\,.
\end{align}
Now, we note that from Lemma \ref{op-subg}, $\{\bar U^i_m\}_{i\leq n}$ is a family of $\sigma C^m$-sub-Gaussian independant r.v. From Hoeffding lemma \cite{high_dim_stat}, we obtain that for any $\delta$, we have with probability $1-4P\delta$:
\begin{align*}
\Vert\sum_{i=1}^{n_1}(\bar U^i_m-\mu^1_m)\Vert\leq \sqrt{n_1}\sqrt{2}\sigma C^m\sqrt{\ln\frac{1}{\delta}} \,\,\,\text{  and  }\\
\Vert\sum_{i=n_1+1}^{n_1+n_2}(\bar U^i_m-\mu^2_m)\Vert\leq \sqrt{n_2}\sqrt{2}\sigma C^m\sqrt{\ln\frac{1}{\delta}}\,.
\end{align*}
As if $n$ is large, by hypothesis $(\frac{p\sqrt{2n_1}+q\sqrt{2n_2}}{n_1p+n_2q})\sqrt{n}=\mathcal{O}(1)$. We perform the same for $n_1<t\leq n_1+n_2$ We then sum along $n$ and use that $\frac{n_1}{n_2+\tau n_1}+\frac{n_2}{n_1+\tau n_2}=\mathcal{O}(1)$ and $\sqrt{a+b}\leq\sqrt{a}+\sqrt{b}$.\end{proof}

\subsection{Dataset statistics}

\begin{table}[th]
\label{dataset_statistics}
\begin{center}
\caption{Dataset Statistics}
\begin{tabular}{ccccccc}
\bf Datasets & \bf Nodes & \bf Edges & \bf Classes & \bf Features & \textit{full} \bf Train/Val/Test & \textit{semi} \bf Train/Val/Test
\\ \hline \\
Cora & 2,708 & 5,429 & 7 & 1,433 & 1,208/500/1,000 & 140/500/1,000 \\
Citeseer & 3,327 & 4,732 & 6 & 3,703 & 1,812/500/1,000 & 120/500/1,000 \\
Pubmed & 19,717 & 44,338 & 3 & 500 & 18,217/500/1,000 & 60/500/1,000 \\
WikiCS & 11,701 & 216,123 & 10 & 300  &\multicolumn{2}{c}{20 canonical train/valid/test splits}\\
\end{tabular}
\end{center}
\end{table}
\newpage

\begin{table}[t]
\caption{Standard deviations of classification accuracies for each splits of Cora, Citeseer, Pubmed as well as WikiCS.}
\label{table:semi}
\begin{center}
\begin{small}
\begin{tabular}{lcccccccccc}
\toprule
Method/Dataset & \multicolumn{3}{c}{Cora}&\multicolumn{3}{c}{Citeseer}&\multicolumn{3}{c}{Pubmed}&WikiCS\\

& Full& Rand& Pred& Full& Rand& Pred& Full& Rand& Pred\\
\midrule
Unsupervised\\
\midrule

IGT + MLP (ours)  & 0.5 & 0.8 & 0.9 & 0.4 & 0.8 & 0.7 & 0.6 & 0.5 & 0.3 & 0.5 \\
IGT + Lin.  (ours)  & 0.1 & 0.8 & 0.2 & 0.3 & 0.7 & 0.5 & 0.1 & 0.2 & 0.1 & 0.5\\

\bottomrule
\end{tabular}
\end{small}
\end{center}

\end{table}

\subsection{Code and Data availability}
All the code is accessible in the folder given in the supplementary materials.

\subsection{Training time}
We informally noticed that the training of our isometry layers converges quickly. During the supervised training, no multiplication with the adjacency matrix is involved, which can speed up the training compared to GCNs.
We further report wall-clock training time in seconds until convergence for our method and for GCNs. For the latter, we used an implementation provided by the authors and trained on the same hardware (with GPU) as our IGT model.
For Cora, Citeseer and PubMed respectively, the training time of our IGT layers was 0.45s, 0.57s and 4.88s, whereas the training time of the classification head was 0.25s, 0.24s and 0.94s.
By way of comparison, GCN training time was 0.86s, 1.82s, and 1.12s. We would like to highlight that our code works on limited resources and we used a total of 10 GPU hours  for developing and benchmarking this project.
\end{document}